\definecolor{poliblue3}{RGB}{25,43,67}
\newtheorem{theorem}{Theorem}[section]
\newtheorem{lemma}[theorem]{Lemma}
\newtheorem{corollary}[theorem]{Corollary}
\newtheorem{proposition}[theorem]{Proposition}
\theoremstyle{remark}
\newtheorem{remark}[theorem]{Remark}
\newcommand{\cmark}{\textcolor{green}{\ding{51}}} 
\newcommand{\xmark}{\textcolor{red}{\ding{55}}} 
\begin{document}

%

%

\twocolumn[

\aistatstitle{Achieving \(\widetilde{\mathcal{O}}(\sqrt{T})\) Regret in Average-Reward POMDPs with Known Observation Models}

\aistatsauthor{ Alessio Russo \And Alberto Maria Metelli \And  Marcello Restelli }

\aistatsaddress{ Politecnico di Milano \And Politecnico di Milano \And Politecnico di Milano  } ]

\begin{abstract}
We tackle average-reward infinite-horizon POMDPs with an unknown transition model but a known observation model, a setting that has been previously addressed in two limiting ways: (i) frequentist methods relying on suboptimal stochastic policies having a minimum probability of choosing each action, and (ii) Bayesian approaches employing the optimal policy class but requiring strong assumptions about the consistency of employed estimators. Our work removes these limitations by proving convenient estimation guarantees for the transition model and introducing an optimistic algorithm that leverages the optimal class of deterministic belief-based policies. We introduce modifications to existing estimation techniques providing theoretical guarantees separately for each estimated action transition matrix. Unlike existing estimation methods that are unable to use samples from different policies, we present a novel and simple estimator that overcomes this barrier. This new data-efficient technique, combined with the proposed \emph{Action-wise OAS-UCRL} algorithm and a tighter theoretical analysis, leads to the first approach enjoying a regret guarantee of order \(\mathcal{O}(\sqrt{T \,\log T})\) when compared against the optimal policy, thus improving over state of the art techniques. Finally, theoretical results are validated through numerical simulations showing the efficacy of our method against baseline methods.
\end{abstract}

\section{INTRODUCTION}

Reinforcement Learning (RL)~\citep{sutton1998} tackles the sequential decision-making problem of an agent interacting with an unknown or partially known environment with the goal of maximizing the long-term sum of rewards. The RL agent should trade-off between \emph{exploring} the environment to learn its structure and \emph{exploiting} the estimates to compute a policy that maximizes the reward. This problem has been successfully addressed in past works under the MDP formulation~\citep{bartlett2009regal, jaksch2010near, zanette2019tighter}. MDPs assume full observability of the state space but this assumption is often violated in many real-world scenarios such as robotics or finance, where only a partial observation of the environment is available. In this case, it is more appropriate to model the problem using Partially-Observable MDPs~\citep{sondik1978optimal}.

Further challenges emerge when tackling POMDPs since (i) the estimation problem turns into identifying the latent parameters of the model, (ii) the planning problem is known to be computationally intractable even for known models~\citep{mossel2005learning}.\\
Different approaches have been devised to tackle the estimation problem~\citep{guo2016pac, xiong2022sublinear}. In the finite horizon setting,~\citet{jin2020sample} present a sample-efficient algorithm for the undercomplete POMDP setting, where the number of observations is larger than the number of states, while in the average-reward setting,~\citet{Azizzadenesheli2016Reinforcement} provide guarantees on the regret by introducing a model-based approach that leverages \emph{spectral decomposition}~\citep{Anandkumar2014Tensor} techniques to estimate the latent model while employing memoryless policies.

This paper considers an average-reward POMDP setting with a known observation model but an unknown transition model that the agent needs to learn. The assumption of having partial knowledge of the environment has been variously addressed in the past, both in the bandit setting~\citep{MaillardLatent2014, russo2024switching} and in the POMDP setting~\citep{jahromi2022online, russo2024efficient}. Relying on the knowledge of the observation model can be of interest in many real-world scenarios. Sometimes this knowledge is available from scratch. For example, in robotics, the properties of the sensors of a robot are available beforehand, while in other scenarios the observation model can be learned offline from simulation~\citep{thanan2021resource, alonso2017virtual} or from historical data, while the transition model of the environment needs to be learned in an online fashion.
This assumption is also reasonable in the case of non-stationary environments where the change is local and associated only with the transition model while the previous knowledge of the observation model can be retained. Similar motivations hold as well for problems dealing with \emph{Transfer Learning}. 

\paragraph{Contribution} We report here the main contributions of our work:
\begin{itemize}
    \item We present the \emph{Action-wise} OAS procedure that estimates the transition model under the assumption of knowing the observation model. We show that this technique also handles samples collected under different policies.
    \item Under some technical assumptions, we prove estimation guarantees separately for the transition matrix associated with each action. 
    \item We introduce the \emph{Action-wise} OAS-UCRL algorithm, an optimistic approach that makes use of the presented estimation method and employs the optimal class of deterministic belief-based policies. 
    \item By using new theoretical results, we prove that this algorithm enjoys a \(\mathcal{O}(\sqrt{T \,\log T})\) regret guarantee when compared against the optimal POMDP policy, thus improving over state-of-the-art results.
\end{itemize}

\section{RELATED WORKS}\label{sec:relatedWorks}
In recent years, significant progress has been made in understanding the fully observable RL setting. Some studies focused on the episodic scenario with finite horizon~\citep{jin2018qlearning, azar2017MinimaxRB}, while others have examined the 
non-episodic undiscounted setting~\citep{jaksch2010near, ortner2012e, bartlett2009regal}. In contrast, Partially Observable MDPs have received relatively less attention, also due to the inherent difficulty of this setting both from the learning and the planning perspective.

\paragraph{Learning in POMDPs} A POMDP instance is considered hard when the observation model does not contain enough information to allow learning the underlying transition model. These pathological cases can be avoided by assuming that the observation model is full-rank or, stated differently, when the minimum singular value \(\alpha\) of the observation model is positive, namely \(\alpha > 0\). Instances belonging to this class are called \textbf{\(\bm{\alpha}\)-weakly revealing} and can be learned efficiently.

Within this class of tractable problems, some works focused on the \textbf{episodic} setting such as~\citet{jin2020sample} and~\citet{liu2022partially}. The first one considers the undercomplete case, with the number of states less or equal to the number of observations \((S \le 0)\): the authors do not focus on regret but introduce an algorithm with optimal sample complexity (\(1/\epsilon^2\)) for learning an \(\epsilon\)-optimal policy. Differently,~\citet{liu2022partially} provide an approach based on a \emph{Maximum Likelihood Estimation} technique and show regret guarantees of order \(\widetilde{\mathcal{O}}(\sqrt{T})\) for the undercomplete case. They also consider the more difficult overcomplete setting \((S > O)\) for which they prove a guarantee of order \(\widetilde{\mathcal{O}}(T^{2/3})\). In the subsequent work of~\citet{chen2023lower}, these regret guarantees were shown to be tight with respect to the horizon \(T\) in both the undercomplete and overcomplete settings.

A second stream of works focuses on the \textbf{non-episodic average reward} setting. In particular,~\citet{Azizzadenesheli2016Reinforcement} and~\citet{xiong2022sublinear} consider the standard POMDP setting where neither the observation nor the transition model are known and they both employ \emph{Spectral Decomposition} (SD) techniques to retrieve the model parameters.~\citet{Azizzadenesheli2016Reinforcement} consider the class of memoryless stochastic policies\footnote{In a memoryless policy, the next action is chosen only based on the last received observation.} with each action having a minimum probability \(\iota > 0\) of being chosen: they show a regret guarantee of order \(\widetilde{\mathcal{O}}(\sqrt{T})\) under this class of stochastic policies. Concerning the work of~\citet{xiong2022sublinear}, they present the SEEU algorithm which alternates between purely exploratory and purely exploitative phases. Their algorithm reaches a \(\mathcal{O}(T^{2/3})\) regret when compared against the optimal class of belief-based policies.

On the other hand, both~\citet{jahromi2022online} and~\citet{russo2024efficient} assume, as in our setting, to have \textbf{knowledge of the observation model}.~\citet{jahromi2022online} develop the PSRL-POMDP algorithm, a Bayesian approach that jointly learns the model parameters and exploits the available knowledge. They prove a Bayesian regret of order \(\mathcal{O}(T^{2/3})\) when compared against the optimal policy, however they do not provide guarantees for the employed model estimator: the obtained result on the regret only holds by assuming to have a consistent estimator. Differently,~\citet{russo2024efficient} develop the OAS estimation approach and prove consistency for it. They consider the powerful class of belief-based policies but focus on those having a minimum probability \(\iota > 0\) of choosing each action. Under these conditions, they reach a regret order of \(\widetilde{\mathcal{O}}(\sqrt{T})\) when compared against this class of stochastic policies.\\
We refer the reader to Appendix~\ref{appendix:comparison} and to Table~\ref{tab:comparisonRL} for a detailed comparison with the works mentioned above. It characterizes the different works in terms of adopted assumptions, applied estimation procedures, and algorithm properties.

Stemming from the OAS estimation procedure introduced in~\citet{russo2024efficient}, we adopt similar ideas and present a new estimation approach that overcomes the limiting assumption on the minimum action probability. This aspect allows us to successfully compare the newly devised regret minimization algorithm against the optimal class of POMDP policies.\\

\section{PRELIMINARIES}\label{sec:preliminaries}
In this section, we present the adopted notation and the necessary background that will be useful to understand what will follow.

\paragraph{Partially Observable MDP} A Partially Observable Markov Decision Process (POMDP)~\citep{astrom1965optimal} is defined by a tuple \(\mathcal{Q}\coloneq(\mathcal{S}, \mathcal{A}, \mathcal{O}, \mathbb{T}, \mathbb{O}, \bm{\nu}, r)\) with \(\mathcal{S}\) being a finite state space (\(|\mathcal{S}|\eqcolon S\)), \(\mathcal{A}\) a finite action space (\(|\mathcal{A}| \eqcolon A\)) and \(\mathcal{O}\) a finite observation space \((|\mathcal{O}|\eqcolon O)\). \(\mathbb{T}=\{\mathbb{T}_a\}_{a \in \mathcal{A}}\) denotes a collection of transition matrices \(\mathbb{T}_a\) with size \(S\times S\). Each transition matrix is such that \(\mathbb{T}_a(\cdot|s) \in \Delta(\mathcal{S})\)\footnote{We use \(\Delta(\mathcal{X})\) to denote the simplex over a finite space \(\mathcal{X}\).} defines the distribution of the next state when the agent takes action \(a\) in state \(s\). \(\mathbb{O}=\{\mathbb{O}_a\}_{a \in \mathcal{A}}\) denotes the set of observation matrices of size \(O \times S\) such that \(\mathbb{O}_a(\cdot|s) \in \Delta(\mathcal{O})\) represents the distribution over observations when the agent takes action \(a\) conditioned on the hidden state \(s\). \(\bm{\nu} \in \Delta(\mathcal{S})\) denotes the distribution over the initial state, while \(r: \mathcal{O} \rightarrow [0,1]\) is the known reward function mapping each observation to a finite reward such that \(r(o)\) is the reward received when the agents observe \(o \in \mathcal{O}\).
In a POMDP, states are hidden and the agent can only see its own actions and the received observations. The interaction proceeds as follows. At each step \(t\), the agent chooses an action \(a_t\) and gets an observation \(o_t\) which is conditioned on the hidden state \(s_t\) according to the law \(\mathbb{O}_{a_t}(\cdot|s_t)\). Finally, the action taken will make the POMDP transition into a new hidden state \(s_{t+1}\) according to the distribution \(\mathbb{T}_{a_t}(\cdot|s_t)\).

\paragraph{Policies in POMDPs} A policy \(\pi:= (\pi_t)_{t=0}^\infty\) defines the set of decision rules characterizing the interaction of an agent with the environment. Deterministic policies map a history \(h \in \mathcal{H}_t\) into actions \(\pi_t: \mathcal{H}_t \rightarrow \mathcal{A}\) such that \(a=\pi_t(h)\) defines the action chosen when history \(h \in \mathcal{H}_t\) is observed. When interacting with a POMDP the history can be defined as \(h:=(a_j,o_j)_{j=0}^t \in \mathcal{H}_t\). We denote by \(\mathcal{H}\) the space of histories of arbitrary length.

\paragraph{From POMDP to Belief MDP} In a POMDP, it is always possible to build a belief vector \(b_t \in \mathcal{B}\) (with \(\mathcal{B} \coloneq \Delta(\mathcal{S})\)) by using the knowledge of the transition and observation matrices \(\mathbb{T}\) and \(\mathbb{O}\) respectively, and from the observed history at time \(t-1\), \(h_{t-1}:= (a_j, o_j)_{j=0}^{t-1}\). Thus, at time \(t\) it holds \(b_t(s):=P(S_t=s|h_{t-1})\). From the agent's point of view, a POMDP can be seen as a belief MDP~\citep{krish2016partially}. The update rule of the belief \(b_{t+1}\) is determined by Bayes's theorem as follows:
\begin{align}\label{eq:beliefUpdate}
	b_{t+1}(s) = \frac{\sum_{s' \in \mathcal{S}}b_t(s') \mathbb{O}_{a_t}(O_t=o_t|S_t=s') \mathbb{T}_{a_t}(s|s')}{\sum_{s'' \in \mathcal{S}} \mathbb{O}_{a_t}(O_t=o_t|S_t=s'') b_t(s'')}.
\end{align}
Given an initial belief \(b\), the average reward of the infinite-horizon belief MDP  is defined as: \(\rho_b^\pi:=\lim \sup_{T \to \infty} (1/T)\mathbb{E}[\sum_{t=0}^{T-1}r(o_t)|b_0=b)]\). If the underlying MDP is weakly communicating,~\citet{bertsekas1995dynamic} showed that \(\rho^*:=\sup_\pi \rho_b^\pi\) is independent of the initial belief \(b\) and the following Bellman optimality equation can be defined:
\begin{align}\label{eq:Bellman}
	\rho^* + v(b) = \underset{a \in \mathcal{A}}{\max}\; \left[g(b,a) + \int_{\mathcal{B}}P(\,db'|b, a) v(b') \right],
\end{align}
with \(g(b,a)\) representing the expected instantaneous reward obtained when taking action \(a\) under belief \(b\) such that \(g(b,a) = \sum_{s \in \mathcal{S}} \sum_{o \in \mathcal{O}} b(s) \mathbb{O}_a(o|s)r(o)\). Ultimately, \(v: \mathcal{B} \to \mathbb{R}\) defines the bias function quantifying the cumulative deviation of rewards with respect to \(\rho^*\) when starting from a belief \(b\)~\citep{Mahadevan1996average}.

\section{PROBLEM FORMULATION}\label{sec:problemFormulation}
We tackle the average-reward infinite-horizon POMDP setting described in Section~\ref{sec:preliminaries}. 
In particular, we focus on the class of \emph{undercomplete} POMDPs~\citep{jin2020sample}, where the number of states is less than or equal to the number of observations \((S \le O)\). As in previous works~\citep{jahromi2022online, russo2024efficient}, we assume knowledge of the observation model \(\mathbb{O}=\{\mathbb{O}_a\}_{a \in \mathcal{A}}\), while we learn the transition model \(\mathbb{T}=\{\mathbb{T}_a\}_{a \in \mathcal{A}}\).\\
We consider the class of belief-based policies mapping the space \(\mathcal{B}\) of belief over the states to actions, such that \(\pi: \mathcal{B} \to \mathcal{A}\). We denote by \(\mathcal{P}\) the set of such belief-based policies.\\
In the following, we introduce the assumptions that we enforce for our setting.
\begin{restatable}{assumption}{assMinElem}\label{ass:minElem}
	\textup{(\textbf{Minimum Value Transition Matrices})} The smallest value in the transition matrices is \(\epsilon := \underset{s,s' \in \mathcal{S}\; a \in \mathcal{A}}{\min}\mathbb{T}_a(s'|s) > 0\).
\end{restatable}
Despite seeming a strong assumption, this one-step reachability condition is widely used in works addressing partial observability~\citep{zhou2021regime, russo2024efficient, jiang2023online, xiong2022sublinear}. It is used for multiple reasons: first, it ensures geometric ergodicity of the Markov chain induced by the employed policy; second, it plays a key role in the theoretical analysis since allows to bound the error in the estimated belief vector as a function of the error in the estimated transition model (see Lemma~\ref{lemma:BoundOnBeliefErrors} for details). In practical scenarios, this assumption is satisfied in various POMDP applications, such as those involving information gathering~\citep{guo2016pac}.

\begin{restatable}{assumption}{assWeaklyRev}\label{ass:weaklyRev}\textup{(\textbf{\(\bm{\alpha}\)-weakly Revealing Condition})}
	There exists \(\alpha>0\) such that \(\underset{a \in \mathcal{A}}{\min} \sigma_S(\mathbb{O}_a) \ge \alpha\).
\end{restatable}
Here, we use \(\sigma_S(\mathbb{O}_a)\) to denote the \(S\)-th singular value of matrix \(\mathbb{O}_a\).
This second assumption relates to the identifiability of the POMDP parameters and has been largely used in works tackling the partially observable setting. This condition quantifies the amount of information provided by the observations when inferring the latent states. 
A positive \(\alpha\) value rules out pathological POMDP instances and identifies the tractable subclass of \emph{weakly revealing} POMDPs~\citep{jin2020sample, liu2022partially, liu2022optimistic} (see Section~\ref{sec:relatedWorks}). This assumption is related to the more typical \emph{full-rank} condition employed in works using spectral decomposition techniques~\citep{Azizzadenesheli2016Reinforcement, zhou2021regime, hsu2012spectral}. A direct implication of this condition is that \(S \le O\), which represents a common scenario in many real-world settings, such as medical applications where the state (physical condition) of a patient generates a large number of different observations~\citep{hauskrecht2000PlanningTO} or dialogue systems, with a number of observations (words) that is much larger than the possible states (topics) of the conversation~\citep{png2012building}.

\paragraph{Learning Objective} 
As defined before, the objective in our setting is to find the belief-based policy maximizing Equation~\eqref{eq:Bellman}.\\
~\citet{xiong2022sublinear} have shown that under assumption~\ref{ass:minElem}, this equation is always verified.
We tackle this problem using a regret minimization approach and we compare our policy against the optimal POMDP policy which plays according to the current belief value. Since determining the optimal policy for the POMDP model is generally computationally intractable~\citep{madani1999computability}, in this work we do not focus on solving this planning problem. Instead, we assume access to an optimization oracle able to solve Equation~\eqref{eq:Bellman}, thus maximizing the average reward \(\rho^*\) while returning the optimal policy \(\pi \in \mathcal{P}\) under a given model.\\
The total regret over the interaction horizon of length \(T\) is thus defined as:
\begin{align}\label{def:regret}
	\mathcal{R}_T := T\rho^* - \sum_{t=0}^{T-1} r^\pi(o_t),
\end{align}
with apex \(\pi\) in the reward denoting that observations are obtained while following policy \(\pi \in \mathcal{P}\).

\section{ACTION-WISE OAS ESTIMATION PROCEDURE}\label{sec:estimationProcedure}
Based on the OAS approach developed in~\citet{russo2024efficient}, we present here the \emph{Action-wise Observation-Aware Spectral} (AOAS) estimation technique that aims at estimating the transition model \(\mathbb{T} = \{\mathbb{T}_a\}_{a \in \mathcal{A}}\) characterizing the POMDP\footnote{Refer to Appendix~\ref{appendix:comparisonOAS} for a detailed comparison between the approaches.}.\\
Let us assume to interact with a POMDP instance \(\mathcal{Q}\) using a belief-based policy \(\pi \in \mathcal{P}\) and to collect samples \(\mathcal{D}=\{(a_t,o_t)_{t=0}^{n}\}\) with \(n+1=|\mathcal{D}|\) denoting the cardinality of the dataset.\\
We will then group pairs of samples collected in consecutive timestamps such that from dataset \(\mathcal{D}\) we can build a new dataset \(\mathcal{G}= \{ (a_{t},a_{t+1},o_{t},o_{t+1})_{t=0}^{n-1}\}\) having cardinality \(n=|\mathcal{G}|\).
The tuples of the form \((a,a',o,o')\) with \(a,a' \in \mathcal{A}\) and \(o,o' \in \mathcal{O}\) will be the core elements employed in our estimation approach.

The estimation of transition matrix \(\mathbb{T}_a\) related to action \(a\) is done by only considering those tuples in \(\mathcal{G}\) whose first element \((a_{t})_{t=0}^{n-1}\) coincides with action \(a\), while the remaining part of the tuple \((a_{t+1},o_{t},o_{t+1})\) is actually employed for estimation. For convenience, we use a vector notation to represent the last three elements of each tuple, such that \(\bm{x}_t \in \mathbb{R}^{AO^2}\) will denote the one-hot encoding of tuple \((a_{t+1},o_{t},o_{t+1})\).

Let us now denote by \(\mathcal{E}(a, n, m)\) the event which holds true when, in a dataset \(\mathcal{G}\) of consecutive samples having cardinality \(n\), the number of tuples having action \(a\) as a first element is equal to \(m\). More formally:
\begin{align}\label{def:conditionalEvent}
	\mathcal{E}(a, n, m) = \left \{ m = \sum_{t=0}^{n-1}\mathds{1}\{a_{t}=a\} \right \}.
\end{align}
Here, we use \(\mathds{1}\{\cdot\}\) to denote the indicator function, which is equal to 1 when the condition is satisfied and 0 otherwise.\\
The elements defined above allow us to present the following distribution of interest \(\bm{d}^{(a,n,m)}_{AO^2} \in \Delta(\mathcal{A} \times \mathcal{O}^2)\) over the tuples \((a',o,o')\):
\begin{align}\label{def:conditionalDist}
	\bm{d}^{(a,n, m)}_{AO^2} = \mathbb{E}_{\pi, \bm{\nu}}\Bigg[\frac{1}{m} \sum_{t=0}^{n-1} \mathds{1}\{a_{t}=a\}\; \bm{x}_t\ \bigg| \mathcal{E}(a,n,m)\Bigg],
\end{align}
where the expectation is with respect to policy \(\pi \in \mathcal{P}\) and the initial state distribution \(\bm{\nu} \in \Delta(\mathcal{S})\) of the POMDP, and it is conditioned on the event \(\mathcal{E}(a,n,m)\). Here, the subscript \(AO^2\) employed for the presented distribution represents the size of its support.\footnote{A similar notation will be used as well for other distributions defined throughout the work.}

Since we know that the received observations can be mapped to the underlying latent states using the observation model \(\mathbb{O}\), we can define a relation linking the distribution \(\bm{d}^{(a,n,m)}_{AO^2}\) defined on the tuples \((a',o,o')\) with an analogous distribution \(\bm{d}^{(a,n,m)}_{AS^2} \in \Delta(\mathcal{A} \times \mathcal{S}^2)\) defined on the non-observable tuples \((a',s,s')\)\footnote{For a formal definition of this distribution, we refer to Appendix~\ref{appendix:lemmaProof}.
}. Indeed, we can easily observe that, for each element \((a',o,o')\) of vector \(\bm{d}^{(a,n,m)}_{AO^2}\), we have:
\begin{align*}
	& \bm{d}^{(a,n,m)}_{AO^2}(a',o,o') = \notag \\
	& \qquad \sum_{s,s' \in \mathcal{S}} \mathbb{O}_a(o|s) \, \mathbb{O}_{a'}(o'|s') \; \bm{d}^{(a,n,m)}_{AS^2}(a',s,s').
\end{align*}
The relation stated above can be defined for all the elements of the considered distributions. Thus, using matrix notation, we have:
\begin{align}\label{eq:conditionalAO^2->AS^2}
	\bm{d}^{(a,n,m)}_{AO^2} = \mathbb{B}_a \; \bm{d}^{(a,n,m)}_{AS^2},
\end{align}
where \(\mathbb{B}_a\) is a block diagonal matrix of size \(AO^2 \times AS^2\) obtained by aligning along its diagonal the matrices \(\{\mathbb{O}_{a,a'}\}_{a' \in \mathcal{A}}\).
The different matrices \(\mathbb{O}_{a,a'}\) have dimension \(O^2 \times S^2\) and are in turn obtained as follows:
\begin{align*}
	\mathbb{O}_{a,a'} \coloneq \mathbb{O}_a \otimes \mathbb{O}_{a'},
\end{align*}
where \(\otimes\) denotes the Kronecker product between matrices \(\mathbb{O}_a\) and \(\mathbb{O}_{a'}\). We recall that since the observation model is available, we can compute the block diagonal matrix \(\mathbb{B}_a\) for any \(a \in \mathcal{A}\).

The distribution \(\bm{d}^{(a,n,m)}_{AS^2}\) can be linked to the transition matrix \(\mathbb{T}_a\) using the following considerations. First of all, we define a new quantity \(\bm{d}_{S^2}^{(a,n,m)} \in \Delta(\mathcal{S}^2)\) which is obtained by aggregating elements in \(\bm{d}^{(a,n,m)}_{AS^2}\). In particular, each element of this new vector is obtained as:
\begin{align}\label{eq:fromASStoSS}
\bm{d}_{S^2}^{(a, n,m)}(s,s') = \sum_{a' \in \mathcal{A}} \bm{d}_{AS^2}^{(a, n,m)}(a',s,s').
\end{align}
The final step involves recognizing the proportional relationship between elements in \(\mathbb{T}_a\) and \(\bm{d}_{S^2}^{(a,n,m)}\), which leads to the final expression:
\begin{align}\label{eq:TfromD}
	\mathbb{T}_a(s'|s) = \frac{\bm{d}_{S^2}^{(a,n,m)}(s,s')}{\sum_{s'' \in \mathcal{S}}\bm{d}_{S^2}^{(a,n,m)}(s,s'')} \quad \forall s,s' \in \mathcal{S}. 
\end{align}
For details on the derivation of Equation~\eqref{eq:TfromD}, we refer to Lemma~\ref{lemma:linkTransitionModelAndDist}.

\paragraph{Estimation Procedure} Having defined the procedure connecting the distribution on the action-observation tuples \(\bm{d}^{(a,n,m)}_{AO^2}\) to the transition model, we show how an estimate of the transition matrix \(\mathbb{T}_a\) can be computed. The following holds for any action \(a \in \mathcal{A}\). Let a policy \(\pi\) interact with the environment for \(n+1\) timestamps generating a dataset \(\mathcal{D}\) of samples and let us group consecutive samples obtaining a new dataset \(\mathcal{G}\) with cardinality \(n\), as previously described. By denoting with \(n(a)\) the number of tuples in \(\mathcal{G}\) starting with action \(a\), we can estimate \(\bm{d}^{(a,n,n(a))}_{AO^2}\) as: 
\begin{align}\label{def:singleEpEstimator}
	\widehat{\bm{d}}^{(a,n,n(a))}_{AO^2} = \frac{1}{n(a)} \sum_{t=0}^{n-1} \mathds{1}\{a_{t}=a\} \; \bm{x}_t.
\end{align}
The estimate corresponding to the associated distribution \(\widehat{\bm{d}}^{(a,n,n(a))}_{AS^2}\) over the non-observable tuples \((a',s,s')\) can thus be obtained by inverting Equation~\eqref{eq:conditionalAO^2->AS^2} as follows:
\begin{align}\label{eq:invertedToConditionalASS}
	\widehat{\bm{d}}^{(a,n,n(a))}_{AS^2} = \mathbb{B}_a^\dagger \; \widehat{\bm{d}}^{(a,n,n(a))}_{AO^2},
\end{align}
where \(\mathbb{B}_a^\dagger\) denotes the Moore-Penrose of matrix \(\mathbb{B}_a\). We stress that, by the weakly-revealing assumption (\ref{ass:weaklyRev}) and the properties of the Kronecker product, this matrix is always invertible since it holds that \(\sigma_{\min}(\mathbb{B}_a) \ge \alpha^2\).\\
In a subsequent step, Equation~\eqref{eq:fromASStoSS} is used to obtain an estimate \(\widehat{\bm{d}}^{(a,n,n(a))}_{S^2}\). Since this estimate may erroneously contain negative terms, we modify the negative ones by setting them to \(0\), thus obtaining a non-negative vector \(\widebar{\bm{d}}^{(a,n,n(a))}_{S^2}\). The newly obtained quantity is then plugged into Equation~\eqref{eq:TfromD} to compute an estimate \(\widehat{\mathbb{T}}_a\) of the action transition matrix.\\
The pseudocode of the reported approach is presented in Algorithm~\ref{alg:AOASEstimation}.

\subsection{Sample Reuse and Theoretical Guarantees}
One of the typical issues affecting the learning of model parameters in the average-reward POMDP setting is the inability to use samples coming from different policies~\citep{Azizzadenesheli2016Reinforcement, russo2024efficient}. 
The objective of this section is to show a simple estimator that is able to overcome this problem.\\ 
In particular, let us assume that \((\pi_i)_{i=0}^{k-1}\) policies interact separately with the environment and let us denote with \((\mathcal{G}_i)_{i=0}^{k-1}\) the generated datasets of consecutive samples. 
Let \(n_i\) and \(n_i(a)\) indicate respectively the cardinality of the dataset \(\mathcal{G}_i\) and the number of tuples from this dataset starting with action \(a\). Based on these quantities, we can define a mixed distribution described as:
\begin{align}\label{eq:combinedDistribution}
	\bm{d}_{AO^2}^{(a,k)} = \frac{1}{N_k(a)} \sum_{i=0}^{k-1} n_i(a) \; \bm{d}_{AO^2}^{(a,n_i,n_i(a))},
\end{align}
where \(N_k(a)=\sum_{i=0}^{k-1} n_i(a)\), while the new quantity \(\bm{d}_{AO^2}^{(a,k)} \in \Delta(\mathcal{A} \times \mathcal{O}^2)\) mixes the per-policy distributions assigning to each of them a weight proportional to \(n_i(a)\). An unbiased estimate of this quantity is obtained as:
\begin{align}\label{eq:combinedEstimator}
		\widehat{\bm{d}}_{AO^2}^{(a,k)} & = \frac{1}{N_k(a)}\sum_{i=0}^{k-1} n_i(a) \;\widehat{\bm{d}}_{AO^2}^{(a,n_i,n_i(a))} \notag \\ 
		& = \frac{1}{N_k(a)}\sum_{i=0}^{k-1} \sum_{j=0}^{n_i-1} \mathds{1}\{a_{j,i}=a\} \; \bm{x}_{j,i}
\end{align}
where we use \(a_{j,i}\) to represent the action at timestamp \(j\) referring to dataset \(\mathcal{G}_i\), while \(\bm{x}_{j,i}\) denotes the \(j\)-th indicator vector from \(\mathcal{G}_i\).\\ By observing Equation~\eqref{eq:combinedEstimator}, we can see that it is equivalent to the estimator defined in~\eqref{def:singleEpEstimator} when computed on the unique dataset \(\mathcal{U}= \bigcup_{i=0}^{k-1}\,\mathcal{G}_i\) obtained from the union of the different datasets \(\mathcal{G}_i\).\\
As we will see in Lemma~\ref{lemma:combinedActionDistr}, the number \(k\) of different policies influences the guarantees of the estimated transition matrix. However, this aspect is not reflected in the pseudocode of the \emph{Action-wise} OAS procedure where the approach can be used without modifications by simply providing as input the union dataset \(\mathcal{U}\). 

\begin{algorithm}[tb]
	\caption{Action-wise OAS Algorithm}
	\label{alg:AOASEstimation}
	\begin{algorithmic}[1]
		\STATE {\textbf{Input:}} Observation matrix \(\{\mathbb{O}_a\}_{a \in \mathcal{A}}\), dataset \(\mathcal{G}=\{(a_0,a_1,o_0,o_1), \dots, (a_{n-1},a_{n},o_{n-1},o_{n})\}\) of consecutive samples, dataset size \(n=|\mathcal{G}|\)
		\STATE Create block diagonal matrices \(\; \{\mathbb{B}_a\}_{a \in \mathcal{A}}\) from the observation model \(\mathbb{O}\)
		\STATE Set action counters \(n(a) = 0 \quad \forall a \in \mathcal{A}\)
		\STATE Define vectors of count \(\bm{c}(a) \in \mathbb{R}^{AO^2} \;\; \forall a \in \mathcal{A}\) and set their elements to zero
		\STATE t = 0
		\WHILE{\(t < n\)}
		\STATE Get tuple \((a_{t},a_{t+1},o_{t}, o_{t+1})\) from \(\mathcal{G}\)
		\STATE \(\bm{x}_t \leftarrow \text{one\_hot\_encode}(a_{t+1},o_{t}, o_{t+1}\)) 
		\STATE \(\bm{c}(a_t) = \bm{c}(a_t) + \bm{x}_t\)
		\STATE \(n(a_t) = n(a_t) + 1\)
		\STATE \(t=t+1\)
		\ENDWHILE
		\FOR{\(a \in \mathcal{A}\)}
		\IF{\(n(a) > 0\)}
		\STATE Compute \(\widehat{\bm{d}}^{(a,n,n(a))}_{AO^2} = \bm{c}(a) / n(a)\) 
		\STATE Compute \(\widehat{\bm{d}}^{(a,n,n(a))}_{AS^2}\) using Equation~\eqref{eq:invertedToConditionalASS}
		\STATE Compute \(\widehat{\bm{d}}^{(a,n,n(a))}_{S^2}\) using Equation~\eqref{eq:fromASStoSS}
		\STATE Compute positive \(\widebar{\bm{d}}^{(a,n,n(a))}_{S^2}\) from \(\widehat{\bm{d}}^{(a,n,n(a))}_{S^2}\)
		\STATE Compute \(\widehat{\mathbb{T}}_a\) from \(\widebar{\bm{d}}^{(a,n,n(a))}_{S^2}\) using Equation~\eqref{eq:TfromD}
		\ENDIF
		\ENDFOR
	\end{algorithmic}
\end{algorithm}

The mixed distribution defined in~\eqref{eq:combinedDistribution} and its estimator show how to combine samples from different policies. By employing these quantities in the analysis and using Algorithm~\ref{alg:AOASEstimation} on the collected data, we prove a consistent approach for estimating each action transition matrix \(\mathbb{T}_a\), as observed from the following result: 

\begin{restatable}[]{lemma}{combinedActionDistr}\label{lemma:combinedActionDistr}
Let us assume that \(k\) policies \((\pi_i)_{i = 0}^{k-1}\), each with \(\pi_i \in \mathcal{P}\), separately interact with a POMDP instance \(\mathcal{Q}\) satisfying Assumptions~\ref{ass:minElem} and~\ref{ass:weaklyRev}. By providing the union dataset \(\mathcal{U} = \bigcup_{i=0}^{k-1}\,\mathcal{G}_i\) to Algorithm~\ref{alg:AOASEstimation}, with probability at least \(1 - \delta\), it holds that: 
\begin{equation*}
	\|\mathbb{T}_a - \widehat{\mathbb{T}}_a\|_{F} \le \frac{4 \widetilde{G}}{\alpha^2 d_{\min}^{(a)}\; (1 - \widetilde{\eta})}\sqrt{\frac{2k\,SA \,\log (2AO^2k/\delta)}{N_k(a)}}
\end{equation*}
where \(\widetilde{G} \ge 1\) and \(\widetilde{\eta} \le 1 - \frac{\epsilon}{1 - \epsilon}\) are determined by the deployed policies, while \(d_{\min}^{(a)}\) represents the minimum state distribution conditioned on action \(a\). 
\end{restatable}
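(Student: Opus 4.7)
The plan is to decompose the error $\|\mathbb{T}_a - \widehat{\mathbb{T}}_a\|_F$ along the deterministic pipeline of Algorithm~\ref{alg:AOASEstimation}: first concentrate the combined empirical distribution $\widehat{\bm{d}}_{AO^2}^{(a,k)}$ around its mean $\bm{d}_{AO^2}^{(a,k)}$, and then propagate this error through the pseudoinverse $\mathbb{B}_a^\dagger$, the marginalization from $\mathcal{A}\times\mathcal{S}^2$ to $\mathcal{S}^2$, the non-negativity clipping, and finally the row-normalization that recovers $\widehat{\mathbb{T}}_a$. Assumption~\ref{ass:weaklyRev} together with the Kronecker structure of $\mathbb{B}_a$ yields $\sigma_{\min}(\mathbb{B}_a) \ge \alpha^2$, so the pseudoinverse contributes the factor $1/\alpha^2$; the marginalization over $a'$ and the final normalization by the marginal state distribution contribute a $\sqrt{SA}$-type polynomial factor and the $1/d_{\min}^{(a)}$ denominator (a perturbation argument in the spirit of Lemma~\ref{lemma:linkTransitionModelAndDist}); clipping only shrinks the $\ell_2$ error since the true vector $\bm{d}_{S^2}^{(a,k)}$ is non-negative.

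For a single policy $\pi_i$ interacting with the POMDP, the sequence of induced belief--action--observation tuples forms a trajectory of an ergodic Markov chain. Assumption~\ref{ass:minElem} supplies a uniform $\epsilon$-minorization, hence geometric ergodicity with explicit constants that I collect into $\widetilde{G} \ge 1$ and $\widetilde{\eta} \le 1 - \frac{\epsilon}{1-\epsilon}$. Applying an Azuma/Bernstein-type inequality for bounded additive functionals of such chains coordinate-wise to the one-hot vectors $\bm{x}_t$, combined with a union bound over the $AO^2$ coordinates and the $k$ policies, yields with probability at least $1 - \delta$ a per-policy inequality of the form
\begin{equation*}
\bigl\|\widehat{\bm{d}}_{AO^2}^{(a,n_i,n_i(a))} - \bm{d}_{AO^2}^{(a,n_i,n_i(a))}\bigr\|_2 \le \frac{2\widetilde{G}}{1-\widetilde{\eta}} \sqrt{\frac{2\log(2AO^2 k/\delta)}{n_i(a)}}.
\end{equation*}
Care is needed because $n_i(a)$ is itself random and $\bm{d}_{AO^2}^{(a,n_i,n_i(a))}$ is defined conditionally on the count event $\mathcal{E}(a,n_i,n_i(a))$, so one either works directly with the conditional law or separately controls the numerator $\sum_t \mathds{1}\{a_t=a\}\bm{x}_t$ and denominator $n_i(a)$ before recombining them.

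The aggregation across the $k$ datasets is the key new ingredient. Starting from the linear form of~\eqref{eq:combinedEstimator}, the triangle inequality gives
\begin{equation*}
\bigl\|\widehat{\bm{d}}_{AO^2}^{(a,k)} - \bm{d}_{AO^2}^{(a,k)}\bigr\|_2 \le \frac{1}{N_k(a)} \sum_{i=0}^{k-1} n_i(a)\, \bigl\|\widehat{\bm{d}}_{AO^2}^{(a,n_i,n_i(a))} - \bm{d}_{AO^2}^{(a,n_i,n_i(a))}\bigr\|_2,
\end{equation*}
and substituting the per-policy bound reduces the right-hand side to a constant times $\frac{1}{N_k(a)} \sum_i \sqrt{n_i(a)}$. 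A single application of Cauchy--Schwarz yields $\sum_i \sqrt{n_i(a)} \le \sqrt{k\, N_k(a)}$, producing the characteristic $\sqrt{k/N_k(a)}$ dependence that appears in the statement. Composing this with the $1/(\alpha^2 d_{\min}^{(a)})$ amplification and the $\sqrt{SA}$ factor from the deterministic pipeline delivers the Frobenius-norm guarantee in precisely the form written.

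I expect the main obstacle to be the per-policy Markov-chain concentration in the second step: the samples inside $\mathcal{G}_i$ are strongly dependent, the quantity of interest is a ratio conditioned on a random count event, and the mixing constants must be tracked explicitly to obtain the stated $\widetilde{G}$ and $\widetilde{\eta}$ rather than being absorbed into an opaque constant. Once this inequality is in place with the correct sample-size dependence, the remainder of the argument is a routine chain of triangle inequalities, operator-norm estimates, and the Cauchy--Schwarz aggregation, all of which combine without introducing further randomness.
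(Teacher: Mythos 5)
Your proposal is correct and follows essentially the same route as the paper's proof: a per-policy Markov-chain concentration bound with explicit mixing constants $\widetilde{G}$, $\widetilde{\eta}$ (the paper imports this from matrix concentration plus the Markovian variance analysis of Azizzadenesheli et al.\ rather than a coordinate-wise Azuma/Bernstein argument, but the resulting bound and log factor are the same), combined across the $k$ datasets by the triangle inequality and Cauchy--Schwarz to obtain the $\sqrt{k/N_k(a)}$ rate, and then propagated through the pipeline exactly as you describe ($1/\alpha^2$ from $\sigma_{\min}(\mathbb{B}_a)$, $\sqrt{A}$ from aggregation, clipping only shrinking the error, and a normalized-vector perturbation bound yielding the $\sqrt{S}/d_{\min}^{(a)}$ factor). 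The subtlety you flag about the random count $n_i(a)$ and the conditioning event $\mathcal{E}(a,n,m)$ is indeed the delicate point, which the paper handles by working directly with the conditional distribution and the cited concentration result.
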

The parameter \(\widetilde{\eta}\) appearing in the bound refers to a contraction coefficient associated with the different Markov chains induced by the policies and its value is always strictly smaller than 1 under Assumption~\ref{ass:minElem}. Using this assumption, we are also able to bound away from 0 the minimum state distribution \(d_{\min}^{(a)}\). Finally, the \(\alpha\) term deriving from Assumption~\ref{ass:weaklyRev} characterizes the amount of information carried by the observations to infer the underlying states.

Furthermore, we remark that Lemma~\ref{lemma:combinedActionDistr} remains valid under a weaker condition than Assumption~\ref{ass:minElem}. In particular, it suffices to impose an ergodicity-like assumption for each action. In Appendix~\ref{appendix:lemmaProof}, we provide a more detailed description of this assumption together with a formal derivation of the results of the Lemma.

\begin{algorithm}[tb]
	\caption{The Action-wise OAS-UCRL Algorithm}
	\label{alg:oasAlgorithm}
	\begin{algorithmic}[1]
		\STATE {\textbf{Input:}} Observation matrix \(\{\mathbb{O}_a\}_{a \in \mathcal{A}}\), confidence level \(\delta\), length of initial episode \(T_0\)
		\STATE {\textbf{Initialize:}} \(t = 0\), \(k = 0\), policy \(\pi_0\) uniform on actions \(\mathcal{A}\), belief \(b_0\) uniform over states \(\mathcal{S}\), collected pairs of samples \(\mathcal{G} = \emptyset\)
		\WHILE{\(t < T\)}
		\IF{\(k>0\)}
		\STATE Compute transition model \(\widehat{\mathbb{T}}=\{\widehat{\mathbb{T}}_a\}_{a \in \mathcal{A}}\) from \(\mathcal{G}\) using Algorithm~\ref{alg:AOASEstimation}
		\STATE Build a confidence region \(\mathcal{C}_{a,k}(\delta_{a,k})\) around each \(\widehat{\mathbb{T}}_a\)
		\STATE Define the set \(\mathcal{C}_k(\delta_k)\) of admissible POMDPs 
		\STATE Get policy \(\pi_{k}\) from the oracle (Equation~\ref{eq:oracleOutput})
		\ENDIF
		\STATE Set \(n_k(a) = 0 \quad \forall a \in \mathcal{A}\)
		\STATE Execute \(a_t = \pi_k(b_t)\)
		\STATE Observe \(o_t\), get reward \(r_t = r(o_t)\)
		\STATE Update belief to \(b_{t+1}\) using Equation~\eqref{eq:beliefUpdate}
		\STATE Set \(t=t+1\)
		\WHILE{\(t < T_0\) or \(n_k(\pi_k(b_t)) < N_k(\pi_k(b_t))\)}\label{line:stoppingCondition}

		\STATE Execute \(a_{t} = \pi_k(b_{t})\)
		\STATE Observe \(o_{t}\), get reward \(r_{t} = r(o_{t})\)
		\STATE Update belief to \(b_{t+1}\) using Equation~\eqref{eq:beliefUpdate}
		\STATE Update \(n_k(a_{t-1}) = n_k(a_{t-1}) + 1\)
		\STATE Add \((a_{t-1}, a_t, o_{t-1}, o_t)\) to \(\mathcal{G}\)
		\STATE Set \(t = t+1\)
		\ENDWHILE
		\STATE Set \(N_{k+1}(a) = N_k(a) + n_k(a) \quad \forall a \in \mathcal{A}\)
		\STATE Set \(k = k+1\)
		\ENDWHILE
	\end{algorithmic}
\end{algorithm}

\section{ACTION-WISE OAS-UCRL ALGORITHM}\label{sec:OASUCRLAlgorithm}
In this section, we present the \emph{Action-wise} OAS-UCRL (AOAS-UCRL) algorithm which is inspired by the combination of an optimistic approach mimicking the UCRL algorithm~\citep{jaksch2010near} and the \emph{Action-wise} OAS estimation procedure. The algorithm starts with an initial episode \(k=0\) of length \(T_0\) where a uniform exploration policy is used to collect samples. At the beginning of each successive episode \(k\), all the samples collected up to that moment are provided to the \emph{Action-wise} OAS Algorithm. The \emph{Action-wise} OAS algorithm returns as output the estimated transition model \(\widehat{\mathbb{T}} = \{\widehat{\mathbb{T}}_a\}_{a \in \mathcal{A}}\). 
The algorithm then proceeds by building a confidence region \(\mathcal{C}_{a,k}(\delta_{a,k})\) around every \(\widehat{\mathbb{T}}_a\) such that the real transition matrix lies in it with high probability, namely \(P(\mathbb{T}_a \in \mathcal{C}_{a,k}(\delta_{a,k})) \ge 1 - \delta_{a,k}\), with \(\delta_{a,k} \coloneq \delta/(Ak^3)\). These confidence regions together define the confidence region \(\mathcal{C}_{k}(\delta_k)\) of the real POMDP instance \(\mathcal{Q}\), for which in turn it holds that \(P(\mathcal{Q} \in \mathcal{C}_k(\delta_k)) \ge 1 - \delta_k\), with \(\delta_k \coloneq \delta/k^3\).
As specified in Section~\ref{sec:problemFormulation}, we assume the existence of an oracle that is able to compute the optimal policy corresponding to the optimistic POMDP contained in \(\mathcal{C}_k(\delta_k)\). More formally, the oracle computes:
\begin{align}\label{eq:oracleOutput}
	\pi_k = \arg \underset{\pi \in \mathcal{P}}{\max} \;\; \underset{\widetilde{\mathcal{Q}} \in \mathcal{C}_k(\delta_k)}{\max}\;\; \rho(\pi, \widetilde{\mathcal{Q}}),
\end{align}
where we used \(\rho(\pi, \widetilde{\mathcal{Q}})\) to emphasize the dependence of the average reward from policy \(\pi\) and the POMDP instance \(\widetilde{Q}\).\\
The policy \(\pi_k\) returned by the oracle is then used during episode \(k\) to interact with the environment. An episode \(k\) terminates whenever there is at least an action \(a\) such that the number \(n_k(a)\) of times it appears as a first element in that episode matches the total number of times \(N_k(a)\) it appears as a first element from the beginning of the interaction (line~\ref{line:stoppingCondition}). The pseudocode of the approach is reported in Algorithm~\ref{alg:oasAlgorithm}.

\begin{figure*}[t]
\begin{minipage}{.50\textwidth}
    \vspace{-.6cm}
  \includegraphics[scale=1.]{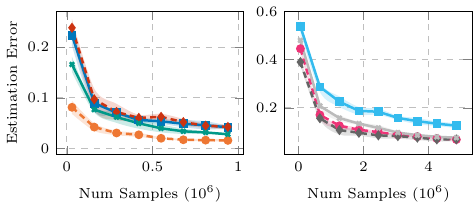}
  \captionsetup{width=0.92\textwidth}
  \caption{Error in Frobenious norm of the Different 
  Action Transition Matrices under two POMDP Instances (10 runs, 95 \%c.i.).}
  \label{fig:estimationError}
\end{minipage}%
\hspace{.55cm} 
\begin{minipage}{.4\textwidth}
	\vspace{-.25cm}
  \includegraphics[width=\textwidth]{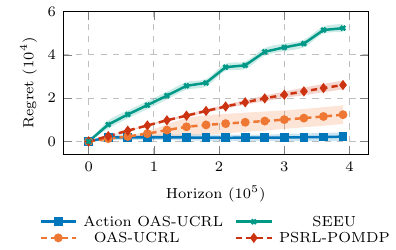}
  \captionsetup{width=1.1\textwidth}
  \caption{Regret Results on a POMDP Instance with \(S=3\), \(A=4\), \(O=4\) (10 runs, 95 \%c.i.).}
  \label{fig:regret}
\end{minipage}
\end{figure*}

We prove the following result for the \emph{Action-wise} OAS-UCRL algorithm. The proof is reported in Appendix~\ref{appendix:theoremProof}.
\begin{restatable}[]{theorem}{algorithmBound}\label{theorem:algorithmBound}
	Let us assume to have a POMDP instance \(\mathcal{Q}\) satisfying Assumptions~\ref{ass:minElem} and~\ref{ass:weaklyRev}. If the Action-wise OAS-UCRL algorithm is run for \(T\) steps, with probability at least \(1 - 2\delta\), it suffers from a total regret:
\begin{equation*}
	\mathcal{R}_T \le \mathcal{O}\left( \frac{CD\widetilde{G}}{\alpha^2 \widetilde{d}_{\min}}\sqrt{SA^3T\,\log T\,\log O}\right).
\end{equation*}
	where \(C\coloneq \frac{4(1-\epsilon)^3}{\epsilon^4}\) and \(D\) is a finite constant bounding the span of the bias function (definition in Proposition~\ref{prop:uniformBoundBias}).
\end{restatable}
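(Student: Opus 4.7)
The plan is to follow a standard UCRL-style regret decomposition, adapted to the belief-based POMDP setting, combined with the per-action estimation guarantees from Lemma~\ref{lemma:combinedActionDistr}. I would proceed in four stages: (i) define a good event and invoke optimism, (ii) decompose the per-episode regret via the Bellman equation of the optimistic model, (iii) bound the episode-wise error using belief perturbation and transition estimation errors, and (iv) sum over episodes exploiting the stopping rule.

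\textbf{Step 1 (Good event and optimism).} Define $\mathcal{E}$ to be the event that, for every episode $k\ge 1$ and every action $a\in\mathcal{A}$, the true matrix $\mathbb{T}_a$ lies in $\mathcal{C}_{a,k}(\delta_{a,k})$. With $\delta_{a,k}=\delta/(Ak^3)$, a union bound over $k$ and $a$ using Lemma~\ref{lemma:combinedActionDistr} makes $P(\mathcal{E})\ge 1-\delta$. On $\mathcal{E}$ the true POMDP belongs to $\mathcal{C}_k(\delta_k)$ at every episode, so by definition of the oracle in~\eqref{eq:oracleOutput} the optimistic average reward $\widetilde\rho_k$ satisfies $\widetilde\rho_k \ge \rho^*$, yielding $\mathcal{R}_T \le \sum_{k} \sum_{t\in\mathrm{ep}(k)} \bigl(\widetilde\rho_k - r(o_t)\bigr)$.

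\textbf{Step 2 (Per-step Bellman decomposition).} Apply the Bellman equation~\eqref{eq:Bellman} for the optimistic POMDP $\widetilde{\mathcal{Q}}_k$ along the actually executed belief trajectory $(b_t)$ and policy $\pi_k$. Writing $\widetilde v_k$ and $\widetilde g_k$, $\widetilde P_k$ for the optimistic bias, reward and belief kernel, and $b_t^{\mathrm{true}}$ for the belief updated under the unknown true model, I obtain termwise
\begin{align*}
\widetilde\rho_k - r(o_t) &= \bigl(\widetilde g_k(b_t,a_t) - r(o_t)\bigr) \\
&\quad + \Bigl(\int \widetilde v_k(b')\widetilde P_k(db'|b_t,a_t) - \widetilde v_k(b_{t+1})\Bigr) \\
&\quad + \bigl(\widetilde v_k(b_{t+1}) - \widetilde v_k(b_t)\bigr).
\end{align*}
The last term telescopes inside the episode and contributes at most $2D$ per episode using Proposition~\ref{prop:uniformBoundBias}. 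The first and second terms are controlled by the reward and transition-model perturbation between $\widetilde{\mathcal{Q}}_k$ and $\mathcal{Q}$, multiplied by $D$. Crucially, because the agent uses the \emph{estimated} belief to pick actions while rewards are generated by the \emph{true} belief dynamics, a belief-mismatch term $\|b_t - b_t^{\mathrm{true}}\|_1$ also appears; this is where Lemma~\ref{lemma:BoundOnBeliefErrors} enters, converting belief errors into transition-estimate errors amplified by the contraction constant $C=4(1-\epsilon)^3/\epsilon^4$ arising from Assumption~\ref{ass:minElem}.

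\textbf{Step 3 (Per-episode bound via action counts).} The previous step yields, on $\mathcal{E}$,
\begin{equation*}
\sum_{t\in\mathrm{ep}(k)}\bigl(\widetilde\rho_k - r(o_t)\bigr) \le 2D + CD \sum_{a\in\mathcal{A}} n_k(a)\,\|\mathbb{T}_a - \widehat{\mathbb{T}}_{a,k}\|_F,
\end{equation*}
where the $\sqrt{SA}$ factor converting operator/$\ell_1$ norms to Frobenius norm is absorbed into the constants. Plugging in Lemma~\ref{lemma:combinedActionDistr}, each term is at most $\frac{4\widetilde G}{\alpha^2 \widetilde d_{\min}(1-\widetilde\eta)}\, n_k(a)\sqrt{kSA\log(AOk/\delta)/N_k(a)}$.

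\textbf{Step 4 (Summing over episodes).} The stopping condition on line~\ref{line:stoppingCondition} is exactly the UCRL doubling trigger for per-action counters: whenever an episode ends there is some $a$ with $n_k(a)=N_k(a)$, so $N_{k+1}(a)\ge 2N_k(a)$. A standard counting argument then gives at most $K=\mathcal{O}(A\log T)$ episodes and, via Cauchy--Schwarz,
\begin{equation*}
\sum_{k,a}\frac{n_k(a)}{\sqrt{N_k(a)}} \le \sqrt{K\,A}\,\sqrt{\sum_{k,a} n_k(a)} \le \mathcal{O}\!\bigl(\sqrt{A^2 T\log T}\bigr).
\end{equation*}
Combining with the prefactors from Step 3 (and noting $k\le K$ inside the estimation bound yields the extra $\sqrt{\log T}$), and adding the $\mathcal{O}(KD)$ contribution from the telescoping term, produces exactly
\begin{equation*}
\mathcal{R}_T \le \mathcal{O}\!\left(\frac{CD\widetilde G}{\alpha^2\widetilde d_{\min}}\sqrt{SA^3 T\log T\log O}\right).
\end{equation*}

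The main obstacle is Step 2: handling the belief mismatch rigorously, since unlike classical UCRL the \emph{state} used by the policy is itself a function of the estimated model. This is where the geometric ergodicity provided by Assumption~\ref{ass:minElem} is essential, via Lemma~\ref{lemma:BoundOnBeliefErrors}, to keep the belief error comparable to the transition error so that the overall rate remains $\widetilde{\mathcal{O}}(\sqrt{T})$ rather than degrading to $\widetilde{\mathcal{O}}(T^{2/3})$ as in prior work.
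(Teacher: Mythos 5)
Your overall architecture matches the paper's proof: good event plus optimism, a Bellman decomposition in the optimistic belief MDP, control of the belief mismatch via Lemma~\ref{lemma:BoundOnBeliefErrors} and of the kernel perturbation via the one-step belief bound, the per-action estimation error of Lemma~\ref{lemma:combinedActionDistr}, and an episode-counting argument. However, your Step~4 as written does not deliver the bound you claim. Bounding \(\sum_{k,a} n_k(a)/\sqrt{N_k(a)}\) by \(\sqrt{KA}\,\sqrt{\sum_{k,a}n_k(a)} \le \sqrt{KAT}\) gives \(\mathcal{O}(\sqrt{A^2T\log T})\) once \(K=\mathcal{O}(A\log T)\) is inserted; multiplying by the prefactor \(\propto \sqrt{KSA\log(\cdot)}\) coming from Lemma~\ref{lemma:combinedActionDistr} (the \(\sqrt{k}\le\sqrt{K}\) inside the estimation bound) yields a main term of order \(K\sqrt{SA^2T}=\sqrt{SA^4T}\,\log T\), i.e.\ an extra \(\sqrt{A\log T}\) compared with the stated \(\sqrt{SA^3T\log T}\). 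The paper avoids this slack by first bounding the count sum \emph{per action} with Lemma~\ref{lemma:boundJaksch}, \(\sum_k n_k(a)/\sqrt{N_k(a)}\le(\sqrt{2}+1)\sqrt{N_K(a)}\) (valid because the stopping rule on line~\ref{line:stoppingCondition} guarantees \(n_k(a)\le N_k(a)\)), and only then applying Cauchy--Schwarz over actions to get \((\sqrt{2}+1)\sqrt{AT}\); the single remaining \(\sqrt{K}\) then comes from the estimation prefactor, and \(\sqrt{SA^3T\log T\log O}\) follows. You need this sharper per-action argument; your looser version still gives \(\widetilde{\mathcal{O}}(\sqrt{T})\) but not the theorem as stated.

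Two smaller bookkeeping gaps. First, your Step~2 decomposition contains the stochastic realizations \(r(o_t)\) and \(v_k(b_{t+1})\); saying these terms are ``controlled by the perturbation multiplied by \(D\)'' only covers their conditional expectations. You still need two Azuma--Hoeffding applications (one for the reward martingale \(g(b_t^{\mathrm{true}},a_t)-r(o_t)\), one for the bias martingale \(\mathbb{E}[v_k(b_{t+1})\mid\mathcal{F}_t]-v_k(b_{t+1})\)), each contributing \(\mathcal{O}(D\sqrt{T\log(1/\delta)})\); these, together with the good event (whose failure probability under \(\delta_{a,k}=\delta/(Ak^3)\) sums to about \(1.2\delta\), not \(\delta\)), are what produce the overall \(1-2\delta\) confidence. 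Second, the quantity entering the perturbation terms is the distance to the \emph{optimistic} model \(\mathbb{T}_{a,k}\) used during episode \(k\), not to \(\widehat{\mathbb{T}}_{a,k}\); on the good event this is handled by a triangle inequality through the confidence radius (a factor \(2\)), and no \(\sqrt{SA}\) norm-conversion factor is needed --- nor could such a factor be ``absorbed into the constants,'' since \(S\) and \(A\) appear explicitly in the final bound.
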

This result is achieved also thanks to a new bound on the belief error presented in Lemma~\ref{lemma:BoundOnBeliefErrors}.\\
To the best of our knowledge, this is the first algorithm enjoying a regret of order \(\mathcal{O}(\sqrt{T \log T})\) in the average-reward POMDP setting when compared against the optimal policy, thus improving over state-of-the-art approaches. Indeed, the PSRL-POMDP algorithm~\citep{jahromi2022online} reaches a \(\mathcal{O}(T^{2/3})\) regret guarantee, but their result holds under the assumption of employing a consistent estimator, which they do not provide.\\
Concerning the OAS-UCRL approach~\citep{russo2024efficient}, it reaches a \(\mathcal{O}(\sqrt{T \log T})\) regret guarantee when compared to the weaker class of stochastic belief-based policies. It is possible to show that, by optimizing the minimum action probability \(\iota>0\) characterizing their policy class, their approach suffers a regret of order \(\widetilde{\mathcal{O}}(T^{4/5})\) when compared against the optimal policy.


\section{NUMERICAL SIMULATIONS}\label{sec:numericalSimulations}
In this section, we will provide numerical simulations testing both the AOAS estimation procedure and the AOAS-UCRL algorithm presented in the previous sections.
Further experiments and simulation details are reported in Appendix~\ref{appendix:simulationDetails}. 

\paragraph{Estimation Performance} This first set of experiments shows the estimation error of the transition model for two different POMDP instances when the \emph{Action-wise} OAS algorithm is employed. In particular, the objective is to show that AOAS is able to reduce the estimation error when using samples collected from different policies. On the left plot of Figure~\ref{fig:estimationError}, the POMDP has \(S=5\) states, \(A=4\) actions and \(O=8\) observations, while on the right the values are \(S=10\), \(A=4\) and \(O=16\). Each line in the plot represents the estimation error of the associated transition matrix \(\mathbb{T}_a\). Samples are collected using belief-based policies that change every \(10^4\) steps. The implemented policies play the action maximizing the immediate reward given the current belief and their change is determined by varying the transition model used to update the belief vector\footnote{We refer to the transition model adopted in Equation~\ref{eq:beliefUpdate} to update the belief, which can be arbitrarily different from the real transition model.}. To promote the choice of different actions, we employ stochastic policies. We notice that the approach works well also in larger problem instances, such as the one on the right. From details reported in Appendix~\ref{appendix:simulationDetails} about the considered instances, we observe that the actions having higher error are either those that have been chosen less (so fewer samples are available) or those associated with a block diagonal matrix \(\mathbb{B}_a\) with low values of \(\sigma_{\min}(\mathbb{B}_a)\).

\paragraph{Regret Results} This second set of experiments compares the \emph{Action-wise} OAS-UCRL algorithm with different baseline approaches. We exclude from the comparison the SM-UCRL algorithm from~\citet{Azizzadenesheli2016Reinforcement} since it employs memoryless policies which are known to yield a linear regret with respect to our oracle. 
We compare against the SEEU~\citep{xiong2022sublinear} approach but we use a modified version that provides the algorithm with information about the observation model. However, this approach is the one showing the highest regret under the considered instance and this aspect is mainly due to (i) the need for SD approaches of a large number of samples to provide good estimates and (ii) the inherent nature of the algorithm which alternates between purely exploratory and purely exploitative phases.\\
The comparison proceeds with other baseline algorithms naively developed under the assumption of knowing the observation model. It is possible to see that our solution outperforms the competing alternatives, thus validating the theoretical results. 
Concerning the PSRL-POMDP algorithm, we chose to implement it using a standard particle filter approach which however lacks estimation guarantees, as reflected in the suffered regret. Regarding the OAS-UCRL algorithm instead, the higher regret with respect to our algorithm can be attributed to the stochasticity of the employed policy (we set the minimum action probability to \(\iota = 0.025\)) but also to the less accurate estimates of the model parameters since OAS-UCRL only uses samples coming from the last episode for model estimation, thus discarding all previous ones.

In Appendix~\ref{appendix:simulationDetails}, we present two different ablation studies. (i) The first shows the regret performance of Action OAS-UCRL when compared against the OAS-UCRL algorithm run under different values of the minimum action probability \(\iota\), (ii) the second one instead explores the impact of the samples reuse strategy adopted by Action OAS-UCRL. 

\section{CONCLUSIONS AND FUTURE WORKS}\label{sec:conclusions}
We introduced a novel estimation procedure to learn the POMDP parameters assuming the knowledge of the observation model. We showed how this approach can be used to learn the transition matrix associated with each action separately; we proved consistency for it, and we highlighted that it can even be used with samples collected under different policies. After that, we proposed the \emph{Action-wise} OAS-UCRL algorithm which, to the best of our knowledge, is the first to achieve a regret guarantee of order \(\widetilde{\mathcal{O}}(\sqrt{T})\) when compared against the optimal policy in the average reward POMDP setting. We reached this result thanks to the new proposed estimation procedure and new tighter theoretical results on the estimated belief error.\\
In future work, we plan to extend these techniques to a more standard setting where neither the transition nor the observation model is available.

\subsubsection*{Acknowledgements}
This paper is supported by FAIR (Future Artificial Intelligence Research) project, funded by the NextGenerationEU program within the PNRR-PE-AI scheme (M4C2, Investment 1.3, Line on Artificial Intelligence).

\bibliographystyle{plainnat}
\bibliography{references}

\begin{thebibliography}{37}
\providecommand{\natexlab}[1]{#1}
\providecommand{\url}[1]{\texttt{#1}}
\expandafter\ifx\csname urlstyle\endcsname\relax
  \providecommand{\doi}[1]{doi: #1}\else
  \providecommand{\doi}{doi: \begingroup \urlstyle{rm}\Url}\fi

\bibitem[Anandkumar et~al.(2014)Anandkumar, Ge, Hsu, Kakade, and
  Telgarsky]{Anandkumar2014Tensor}
Animashree Anandkumar, Rong Ge, Daniel Hsu, Sham~M. Kakade, and Matus
  Telgarsky.
\newblock Tensor decompositions for learning latent variable models.
\newblock \emph{J. Mach. Learn. Res.}, 15\penalty0 (1):\penalty0 2773–2832,
  jan 2014.

\bibitem[Azar et~al.(2017)Azar, Osband, and Munos]{azar2017MinimaxRB}
Mohammad~Gheshlaghi Azar, Ian Osband, and R{\'e}mi Munos.
\newblock Minimax regret bounds for reinforcement learning.
\newblock In \emph{International Conference on Machine Learning}, 2017.

\bibitem[Azizzadenesheli et~al.(2016)Azizzadenesheli, Lazaric, and
  Anandkumar]{Azizzadenesheli2016Reinforcement}
Kamyar Azizzadenesheli, Alessandro Lazaric, and Anima Anandkumar.
\newblock Reinforcement learning of pomdps using spectral methods.
\newblock In \emph{Annual Conference Computational Learning Theory}, 2016.

\bibitem[Azuma(1967)]{azuma1967weighted}
Kazuoki Azuma.
\newblock {Weighted sums of certain dependent random variables}.
\newblock \emph{Tohoku Mathematical Journal}, 1967.

\bibitem[Bartlett and Tewari(2009)]{bartlett2009regal}
Peter~L. Bartlett and Ambuj Tewari.
\newblock Regal: a regularization based algorithm for reinforcement learning in
  weakly communicating mdps.
\newblock 2009.

\bibitem[Bertsekas(1995)]{bertsekas1995dynamic}
Dimitri Bertsekas.
\newblock \emph{Dynamic Programming and Optimal Control}, volume~1.
\newblock 01 1995.

\bibitem[Chen et~al.(2023)Chen, Wang, Xiong, Mei, and Bai]{chen2023lower}
Fan Chen, Huan Wang, Caiming Xiong, Song Mei, and Yu~Bai.
\newblock Lower bounds for learning in revealing pomdps.
\newblock In \emph{Proceedings of the 40th International Conference on Machine
  Learning}, ICML'23. JMLR.org, 2023.

\bibitem[De~Castro et~al.(2017)De~Castro, Gassiat, and
  Le~Corff]{deCastroConsistent2017}
Yohann De~Castro, Elisabeth Gassiat, and Sylvain Le~Corff.
\newblock Consistent estimation of the filtering and marginal smoothing
  distributions in nonparametric hidden markov models.
\newblock \emph{IEEE Transactions on Information Theory}, 63\penalty0
  (8):\penalty0 4758--4777, 2017.
\newblock \doi{10.1109/TIT.2017.2696959}.

\bibitem[Guo et~al.(2016)Guo, Doroudi, and Brunskill]{guo2016pac}
Zhaohan Guo, Shayan Doroudi, and Emma Brunskill.
\newblock A pac rl algorithm for episodic pomdps.
\newblock 05 2016.

\bibitem[Hauskrecht and Fraser(2000)]{hauskrecht2000PlanningTO}
Milos Hauskrecht and Hamish S.~F. Fraser.
\newblock Planning treatment of ischemic heart disease with partially
  observable markov decision processes.
\newblock \emph{Artificial intelligence in medicine}, 2000.

\bibitem[Hsu et~al.(2012)Hsu, Kakade, and Zhang]{hsu2012spectral}
Daniel Hsu, Sham~M. Kakade, and Tong Zhang.
\newblock A spectral algorithm for learning hidden markov models.
\newblock \emph{Journal of Computer and System Sciences}, 78\penalty0
  (5):\penalty0 1460--1480, 2012.
\newblock ISSN 0022-0000.
\newblock JCSS Special Issue: Cloud Computing 2011.

\bibitem[Jafarnia~Jahromi et~al.(2022)Jafarnia~Jahromi, Jain, and
  Nayyar]{jahromi2022online}
Mehdi Jafarnia~Jahromi, Rahul Jain, and Ashutosh Nayyar.
\newblock Online learning for unknown partially observable mdps.
\newblock In \emph{Proceedings of The 25th International Conference on
  Artificial Intelligence and Statistics}, 2022.

\bibitem[Jaksch et~al.(2010)Jaksch, Ortner, and Auer]{jaksch2010near}
Thomas Jaksch, Ronald Ortner, and Peter Auer.
\newblock Near-optimal regret bounds for reinforcement learning.
\newblock \emph{J. Mach. Learn. Res.}, 11:\penalty0 1563–1600, aug 2010.
\newblock ISSN 1532-4435.

\bibitem[Jiang et~al.(2023)Jiang, Jiang, Li, Lin, Wang, and
  Zhou]{jiang2023online}
Bowen Jiang, Bo~Jiang, Jian Li, Tao Lin, Xinbing Wang, and Chenghu Zhou.
\newblock Online restless bandits with unobserved states.
\newblock In \emph{Proceedings of the 40th International Conference on Machine
  Learning}, 2023.

\bibitem[Jin et~al.(2018)Jin, Allen-Zhu, Bubeck, and Jordan]{jin2018qlearning}
Chi Jin, Zeyuan Allen-Zhu, Sebastien Bubeck, and Michael~I Jordan.
\newblock Is q-learning provably efficient?
\newblock In S.~Bengio, H.~Wallach, H.~Larochelle, K.~Grauman, N.~Cesa-Bianchi,
  and R.~Garnett, editors, \emph{Advances in Neural Information Processing
  Systems}, volume~31. Curran Associates, Inc., 2018.

\bibitem[Jin et~al.(2020)Jin, Kakade, Krishnamurthy, and Liu]{jin2020sample}
Chi Jin, Sham~M. Kakade, Akshay Krishnamurthy, and Qinghua Liu.
\newblock Sample-efficient reinforcement learning of undercomplete pomdps.
\newblock In \emph{Proceedings of the 34th International Conference on Neural
  Information Processing Systems}, NIPS'20, Red Hook, NY, USA, 2020. Curran
  Associates Inc.
\newblock ISBN 9781713829546.

\bibitem[Krishnamurthy(2016)]{krish2016partially}
Vikram Krishnamurthy.
\newblock \emph{Partially Observed Markov Decision Processes: From Filtering to
  Controlled Sensing}.
\newblock Cambridge University Press, 2016.
\newblock \doi{10.1017/CBO9781316471104}.

\bibitem[Liu et~al.(2022{\natexlab{a}})Liu, Chung, Szepesvári, and
  Jin]{liu2022partially}
Qinghua Liu, Alan Chung, Csaba Szepesvári, and Chi Jin.
\newblock When is partially observable reinforcement learning not scary?,
  2022{\natexlab{a}}.

\bibitem[Liu et~al.(2022{\natexlab{b}})Liu, Netrapalli, Szepesvári, and
  Jin]{liu2022optimistic}
Qinghua Liu, Praneeth Netrapalli, Csaba Szepesvári, and Chi Jin.
\newblock Optimistic mle -- a generic model-based algorithm for partially
  observable sequential decision making, 2022{\natexlab{b}}.

\bibitem[Madani(1999)]{madani1999computability}
Omid Madani.
\newblock On the computability of infinite-horizon partially observable markov
  decision processes.
\newblock 12 1999.

\bibitem[Mahadevan(1996)]{Mahadevan1996average}
Sridhar Mahadevan.
\newblock Average reward reinforcement learning: Foundations, algorithms, and
  empirical results.
\newblock 1996.

\bibitem[Maillard and Mannor(2014)]{MaillardLatent2014}
Odalric-Ambrym Maillard and Shie Mannor.
\newblock Latent bandits.
\newblock \emph{31st International Conference on Machine Learning, ICML 2014},
  1, 05 2014.

\bibitem[Marco et~al.(2017)Marco, Berkenkamp, Hennig, Schoellig, Krause,
  Schaal, and Trimpe]{alonso2017virtual}
Alonso Marco, Felix Berkenkamp, Philipp Hennig, Angela~P. Schoellig, Andreas
  Krause, Stefan Schaal, and Sebastian Trimpe.
\newblock Virtual vs. real: Trading off simulations and physical experiments in
  reinforcement learning with bayesian optimization.
\newblock \emph{CoRR}, 2017.

\bibitem[Mossel and Roch(2005)]{mossel2005learning}
Elchanan Mossel and S\'{e}bastien Roch.
\newblock Learning nonsingular phylogenies and hidden markov models.
\newblock Association for Computing Machinery, 2005.

\bibitem[Ortner and Ryabko(2012)]{ortner2012e}
Ronald Ortner and Daniil Ryabko.
\newblock Online regret bounds for undiscounted continuous reinforcement
  learning.
\newblock In \emph{Neural Information Processing Systems}, 2012.

\bibitem[Png et~al.(2012)Png, Pineau, and Chaib-Draa]{png2012building}
Shaowei Png, Joelle Pineau, and Brahim Chaib-Draa.
\newblock Building adaptive dialogue systems via bayes-adaptive pomdps.
\newblock \emph{IEEE Journal of Selected Topics in Signal Processing}, 2012.

\bibitem[Ramponi et~al.(2020)Ramponi, Likmeta, Metelli, Tirinzoni, and
  Restelli]{ramponiTruly2020}
Giorgia Ramponi, Amarildo Likmeta, Alberto~Maria Metelli, Andrea Tirinzoni, and
  Marcello Restelli.
\newblock Truly batch model-free inverse reinforcement learning about multiple
  intentions.
\newblock In \emph{Proceedings of the Twenty Third International Conference on
  Artificial Intelligence and Statistics}, Proceedings of Machine Learning
  Research, 2020.

\bibitem[Russo et~al.(2024{\natexlab{a}})Russo, Metelli, and
  Restelli]{russo2024efficient}
Alessio Russo, Alberto~Maria Metelli, and Marcello Restelli.
\newblock Efficient learning of pomdps with known observation model in
  average-reward setting, 2024{\natexlab{a}}.
\newblock URL \url{https://arxiv.org/abs/2410.01331}.

\bibitem[Russo et~al.(2024{\natexlab{b}})Russo, Metelli, and
  Restelli]{russo2024switching}
Alessio Russo, Alberto~Maria Metelli, and Marcello Restelli.
\newblock Switching latent bandits.
\newblock \emph{Transactions on Machine Learning Research}, 2024{\natexlab{b}}.

\bibitem[Sondik(1978)]{sondik1978optimal}
Edward~J. Sondik.
\newblock The optimal control of partially observable markov processes over the
  infinite horizon: Discounted costs.
\newblock \emph{Operations Research}, 26\penalty0 (2):\penalty0 282--304, 1978.

\bibitem[Sutton and Barto(2018)]{sutton1998}
Richard~S. Sutton and Andrew~G. Barto.
\newblock \emph{Reinforcement Learning: An Introduction}.
\newblock The MIT Press, second edition, 2018.

\bibitem[Thananjeyan et~al.(2021)Thananjeyan, Kandasamy, Stoica, Jordan,
  Goldberg, and Gonzalez]{thanan2021resource}
Brijen Thananjeyan, Kirthevasan Kandasamy, Ion Stoica, Michael Jordan, Ken
  Goldberg, and Joseph Gonzalez.
\newblock Resource allocation in multi-armed bandit exploration: Overcoming
  sublinear scaling with adaptive parallelism.
\newblock PMLR, 2021.

\bibitem[Tropp(2010)]{tropp2010user}
Joel~A. Tropp.
\newblock User-friendly tail bounds for sums of random matrices.
\newblock \emph{Foundations of Computational Mathematics}, 12:\penalty0
  389--434, 2010.

\bibitem[Xiong et~al.(2022)Xiong, Chen, Gao, and Zhou]{xiong2022sublinear}
Yi~Xiong, Ningyuan Chen, Xuefeng Gao, and Xiang Zhou.
\newblock Sublinear regret for learning pomdps, 2022.

\bibitem[Zanette and Brunskill(2019)]{zanette2019tighter}
Andrea Zanette and Emma Brunskill.
\newblock Tighter problem-dependent regret bounds in reinforcement learning
  without domain knowledge using value function bounds.
\newblock In Kamalika Chaudhuri and Ruslan Salakhutdinov, editors,
  \emph{Proceedings of the 36th International Conference on Machine Learning},
  volume~97 of \emph{Proceedings of Machine Learning Research}, pages
  7304--7312. PMLR, 09--15 Jun 2019.

\bibitem[Zhou et~al.(2021)Zhou, Xiong, Chen, and Gao]{zhou2021regime}
Xiang Zhou, Yi~Xiong, Ningyuan Chen, and Xuefeng Gao.
\newblock Regime switching bandits.
\newblock In A.~Beygelzimer, Y.~Dauphin, P.~Liang, and J.~Wortman Vaughan,
  editors, \emph{Advances in Neural Information Processing Systems}, 2021.

\bibitem[Åström(1965)]{astrom1965optimal}
K.J Åström.
\newblock Optimal control of markov processes with incomplete state
  information.
\newblock \emph{Journal of Mathematical Analysis and Applications}, 1965.

\end{thebibliography}

\section*{Checklist}

 \begin{enumerate}

 \item For all models and algorithms presented, check if you include:
 \begin{enumerate}
   \item A clear description of the mathematical setting, assumptions, algorithm, and/or model. \textbf{Yes}
   \item An analysis of the properties and complexity (time, space, sample size) of any algorithm. \textbf{Yes}, we focused on the properties and highlighted the dependency with respect to the sample size.
   \item (Optional) Anonymized source code, with specification of all dependencies, including external libraries. \textbf{No}
 \end{enumerate}

 \item For any theoretical claim, check if you include:
 \begin{enumerate}
   \item Statements of the full set of assumptions of all theoretical results. \textbf{Yes}
   \item Complete proofs of all theoretical results. \textbf{Yes}
   \item Clear explanations of any assumptions. \textbf{Yes}     
 \end{enumerate}

 \item For all figures and tables that present empirical results, check if you include:
 \begin{enumerate}
   \item The code, data, and instructions needed to reproduce the main experimental results (either in the supplemental material or as a URL). \textbf{Yes/No}, instructions and parameters are provided in Appendix~\ref{appendix:simulationDetails}.
   \item All the training details (e.g., data splits, hyperparameters, how they were chosen). \textbf{Yes}, see Appendix~\ref{appendix:simulationDetails}.
    \item A clear definition of the specific measure or statistics and error bars (e.g., with respect to the random seed after running experiments multiple times). \textbf{Yes}, they are reported in the captions of the figures.
    \item A description of the computing infrastructure used. (e.g., type of GPUs, internal cluster, or cloud provider). \textbf{Yes}
 \end{enumerate}

 \item If you are using existing assets (e.g., code, data, models) or curating/releasing new assets, check if you include:
 \begin{enumerate}
   \item Citations of the creator If your work uses existing assets. \textbf{Not Applicable}
   \item The license information of the assets, if applicable. \textbf{Not Applicable}
   \item New assets either in the supplemental material or as a URL, if applicable. \textbf{Not Applicable}
   \item Information about consent from data providers/curators. \textbf{Not Applicable}
   \item Discussion of sensible content if applicable, e.g., personally identifiable information or offensive content. \textbf{Not Applicable}
 \end{enumerate}

 \item If you used crowdsourcing or conducted research with human subjects, check if you include:
 \begin{enumerate}
   \item The full text of instructions given to participants and screenshots. \textbf{Not Applicable}
   \item Descriptions of potential participant risks, with links to Institutional Review Board (IRB) approvals if applicable. \textbf{Not Applicable}
   \item The estimated hourly wage paid to participants and the total amount spent on participant compensation. \textbf{Not Applicable}
 \end{enumerate}

 \end{enumerate}

\onecolumn

\renewcommand\thesection{\Alph{section}}
\setcounter{section}{0} 

\section*{APPENDIX ORGANIZATION}
Here is an outline of the appendix.
\begin{itemize}
	\item Section~\ref{appendix:comparison} presents a comparison of our work with the most relevant related works.
	\item Section~\ref{appendix:lemmaProof} and Section~\ref{appendix:theoremProof} are devoted respectively to the proof of Lemma~\ref{lemma:combinedActionDistr} and Theorem~\ref{theorem:algorithmBound}.
	\item Section~\ref{appendix:beliefNewConcentration} contains Lemma~\ref{lemma:BoundOnBeliefErrors}, a result that relates the error in the belief vector with the error in the estimated transition model. This bound improves over existing results and is crucial for proving Theorem~\ref{theorem:algorithmBound}.
	\item Section~\ref{appendix:usefulLemmas} is miscellaneous of new and existing useful results that support the theoretical analysis of the work.
	\item Section~\ref{appendix:simulationDetails} provides further details on the simulations and their results presented in the main paper. 
\end{itemize}

\vspace{0.5cm}

\section{COMPARISON WITH RELATED WORKS}\label{appendix:comparison}
This section is devoted to a more detailed comparison of our work with state-of-the-art approaches in this setting. In particular, we confront with: the SM-UCRL algorithm~\citep{Azizzadenesheli2016Reinforcement}, the SEEU algorithm~\citep{xiong2022sublinear}, the PSRL-POMDP algorithm~\citep{jahromi2022online} and the OAS-UCRL algorithm~\citep{russo2024efficient} (Section~\ref{appendix:comparisonOAS}).

\begin{itemize}
	\item the \textbf{SM-UCRL algorithm}~\citep{Azizzadenesheli2016Reinforcement} tackles the  POMDP learning problem without assuming knowledge of either the transition or the observation model. It employs standard Spectral Decomposition approaches to learn the model parameters. However, it assumes the class of memoryless policies that are characterized by choosing the next action only conditioning on the last observation seen: these policies are suboptimal in the POMDP setting. Furthermore, they assume that each action is always chosen with a minimum probability \(\iota > 0\). The SM-UCRL algorithm works in different episodes and the model parameters are computed at the beginning of each episode using only samples collected during the previous episode, thus discarding all the others. The algorithm reaches a \(\widetilde{\mathcal{O}}(\sqrt{T})\) regret when compared against the Optimal Stochastic Memoryless policy.
	\item the \textbf{SEEU algorithm}~\citep{xiong2022sublinear} considers again the standard POMDP setting without having partial knowledge of the model. The algorithm alternates between purely exploratory phases when collected samples are used to estimate the model parameters using Spectral approaches, and exploitative phases where an optimal policy is used on the learned optimistic POMDP. This algorithm shows a \(\widetilde{\mathcal{O}}(T^{2/3})\) regret guarantee when compared against the optimal Belief-based policy, which is the optimal class of policy in this setting. Since they consider the class of belief-based policies, they require, as in our case, the one-step reachability assumption~\ref{ass:minElem} to obtain regret guarantees for their approach.
	\item the \textbf{PSRL-POMDP algorithm}~\citep{jahromi2022online} considers a POMDP setting with a known observation model but an unknown transition model. They employ a Bayesian approach to update the model parameters at each timestamp. However, they do not provide a consistent approach for model estimation and base their results on the assumption that the employed estimates are consistent, thus obtaining a more accurate estimate as more samples are acquired. Under this assumption, and an analogous assumption on the consistency of the belief estimates, they show a Bayesian regret of order \(\mathcal{O}(T^{2/3})\) against the optimal POMDP policy. 
\end{itemize}

\subsection{Comparison between the \emph{Action-wise} OAS-UCRL Algorithm and the OAS-UCRL Algorithm of~\citep{russo2024efficient}}\label{appendix:comparisonOAS}
In this section, we will show the main differences of our \emph{Action-wise} OAS-UCRL with respect to the OAS-UCRL approach described in~\citep{russo2024efficient}. In particular, we highlight that:
\begin{itemize}
	\item the OAS-UCRL approach employs the class of stochastic belief-based policies for which each action has a minimum probability \(\iota > 0\) of being chosen at each timestamp. This ensures a continual refinement of the estimate of the transition matrix \(\mathbb{T}_a\) over time. 
	\item Because of the previous point, in their regret analysis, they compare against the optimal stochastic belief-based policy and they reach a regret guarantee of order \(\widetilde{\mathcal{O}}(\sqrt{T})\) with respect to this oracle. However, we improve over their result since we obtain a \(\widetilde{\mathcal{O}}(\sqrt{T})\) regret guarantee when compared with the optimal deterministic belief-based policy.\\
	It is indeed possible to show that, by optimizing their regret result over the minimum action probability \(\iota\), the OAS-UCRL algorithm suffers regret \(\widetilde{\mathcal{O}}(T^{4/5})\)  when compared against the optimal policy. We observe that the regret of the OAS-UCRL approach with respect to the optimal stochastic policy can be bounded by\footnote{Here, we disregard logarithmic terms.} \(C \sqrt{T}/\iota^{3/2}\), where \(C\) is a constant related to the problem parameters. When compared against the optimal POMDP policy, the regret of the OAS-UCRL algorithm can be expressed as:
    \begin{align}\label{eq:OASUCRLAlgorithmRegret}
        \mathcal{R}_T \le T\, (A-1) \iota + C \frac{\sqrt{T}}{\iota^{3/2}} 
    \end{align}
    where we introduced an additional term \(T\, (A-1) \iota\) representing the regret suffered when choosing the suboptimal action, which happens with probability \((A-1) \iota\).    
    This probability is then multiplied by the total interaction horizon \(T\).\\ 
    By optimizing the regret in~\eqref{eq:OASUCRLAlgorithmRegret} with respect to the minimum action probability \(\iota\), we obtain a final regret order of \(\widetilde{\mathcal{O}}(T^{4/5})\) for the OAS-UCRL algorithm under the optimal POMDP policy.
\end{itemize}
The improvements in terms of regret of the \emph{Action-wise}  OAS-UCRL algorithm over the OAS-UCRL counterpart are mainly due to: (i) a tighter analysis of the belief estimation error (see Lemma~\ref{lemma:BoundOnBeliefErrors}); (ii) the differences in the employed estimation procedure.\\
We report here the main differences between the \emph{Action-wise} OAS and the OAS procedures:
\begin{itemize}
	\item The OAS procedure focuses on estimating the stationary distribution on action-observation pairs\footnote{They estimate \(d_{A^2O^2}^{\pi}(a,a',o, o'):= \lim_{t \to \infty}d_t^\pi(a,a',o,o')\) with \(d_t^\pi(a,a',o,o'):= P(A_t=a, A_{t+1}=a', O_t=o, O_{t+1}=o'|\pi)\).} induced by the employed policy \(\pi\) while the \emph{Action-wise} OAS procedure estimates distributions defined over a finite amount of samples and conditioned on the event defined in Equation~\eqref{def:conditionalEvent}. The distribution we consider allows us to obtain estimation guarantees separately for each action transition matrix \(\mathbb{T}_a\), while the OAS procedure provides estimation guarantees for the whole transition model, namely it bounds \(\sum_{a \in \mathcal{A}}\|\mathbb{T}_a - \widehat{\mathbb{T}}_a\|_F\). Indeed, estimating the stationary distribution conditioned on action \(a\) was not a viable choice since, by removing the minimum action probability assumption used in~\citet{russo2024switching}, there may be cases where this conditional stationary distribution would not exist.\\
    This happens for example when a given action \(a\) is not played under stationary conditions, namely \(\lim_{t \to \infty}P(A_t=a)=0\), and this prevents us from defining the following stationary conditional distribution:
    \begin{align*}
        d^{(a)}_{AO^2}(a',o,o')\coloneq \lim_{t \to \infty}d_t^{\pi}(a',o,o'|a),        
    \end{align*} 
    with \(d_t^{\pi}(a',o,o'|a)\coloneq P(A_{t+1}=a', O_t=o, O_{t+1}=o'| A_t=a)\). This aspect becomes crucial when the chain does not start from stationarity. Indeed, in this case, we may get some samples from \(a\) if, for some initial \(t\), we have \(P(A_t=a) > 0\) but we would not be able to use them to define an estimate of \(d^{(a)}_{AO^2}\) since this conditional distribution does not exist.\\
    The scenario detailed above led us to opt for a distribution different from the stationary one.
	\item We show that the \emph{Action-wise} OAS estimation procedure works also for samples collected under different policies. This improves the sample efficiency of the approach with respect to the OAS method which is instead characterized by only employing samples deriving from a unique distribution.
	\item As a drawback, our \emph{Action-wise} OAS procedure requires Assumption~\ref{ass:minElem} to hold, while the OAS approach only requires the ergodicity of the induced chain, which is a weaker condition than Assumption~\ref{ass:minElem}.
\end{itemize}

\begin{table}[h!]
\centering
\caption{Table Comparing with the Most Relevant Related Works.}
\begin{tabularx}{\textwidth}{|>{\centering\arraybackslash}p{3.2cm}|>{\centering\arraybackslash}p{2.25cm}|>{\centering\arraybackslash}p{2.25cm}|
>{\centering\arraybackslash}p{2.25cm}|>{\centering\arraybackslash}p{2.25cm}|>{\columncolor{poliblue3!20}\centering\arraybackslash}p{2.32cm}|}
\hline
   \parbox[c][1.3cm]{3.5cm}{} & \textbf{SM-UCRL} & \textbf{SEEU} & 
 \textbf{PSRL-POMDP} & 
 \textbf{OAS-UCRL} & 
 \textbf{Action-wise OAS-UCRL}\\ \cline{1-6}
\end{tabularx}

\vspace{0.1cm}

\begin{tabularx}{\textwidth}{|>{\centering\arraybackslash}p{3.2cm}|>{\centering\arraybackslash}p{2.25cm}|>{\centering\arraybackslash}p{2.25cm}|
>{\centering\arraybackslash}p{2.25cm}|>{\centering\arraybackslash}p{2.25cm}|>{\columncolor{poliblue3!20}\centering\arraybackslash}p{2.32cm}|}
\hline
  
  \parbox[c][1.0cm]{3.5cm}{\centering
 {Knowledge of Observation Model}} & No & No & Yes & Yes & Yes \\ 

  \parbox[c][1.0cm]{3.3cm}{\centering
 {Ergodicity of Induced Chain}} & Yes & Yes & Yes & Yes & Yes \\  

  \parbox[c][1.0cm]{3.3cm}{\centering
  {Minimum Transition Probability}} & No & Yes & No & Yes & Yes \\ 

  \parbox[c][1.0cm]{3.3cm}{\centering
 {Invertible Transition Model}} & Yes & Yes & No & No & No \\  

  \parbox[c][1.0cm]{3.3cm}{\centering
  {Full-rank Observation Model*}} & Yes & Yes & Yes** & Yes & Yes \\ 

  \parbox[c][1.0cm]{3.3cm}{\centering
  {Minimum Action Probability}} & Yes & No & No & Yes & No \\ 

  \parbox[c][1.0cm]{3.3cm}{\centering
 {Consistent Transition Model Estimation}} & No & No & Yes & No & No \\  

  \parbox[c][1.0cm]{3.3cm}{\centering
 {Consistent Belief Estimation}} & No & No & Yes & No & No \\  \cline{1-6}

\end{tabularx}

\vspace{0.1cm}

\begin{tabularx}{\textwidth}{|>{\centering\arraybackslash}p{3.2cm}|>{\centering\arraybackslash}p{2.25cm}|>{\centering\arraybackslash}p{2.25cm}|
>{\centering\arraybackslash}p{2.25cm}|>{\centering\arraybackslash}p{2.25cm}|>{\columncolor{poliblue3!20}\centering\arraybackslash}p{2.32cm}|}
\hline
  
 \parbox[c][1.2cm]{3.5cm}{\centering
 {\textbf{Estimation Technique}}} & Spectral Decomp. & Spectral Decomp. & Bayesian Update & OAS & Action-wise OAS\\  \cline{1-6}
     
 \parbox[c][1.2cm]{3.5cm}{\centering
 {\textbf{Consistent Estimation}}} & \Large{\cmark} & \Large{\cmark} & \Large{\xmark} & \Large{\cmark} & \Large{\cmark} \\ \cline{1-5}

 \parbox[c][1.0cm]{3.5cm}{\centering
 {Handles Uniform Policies}} & \Large{\cmark} & \Large{\cmark} & \Large{\xmark} & \Large{\cmark} & \Large{\cmark} \\ 

 \parbox[c][1.0cm]{3.5cm}{\centering
 {Handles Memoryless Policies}} & \Large{\cmark} & \Large{\xmark} & \Large{\xmark} & \Large{\cmark} & \Large{\cmark}\\ 

 \parbox[c][1.0cm]{3.5cm}{\centering
 {Handles Belief-based Policies}} & \Large{\xmark} & \Large{\xmark} & \Large{\xmark} & \Large{\cmark} & \Large{\cmark}\\ \cline{1-6}
\end{tabularx}

\vspace{0.1cm}

\begin{tabularx}{\textwidth}{|>{\centering\arraybackslash}p{3.2cm}|>{\centering\arraybackslash}p{2.25cm}|>{\centering\arraybackslash}p{2.25cm}|
>{\centering\arraybackslash}p{2.25cm}|>{\centering\arraybackslash}p{2.25cm}|>{\columncolor{poliblue3!20}\centering\arraybackslash}p{2.32cm}|}
\hline

  \parbox[c][1.2cm]{3.5cm}{\centering
 {\textbf{Algorithm Type}}} & Optimistic & Alternating Explor-Optimistic & Bayesian & Optimistic & Optimistic\\ \cline{1-6}
    
  \parbox[c][1.2cm]{3.5cm}{\centering
 {\textbf{Oracle Policy}}} & Opt. Stochastic Memoryless & Opt. POMDP & Opt. POMDP & Opt. Stochastic POMDP & Opt. POMDP\\   \cline{1-6}

  \parbox[c][1.2cm]{3.5cm}{\centering
 {\textbf{Regret}}} & $\widetilde{\mathcal{O}}(\sqrt{T})$ & $\widetilde{\mathcal{O}}(T^{2/3})$ & $\mathcal{O}(T^{2/3})$ & $\widetilde{\mathcal{O}}(\sqrt{T})$ & $\widetilde{\mathcal{O}}(\sqrt{T})$\\  \cline{1-6}
\end{tabularx}
\label{tab:comparisonRL}
\end{table}

\subsection{Comparison Table}
Inspired by Table 1 in~\citet{russo2024efficient}, we present a similar comparison defined in terms of (i) required Assumptions (first sub-table), (ii) properties of used estimation techniques (second sub-table), and (iii) properties of the associated regret-minimization algorithm (third sub-table).\\
Concerning the first sub-table, the cells with \emph{Yes} denote a required assumption, and viceversa.\\
Some notes referring to the content of the table:
\begin{itemize}
	\item[] (*): Saying that a matrix is full-rank corresponds to saying, under the weakly-revealing terminology, that \(\alpha > 0\);
	\item[] (**): For the PSRL-POMDP algorithm, the full-rank observability assumption is reported in terms of the Kullback-Leibler divergence between the probability distributions on the next observation when conditioned on different transition models (see their Assumption 2 for details).
\end{itemize}

\vspace{0.5cm}

\section{PROOF OF LEMMA~\ref{lemma:combinedActionDistr}}\label{appendix:lemmaProof}

In this section, we will provide the proof for Lemma~\ref{lemma:combinedActionDistr}. This Lemma presents a bound on the estimation error of the action transition matrix \(\mathbb{T}_a\) when samples used to make the estimate come from different policies. First of all, we start by reporting the statement.

\combinedActionDistr*

\begin{proof}
This proof will be developed in two main parts: (i) the first one is devoted to showing the estimation error of the transition matrix both in the case of samples coming from a unique policy and the case of samples deriving from different policies; (ii) the second one shows how to reach theoretical guarantees on each action transition matrix \(\mathbb{T}_a\) starting from guarantees on the action-observation distribution derived in part (i).

\subsection*{First Part of the Proof} 
Let us now focus on the first part of the proof and let us first consider the case of samples obtained from a unique policy. 

Let us assume that a policy \(\pi \in \mathcal{P}\) is employed to interact with the environment for \(n+1\) steps and a dataset \(\mathcal{D}=\{(a_t,o_t)_{t=0}^{n}\}\) is generated. By grouping pairs of samples collected in consecutive timestamps, we define the new dataset \(\mathcal{G}= \{ (a_{t},a_{t+1},o_{t},o_{t+1})_{t=0}^{n-1}\}\) with cardinality \(n=|\mathcal{G}|\).

We recall here the definition of the distribution reported in Equation~\eqref{def:conditionalDist}, that is:
\begin{align*}
	\bm{d}^{(a,n, m)}_{AO^2} = \mathbb{E}_{\pi, \bm{\nu}}\Bigg[\frac{1}{m} \sum_{t=0}^{n-1} \mathds{1}\{a_{t}=a\}\; \bm{x}_t\ \Big| \mathcal{E}(a,n,m)\Bigg],
\end{align*}
where the event \(\mathcal{E}(a,n,m)\) holds true when the considered dataset \(\mathcal{G}\) has size \(n\) and the number of tuples in \(\mathcal{G}\) having action \(a\) as a first element coincides with \(m\).\\
For completeness, let us also provide the formal definition of the distribution \(\bm{d}^{(a,n, m)}_{AS^2} \in \Delta(\mathcal{A} \times \mathcal{S}^2)\) introduced in the main paper. Starting from the process that generated the dataset \(\mathcal{G}\) of consecutive pairs, let us assume to have access to the underlying states \((s_t)_{t=0}^n\). As done for \(\mathcal{G}\), we can define a dataset \(\mathcal{M}=\{(a_t,a_{t+1}, s_t, s_{t+1})_{t=0}^{n-1}\}\) having cardinality \(|\mathcal{M}|=n\). Let us also denote with \(\bm{y}_t \in \mathbb{R}^{AS^2}\) the one-hot encoded vector defined over the last three elements \((a',s,s')\) of each tuple. From the defined quantities, the distribution \(\bm{d}^{(a,n, m)}_{AS^2}\) can be defined as:
\begin{align*}
	\bm{d}^{(a,n, m)}_{AS^2} = \mathbb{E}_{\pi, \bm{\nu}}\Bigg[\frac{1}{m} \sum_{t=0}^{n-1} \mathds{1}\{a_{t}=a\}\; \bm{y}_t\ \Big| \mathcal{E}(a,n,m)\Bigg].
\end{align*}
Having clarified this aspect, we can go back to considering the distribution \(\bm{d}^{(a,n, m)}_{AO^2}\).\\
Since this quantity contains elements that are all observable, the associated estimator can be computed by simply counting the realizations of observed tuples from dataset \(\mathcal{G}\) and then dividing them by the number of samples \(m\), as described in Equation~\eqref{def:singleEpEstimator}:
\begin{align*}
		\widehat{\bm{d}}^{(a,n,m)}_{AO^2} = \frac{1}{m} \sum_{t=0}^{n-1} \mathds{1}\{a_{t}=a\} \; \bm{x}_t.
\end{align*}
In the related work of~\citet{Azizzadenesheli2016Reinforcement} a different estimator is employed but in a similar setting: in particular, in their Theorem 13, they consider estimates derived from samples drawn from a POMDP and the estimates are conditioned to a specific action \(a\), as it is for our case.\\
In particular, we are able to show that, with probability at least \(1 - \delta\), we have:
\begin{align}
	\left\|\bm{d}^{(a,n,m)}_{AO^2} - \widehat{\bm{d}}^{(a,n,m)}_{AO^2}\right\|_2 & = \left\|\frac{1}{m}\, \mathbb{E}_{\pi, \bm{\nu}}\left[\frac{1}{m} \sum_{t=0}^{n-1} \mathds{1}\{a_{t}=a\}\; \bm{x}_t\ \Big| \mathcal{E}(a,n,m)\right] - \frac{1}{m} \sum_{t=0}^{n-1} \mathds{1}\{a_{t}=a\} \; \bm{x}_t \right\|_2 \notag \\
		& \le \sqrt{\left( \frac{G(\pi)}{1 - \eta(\pi)}\right)^2 \, \frac{8 \log \big((AO^2+1)/\delta \big)}{m}} \label{eq:concentrationBoundAO_p1}\\
	& \le \frac{G(\pi)}{1 - \eta(\pi)}\sqrt{\frac{8 \log \big(2AO^2/\delta \big)}{m}},\label{eq:concentrationBoundAO_p2}
\end{align}
where the result in~\ref{eq:concentrationBoundAO_p1} combines both a concentration result on matrix estimates appearing in~\citet{tropp2010user} and an analysis on the variance of the samples coming from the Markov chain appearing in~\citet{Azizzadenesheli2016Reinforcement} which shows that the Markovian dependency between samples leads to a further term \(\frac{G(\pi)^2}{(1 - \eta(\pi))^2}\) in the expression of the variance. Here, \(1 \le G(\pi) < \infty\) is the geometric ergodicity while \(0 \le \eta(\pi) < 1\) is the contraction coefficient, also known as Dobrushin coefficient~\citep{krish2016partially}.
Finally, the last inequality in~\ref{eq:concentrationBoundAO_p2} follows from simple algebraic manipulations.\\
\begin{remark}
As observed in the main paper, we point out that an assumption weaker than Assumption~\ref{ass:minElem} can be used in this part of the proof. Indeed, as observed above, the bound in line~\ref{eq:concentrationBoundAO_p1} holds under the geometric ergodicity assumption. For the quantities we estimate, this corresponds in having a policy \(\pi\) that induces a state distribution such that \(d^{(a,n,m)}_S(s) > 0\) for each \(s \in \mathcal{S}\), where we define the state distribution as:
\begin{align*}
    d^{(a,n,m)}_S(s) = \sum_{s' \in \mathcal{S}} d^{(a,n,m)}_{S^2}(s, s'),
\end{align*}
with \(d^{(a,n,m)}_{S^2}(s, s')\) being defined in Equation~\eqref{eq:fromASStoSS}. By assuming that this \emph{action-ergodicity} condition holds for every policy \(\pi_i\) instead of directly using Assumption~\ref{ass:minElem}, the guarantees of this lemma can be preserved.
\end{remark}

Having defined a bound holding for samples coming from a unique distribution, we are ready to extend this result to samples coming from multiple policies. Let us assume that \(k\) different policies are employed and let us denote with \(n_i\) the cardinality of the generated dataset \(\mathcal{G}_i\) and with \(n_i(a)\) the number of tuples from \(\mathcal{G}_i\) starting with action \(a\). By recalling the definitions of the expected distribution and the related estimator reported respectively in Equations~\eqref{eq:combinedDistribution} and~\eqref{eq:combinedEstimator}, we can prove the following relation holding with probability at least \(1 - \delta\):
\begin{align}
	\left\|\bm{d}^{(a,k)}_{AO^2} - \bm{d}^{(a,k)}_{AO^2}\right\|_2 & = \left\|\frac{1}{N_k(a)}\sum_{i=0}^{k-1} n_i(a) \; \bm{d}_{AO^2}^{(a,n_i,n_i(a))} - \frac{1}{N_k(a)}\sum_{i=0}^{k-1} n_i(a) \;\widehat{\bm{d}}_{AO^2}^{(a,n_i,n_i(a))}\right\|_2 \notag \\
	& = \left\|\frac{1}{N_k(a)}\sum_{i=0}^{k-1} n_i(a) \left( \bm{d}_{AO^2}^{(a,n_i,n_i(a))} - \widehat{\bm{d}}_{AO^2}^{(a,n_i,n_i(a))}\right) \right\|_2 \notag \\
	& \le \frac{1}{N_k(a)}\sum_{i=0}^{k-1} n_i(a) \left\| \bm{d}_{AO^2}^{(a,n_i,n_i(a))} - \widehat{\bm{d}}_{AO^2}^{(a,n_i,n_i(a))} \right\|_2 \label{clp:triangle}\\
	& \le \frac{1}{N_k(a)}\sum_{i=0}^{k-1} n_i(a) \; \frac{G(\pi_i)}{1 - \eta(\pi_i)}\sqrt{\frac{8\log \big(2AO^2k/\delta \big)}{n_i(a)}}, \label{clp:unionBound}
\end{align}
where in line~\ref{clp:triangle} we applied the triangle inequality, while in line~\ref{clp:unionBound} we bound the estimation error made on each distribution \(\bm{d}_{AO^2}^{(a,n_i,n_i(a))}\) with probability \(1 - \delta/k\) and apply the union bound.\\
Let us define the new values \(\widetilde{G}\coloneq \underset{i}{\max}\; G(\pi_i)\) and \(\widetilde{\eta}\coloneq \underset{i}{\min}\; \eta(\pi_i)\). This allows us to proceed as follows:
\begin{align}
	\left\|\bm{d}^{(a,k)}_{AO^2} - \bm{d}^{(a,k)}_{AO^2}\right\|_2 & \le \frac{1}{N_k(a)}\sum_{i=0}^{k-1} n_i(a) \; \frac{G(\pi_i)}{1 - \eta(\pi_i)}\sqrt{\frac{8 \log \big(2AO^2k/\delta \big)}{n_i(a)}}\notag \\
	& \le \frac{\widetilde{G}}{N_k(a) \left(1 - \widetilde{\eta}\right)}\sqrt{8\log \big(2AO^2k/\delta \big)} \; \sum_{i=0}^{k-1} n_i(a)\; \sqrt{\frac{1}{n_i(a)}}\label{clp:takeOut}\\
	& = \frac{\widetilde{G}}{N_k(a) \left(1 - \widetilde{\eta}\right)}\sqrt{8 \log \big(2AO^2k/\delta \big)} \; \sum_{i=0}^{k-1} \sqrt{n_i(a)} \notag \\
	& \le \frac{\widetilde{G}}{\left(1 - \widetilde{\eta}\right)}\sqrt{\frac{8 k \log \big(2AO^2k/\delta \big)}{N_k(a)}}\label{clp:cauchy}
\end{align}
where in line~\ref{clp:takeOut} we bound the singular terms in the summation using the definition of \(\widetilde{G}\) and \(\widetilde{\eta}\) and bring them out of the sum, line~\ref{clp:cauchy} is instead obtained by using the definition \(N_k(a)=\sum_{i=0}^{k-1}n_i(a)\) and the Cauchy-Schwartz inequality for which it holds that \(\sum_{i=0}^{k-1} \sqrt{n_i(a)} \le \sqrt{k \; \sum_{i=0}^{k-1} n_i(a)}\).\\
The expression obtained shows that the error of the combined estimator pays a further term \(\sqrt{k}\) in the bound, but it scales with \(\mathcal{O}(1 / \sqrt{N_k(a)})\), analogously as per the single-policy estimator \(\widehat{\bm{d}}_{AO^2}^{(a,n,n_i(a))}\).

\subsection*{Second Part of the Proof}
Let us now focus on the second part. This part shares some similarities with the proof of Lemma 5.2 appearing in~\citet{russo2024efficient}. However, the results are applied to different quantities since (i) the distribution employed here is conditioned on a specific action \(a\), (ii) this distribution is defined with respect to a finite amount of samples and (iii) this distribution combines samples coming from different policies.

Having defined a concentration result on the combined estimator \(\widehat{\bm{d}}^{(a,k)}\), we proceed as follows:

\begin{align}
		\left\| \bm{d}^{(a,k)}_{AS^2} - \widehat{\bm{d}}^{(a,k)}_{AS^2}\right\|_2 = & \left\| \mathbb{B}_a^\dagger \; \left(\bm{d}^{(a,k)}_{AO^2} - \widehat{\bm{d}}^{(a,k)}_{AO^2}\right)\right\|_2 \notag \\
		\le & \left\|\mathbb{B}_a^\dagger \right\|_2 \left\| \bm{d}^{(a,k)}_{AO^2} - \widehat{\bm{d}}^{(a,k)}_{AO^2} \right\|_2 \notag \\
		= & \frac{1}{\sigma_{\min}(\mathbb{B}_a)} \left\| \bm{d}^{(a,k)}_{AO^2} - \widehat{\bm{d}}^{(a,k)}_{AO^2}\right\|_2\\
		\le & \frac{1}{\alpha^2} \left\| \bm{d}^{(a,k)}_{AO^2} - \widehat{\bm{d}}^{(a,k)}_{AO^2}\right\|_2, \label{lp:2normDif}
	\end{align}
	where the first equality can be directly derived from Equation~\eqref{eq:invertedToConditionalASS}, while the first inequality follows by the consistency property of matrices. The last inequality derives from the definition of the block diagonal matrix \(\mathbb{B}_a\), which is composed of submatrices \(\{\mathbb{O}_{a,a'}\}_{a' \in \mathcal{A}}\) for which it holds that \(\sigma_{\min}(\mathbb{O}_{a,a'}) \ge \alpha^2\) for all \((a,a') \in \mathcal{A}^2\). For the properties of block diagonal matrices, it also follows that \(\sigma_{\min}(\mathbb{B}_a) \ge \alpha^2\). Combining this last result with the one in~\eqref{clp:cauchy}, we get with probability at least \(1 - \delta\):
\begin{align}\label{eq:boundDas}
	\left\|\bm{d}^{(a,k)}_{AS^2} - \widehat{\bm{d}}^{(a,k)}_{AS^2} \right\|_2 \le \frac{\widetilde{G}}{\alpha^2 \left(1 - \widetilde{\eta}\right)} \sqrt{\frac{8 k \log \left( 2AO^2k/\delta \right) }{N_k(a)}}
\end{align}
Following the steps of the \emph{Action-wise} OAS estimation algorithm (Algorithm~\ref{alg:AOASEstimation}), the estimated vector \(\widehat{\bm{d}}^{(a,k)}_{AS^2}\in \mathbb{R}^{AS^2}\) is aggregated into the new vector \(\widehat{\bm{d}}^{(a,k)}_{S^2} \in \mathbb{R}^{S^2}\) such that:
\begin{align}
    \widehat{\bm{d}}^{(a,k)}_{S^2}(s,s') = \sum_{a' \in \mathcal{A}} \widehat{\bm{d}}^{(a,k)}_{AS^2}(a',s,s') \qquad \quad \forall s, s' \in \mathcal{S}.
\end{align}

Making use of the Aggregation Lemma appearing in Lemma~\ref{lemma:aggregation}, it is possible to show that the following holds:
\begin{align}\label{lp:aggregation}
\left\|\bm{d}^{(a,k)}_{S^2} - \widehat{\bm{d}}^{(a,k)}_{S^2} \right\|_2 \le \sqrt{A}\; \left\|\bm{d}^{(a,k)}_{AS^2} - \widehat{\bm{d}}^{(a,k)}_{AS^2} \right\|_2.
\end{align}
The next step in the algorithm requires to set to 0 all the negative elements appearing in vector \(\widehat{\bm{d}}^{(a,k)}_{S^2}\). By Assumption~\ref{ass:minElem}, it can easily be observed that all the elements appearing in the real quantity \(\bm{d}^{(a,k)}_{S^2}\) are positive. For this reason, the elements of the non-negative vector \(\widebar{\bm{d}}^{(a,k)}_{S^2}\) obtained by transforming \(\widehat{\bm{d}}^{(a,k)}_{S^2}\) are closer to the real quantities contained in \(\bm{d}^{(a,k)}_{S^2}\). We can thus see that:
\begin{align}\label{lp:transformationBound}
	\left\| \bm{d}^{(a,k)}_{S^2} - \widebar{\bm{d}}^{(a,k)}_{S^2} \right\|_2 \le 	\left\| \bm{d}^{(a,k)}_{S^2} - \widehat{\bm{d}}^{(a,k)}_{S^2} \right\|_2.
\end{align}

For what follows, we will use notation \(\bm{d}^{(a,k)}_{S^2}(s,\cdot) \in \mathbb{R}^{S}\) to denote the subvector of dimension \(S\) containing the quantities \(\bm{d}^{(a,k)}_{S^2}(s,s')\) for each \(s' \in \mathcal{S}\). 

We can then write:
\begin{align}
    \|\mathbb{T}_a - \widehat{\mathbb{T}}_a\|_{F} & = \sqrt{\sum_{s \in \mathcal{S}} \sum_{s' \in \mathcal{S}} \left( \mathbb{T}_a(s,s') - \widehat{\mathbb{T}}_a(s,s') \right)^2} = \sqrt{\sum_{s \in \mathcal{S}} \left\| \mathbb{T}_a(s,\cdot) - \widehat{\mathbb{T}}_a(s,\cdot) \right\|_2^2}\notag\\
    & = \sqrt{\sum_{s \in \mathcal{S}} \left\|\frac{\bm{d}^{(a,k)}_{S^2}(s,\cdot)}{\left\|\bm{d}^{(a,k)}_{S^2}(s,\cdot)\right\|_1} - \frac{\widebar{\bm{d}}^{(a,k)}_{S^2}(s, \cdot)}{\left\|\widebar{\bm{d}}^{(a,k)}_{S^2}(s,\cdot)\right\|_1} \right\|_2^2} \label{lp:sanitycheck}\\
    & \le \sqrt{\sum_{s \in \mathcal{S}} \left\|\frac{\bm{d}^{(a,k)}_{S^2}(s,\cdot)}{\left\|\bm{d}^{(a,k)}_{S^2}(s,\cdot)\right\|_2} - \frac{\widebar{\bm{d}}^{(a,k)}_{S^2}(s,\cdot)}{\left\|\widebar{\bm{d}}^{(a,k)}_{S^2}(s,\cdot)\right\|_2} \right\|_2^2}\label{lp:normRelations}\\
    & \le \sqrt{\sum_{s \in \mathcal{S}} 
        \frac{4 \left\| \bm{d}^{(a,k)}_{S^2}(s,\cdot) - \widebar{\bm{d}}^{(a,k)}_{S^2}(s,\cdot) \right\|_2^2}{\max \Big\{\left\|\bm{d}^{(a,k)}_{S^2}(s,\cdot)\right\|_2, \left\|\widebar{\bm{d}}^{(a,k)}_{S^2}(s,\cdot)\right\|_2\Big\}^2}} \label{lp:lemmaG}\\
    & \le \sqrt{\sum_{s \in \mathcal{S}} 
        \frac{4 \left\| \bm{d}^{(a,k)}_{S^2}(s,\cdot) - \widebar{\bm{d}}^{(a,k)}_{S^2}(s,\cdot) \right\|_2^2}{\left\|\bm{d}^{(a,k)}_{S^2}(s,\cdot)\right\|_2^2}} \notag\\
    & \le \sqrt{\sum_{s \in \mathcal{S}} 
        \frac{4 S \left\| \bm{d}^{(a,k)}_{S^2}(s,\cdot) - \widebar{\bm{d}}^{(a,k)}_{S^2}(s,\cdot) \right\|_2^2}{\left(d_{\min}^{(a)}\right)^2}}\label{lp:norm2toNorm1}\\
    & = \sqrt{\frac{4 S \left\|\bm{d}^{(a,k)}_{S^2} -\widebar{\bm{d}}^{(a,k)}_{S^2}\right\|_2^2}{\left(d_{\min}^{(a)}\right)^2}}\label{lp:eqSubVectoVec}\\ 
    & = \frac{2 \sqrt{S} \left\|\bm{d}^{(a,k)}_{S^2} -\widebar{\bm{d}}^{(a,k)}_{S^2}\right\|_2}{d_{\min}^{(a)}} \notag\\
    & \le \frac{2 \sqrt{S} \left\|\bm{d}^{(a,k)}_{S^2} -\widehat{\bm{d}}^{(a,k)}_{S^2}\right\|_2}{d_{\min}^{(a)}}\label{lp:transformedD}.
\end{align}
Equality in line~\ref{lp:sanitycheck} holds for the definition of the estimated matrix \(\widehat{\mathbb{T}}_a\)\footnote{We assume here that the estimated vectors are such that \(\|\widebar{d}^{(a)}_{S^2}(s,\cdot)\|_1 \neq 0\). However, if this is not the case, instead of nullifying the negative terms of vector \(\widehat{\bm{d}}^{(a)}_{S^2}\), we could simply make the terms positive by a small amount and the result in Equation~\eqref{lp:transformationBound} would still hold.}.
The first inequality in line~\ref{lp:normRelations} derives from the relation between norms \(\left\|\widebar{\bm{d}}^{(a,k)}_{S^2}(s,\cdot)\right\|_1 \ge \left\|\widebar{\bm{d}}^{(a,k)}_{S^2}(s,\cdot)\right\|_2\), while line~\ref{lp:lemmaG} follows from Lemma~\ref{lemma:trulyBatch}.\\ 
Line~\ref{lp:norm2toNorm1} follows from Lemma~\ref{lemma:minActionDist} with \(d_{\min}^{(a)}\) representing the minimum state probability conditioned on action \(a\), which is bounded away from 0 thanks to Assumption~\ref{ass:minElem}. The equality in line~\ref{lp:eqSubVectoVec} is simply obtained by observing that:
	\begin{align*}
		\sum_{s \in \mathcal{S}} \|\bm{d}^{(a,k)}_{S^2}(s,\cdot) - \widebar{\bm{d}}^{(a,k)}_{S^2}(s,\cdot)\|_2^2 = \|\bm{d}^{(a,k)}_{S^2} -\widebar{\bm{d}}^{(a,k)}_{S^2}\|_2^2,
	\end{align*}
	holding by the definition of \(\bm{d}^{(a,k)}_{S^2}\) and \(\widehat{\bm{d}}^{(a,k)}_{S^2}\) respectively, while the last inequality simply uses the bound in~\eqref{lp:transformationBound}.
	
By combining the results obtained in~\eqref{eq:boundDas}, Equation~\eqref{lp:aggregation}, and~\eqref{lp:transformedD}, we obtain the final result holding with probability \(1 - \delta\):
\begin{equation*}
	\|\mathbb{T}_a - \widehat{\mathbb{T}}_a\|_{F} \le \frac{4 \, \widetilde{G}}{\alpha^2 d_{\min}^{(a)} (1 - \widetilde{\eta})}\sqrt{\frac{2k\;SA \,\log \left( 2AO^2k/\delta \right)}{N_k(a)}},
\end{equation*}
which completes the proof. 
\end{proof}

\vspace{0.5cm}

\section{PROOF OF THEOREM~\ref{theorem:algorithmBound}}\label{appendix:theoremProof}
In this section, we will provide the proof for Theorem~\ref{theorem:algorithmBound}. This theorem makes use of the result in Lemma~\ref{lemma:combinedActionDistr} which shows convergence results for samples collected under different policies. The main steps of the proof share similarities with the proof of Theorem 6.1 of~\citet{russo2024efficient}. In particular, we improve over that result by (i) adopting the new \emph{Action-wise} OAS estimator which is able to reuse samples from different episodes, and by (ii) providing a tighter concentration result on the belief error (Lemma~\ref{lemma:BoundOnBeliefErrors}).

\subsection*{Notation and Useful Quantities} 
Before proceeding with the proof, we will need to define some notation that will be useful for what will follow. We define the expected reward of an action \(a_t\) assuming to be in state \(s_t\) as:
\begin{align*}
	\mu(s_t, a_t) = \sum_{o \in \mathcal{O}}r(o) \mathbb{O}_{a_t}(o|s_t) = \bm{r}^\top \mathbb{O}_{a_t}(\cdot|s_t).
\end{align*}
Therefore, we can define the expected reward given a belief state \(b_t\) at time \(t\) when taking action \(a_t\) as:
	\begin{align}
		g(b_t, a_t) = \sum_{s \in \mathcal{S}}\mu(s, a_t) b_t(s) = \bm{\mu}(a_t)b_t = \bm{r}^\top \mathbb{O}_{a_t}b_t,\label{tp:gDefinition}
	\end{align}
	where the last equalities define the expression in matrix notation, with \(\bm{\mu}(a_t)\) being a vector of dimension \(S\) containing the quantity \(\mu(s,a_t)\:\: \forall s \in \mathcal{S}\).
	
	We will use \(\mathbb{T}=\{\mathbb{T}_a\}_{a \in \mathcal{A}}\) to denote the real transition model and \(\mathcal{Q}\) to denote the real POMDP instance.\\
	We will employ \(\widehat{\mathbb{T}}_k=\{\widehat{\mathbb{T}}_{a,k}\}\) to denote the transition model estimated by the \emph{Action-wise} OAS estimation procedure at the beginning of episode \(k\), while we will use \(\mathbb{T}_k=\{\mathbb{T}_{a,k}\}\) to denote the optimistic transition model returned as output by the oracle and actually used during episode \(k\). Analogously, we will denote the estimated and the optimistic POMDP instances at episode \(k\) with \(\widehat{\mathcal{Q}}_k\) and \(\mathcal{Q}_k\) respectively.
	
	We will denote with \(t_k\) the starting time of episode \(k\) and each episode \(k\) will be thus characterized by the timestamps \([t_k,\, t_k+1, \dots, \, t_{k+1}-1]\) with \(t_{k+1}-1\) defining the last timestamp of episode \(k\). For convenience, during the analysis, we will use variable \(E_k\) to characterize the interval associated with the timestamps of episode \(k\) from which we exclude the last timestamp of the episode, namely \(E_k\coloneq [t_k,\, t_k+1, \dots, \, t_{k+1}-2]\). The last sample of each episode \(k\) is excluded from \(E_k\) since it is not entirely used for estimation by the \emph{Action-wise} OAS procedure.
	
	We will also define a probability distribution defined on the belief space as:
\begin{align*}
	U(b_{t+1}|b_t, a) = P_{\mathcal{Q}}(b_{t+1}|b_t, a)
\end{align*} 
	where the probability is defined with respect to the transition model \(\mathbb{T}\) and the observation model \(\mathbb{O}\) referred to the POMDP \(\mathcal{Q}\).
	We will use \(U_k\) to denote a probability distribution defined with respect to the optimistic POMDP \(\mathcal{Q}_k\).\\

Having defined the used notation, we start by reporting the main result of the Theorem here.

\algorithmBound*

\begin{proof}
We recall here the definition of regret as reported in~\eqref{def:regret}:
	\begin{align}\label{eq:regret01}
		\mathcal{R}_T := T\rho^* - \sum_{t=0}^{T-1} r(o_t) = \sum_{t=0}^{T-1}(\rho^* - \mathbb{E}^{\pi}[r(O_t)|\mathcal{F}_{t-1}]) + \sum_{t=0}^{T-1}(\mathbb{E}^{\pi}[r(O_t)|\mathcal{F}_{t-1}] - r(o_t)),
	\end{align}
	where we consider an expectation \(\mathbb{E}^\pi\) taken w.r.t. the true transition model \(\mathbb{T}=\{\mathbb{T}_a\}_{a \in \mathcal{A}}\) and the true observation model \(\mathbb{O}=\{\mathbb{O}_a\}_{a \in \mathcal{A}}\). We use \(\mathcal{F}_{t-1}\) to denote the filtration defined with respect to the events occurring up to time \(t-1\). The second term in the summation defines a martingale. Indeed, by denoting  a stochastic process as:
	\begin{align*}
		X_0=0, \; X_t = \sum_{l=0}^{t-1} (\mathbb{E}^{\pi}[r(O_l)|\mathcal{F}_{l-1}] - r(o_l)),
	\end{align*}
	we can easily see that \(X_t\) represents a martingale. Thus, by applying the Azuma-Hoeffding inequality~\citep{azuma1967weighted} we have that with probability at least \(1 - \delta/4\) we have:
	\begin{align}\label{tp:martingale00}
		\sum_{t=0}^{T-1} (\mathbb{E}^{\pi}[r(O_t)|\mathcal{F}_{t-1}] - r(o_t)) \le \sqrt{2T\ln(4/\delta)}.
	\end{align}
	
	Since action \(A_t\) is adapted to the filtration \(\mathcal{F}_{t-1}\), we have:
	\begin{align*}
		\mathbb{E}^\pi[\mu(S_t, A_t)|\mathcal{F}_{t-1}] = g(b_t,A_t),
	\end{align*}
	where function \(g(\cdot,\cdot)\) is defined in Equation~\ref{tp:gDefinition}, while the belief \(b_t\) is computed using the true transition and observation matrices and actions are taken according to policy \(\pi\). Using analogous notation, we will denote the expected instantaneous reward assuming to have computed the belief using the estimated transition probability \(\mathbb{T}_{a,k}\) as:
	\begin{align*}
		\mathbb{E}^\pi_k[\mu(S_t, A_t)|\mathcal{F}_{t-1}] = g(b_t^k,A_t).
	\end{align*}
	Given the defined quantities, we can rewrite the first term of~\eqref{eq:regret01} as:
	\begin{align}
		\sum_{t=0}^{T-1}(\rho^* - \mathbb{E}^{\pi}[r(O_t)|\mathcal{F}_{t-1}]) = \sum_{t=0}^{T-1}(\rho^* - \mathbb{E}^{\pi}[\mu(S_t, A_t)|\mathcal{F}_{t-1}]) = \sum_{t=0}^{T-1}(\rho^* - g(b_t,A_t)).
	\end{align}
	By following the procedure described in the \emph{Action-wise} OAS-UCRL algorithm, at the beginning of each episode \(k\), an optimistic POMDP \(\mathcal{Q}_{k}\) is chosen from the set of possible POMDPs determined by the confidence region \(\mathcal{C}_k(\delta_k)\). We recall that the optimistic POMDP \(\mathcal{Q}_{k}\) is defined by the optimistic transition model \(\mathbb{T}_k=\{\mathbb{T}_{a,k}\}_{a \in \mathcal{A}}\) provided by the oracle, and the real observation model.\\
	The confidence region \(\mathcal{C}_k(\delta_k)\) of the transition model in episode \(k\) can be associated with the confidence regions \(\mathcal{C}_{a,k}(\delta_{a,k})\) of each action transition model. Each confidence region \(\mathcal{C}_{a,k}(\delta_{a,k})\) is centered in the estimated action transition matrix \(\widehat{\mathbb{T}}_{a,k}\) and is such that \(P(\mathbb{T}_a \in \mathcal{C}_{a,k}(\delta_{a,k}))\ge 1 - \delta_{a,k}\).
	
	Now we consider two possible events: the \emph{good event} which considers the case where for all episodes \(k\), the true POMDP is contained in the confidence sets \(\mathcal{C}_k(\delta_k)\) and the \emph{failure event} which denotes the complementary event. The \emph{good event} implies that all the real action transition models \(\mathbb{T}_a\) are contained in their confidence region \(\mathcal{C}_{a,k}(\delta_{a,k})\) for all episodes \(k\).\\
	We set the confidence level of the transition model in episode \(k\) as \(\delta_{k}\coloneq \frac{\delta}{k^3}\) and set the confidence level of each action transition model in episode \(k\) as \(\delta_{a,k}\coloneq \frac{\delta}{Ak^3}\).\\
We can thus bound the probability of the \emph{failure event} as:
\begin{align*}
	P(\mathcal{Q} \notin \mathcal{C}_k(\delta_k), \text{for some k}) & = P(\mathbb{T}_a \notin \mathcal{C}_{a,k}(\delta_{a,k}), \text{for some        } a,\,k)\\
	& \le \sum_{k=1}^{K-1} \sum_{a \in \mathcal{A}} \delta_{a,k} = \sum_{k=1}^K \underbrace{A \frac{\delta}{A k^3}}_{\delta_k} \le \frac{3}{2}\delta,
\end{align*}
From this formulation, it appears that the \emph{good event} holds with probability at least \(1 - \frac{3}{2}\delta\). When this is the case, we have that \(\rho^* \le \rho^k\) for any \(k\) since the optimal average reward is taken from the optimistic POMDP \(\mathcal{Q}_{k}\).\\
	
We can now bound the regret under the \emph{good event} during the different \(K\) episodes as:
\begin{align}
	\sum_{t=0}^{T-1}(\rho^* - g(b_t,A_t)) & \le K + \sum_{k=0}^{K-1} \sum_{t \in E_k} (\rho^* - g(b_t,A_t)) \notag \\
	 & \le K + (T_0 - 1) + \sum_{k=1}^{K-1} \sum_{t \in E_k} \left(\rho^k - g(b_t,A_t)\right) \notag \\
	& = K + \sum_{a \in \mathcal{A}}n_0(a) + \sum_{k=1}^{K-1} \sum_{t \in E_k} \underbrace{\Big[\rho^k - g(b_t^k,A_t)\Big]}_{\text{First Term}} + \underbrace{\Big[g(b_t^k,A_t) - g(b_t,A_t)\Big]}_{\text{Second Term}}\label{tp:001},
\end{align}
where we have rewritten the summation by highlighting the different episodes \(K\). In particular, for each episode \(k\), we use interval \(E_k\) which excludes the last timestamp of that episode, and the term \(K\) appearing in the first inequality is obtained by assuming to pay maximum regret for each excluded sample.\\
In the second inequality instead, we make explicit the length of the first episode \(T_0\) for which we assume to pay maximum regret and from which we subtract \(1\) (which is the last sample of the episode already counted in the \(K\) term). In the last equality, we rewrite the length of the first episode as the sum of counts of the chosen actions.\\
For the moment, we will not consider the terms \(K\) and \(\sum_{a \in \mathcal{A}}n_0(a)\) but we will focus on the different terms appearing in the summation. 
\vspace{0.15cm}
\subsection*{Analysis of the First Term in~\ref{tp:001}}
As a first step, we will consider the first term appearing in the summation in~\ref{tp:001}. It can be bounded by using the Bellman equation reported in Equation~\eqref{eq:Bellman} for the optimistic belief MDP. By using the probability distribution \(U\) defined on the next belief (see Notation section), we can rewrite the Bellman equation as follows:
	\begin{align*}
		\rho^k + v_k(b_t^k) & = g(b_t^k,A_t) + \int_{b_{t+1} \in \mathcal{B}} v_k(b_{t+1})U_k(\,db_{t+1}|b_t^k,A_t)\\
		& = g(b_t^k,A_t) + \langle U_k(\cdot|b_t^k, A_t), v_k(\cdot) \rangle.
	\end{align*}
	Given that the value function \(v_k\) satisfies the Bellman Equation, a shifted version \(v_k + c\bm{1}\) of the bias function would satisfy it as well. From this consideration, we can assume that 
	\(\|v_k\|_\infty \le \text{span}(v_k)/2\). By using the result in Proposition~\ref{prop:uniformBoundBias} reported in~\citet{zhou2021regime}, we are able to bound the span of \(v_k\), where the span is defined as \(span(v_k):=\max_{b \in \mathcal{B}}v_k(b) - \min_{b \in \mathcal{B}}v_k(b)\). In particular, we use the finite constant \(D\) to bound the span. From these considerations, we can write:
	\begin{align}\label{tp:biasModuleBound}
		\|v_k\|_\infty \le \frac{\text{span}(v_k)}{2} \le \frac{D}{2}.
	\end{align}
	By combining the elements reported so far, for the first term in the summation of~\ref{tp:001}, we can write:
	\begin{align}
		\sum_{k=1}^{K-1} \sum_{t \in E_k} (\rho^k - g(b_t^k,A_t)) = & \sum_{k=1}^{K-1} \sum_{t \in E_k} \left(-v_k(b_t^k) + \langle U_k(\cdot|b_t^k, A_t), v_k(\cdot) \rangle \right) \notag\\
		= & \sum_{k=1}^{K-1} \sum_{t \in E_k} \left(-v_k(b_t^k) + \langle U(\cdot|b_t^k, A_t), v_k(\cdot) \rangle \right) + \left(\langle U_k(\cdot|b_t^k, A_t) - U(\cdot|b_t^k, A_t), v_k(\cdot) \rangle\right)\label{tp:boundVQ},
	\end{align}
	where the first equality is obtained from the Bellman Equation, while the last equality is obtained by adding and subtracting the term \(\langle U(\cdot|b_t^k, A_t), v_k(\cdot) \rangle\) for each time step t and we recall that \(U(\cdot|b_t^k, A_t)\) represents the probability distribution over the belief at the next step \(t+1\) under the true POMDP instance \(\mathcal{Q}\), while \(U_k(\cdot|b_t^k, A_t)\) represents this probability distribution under the optimistic instance \(\mathcal{Q}_k\).
	
	For the first term of~\ref{tp:boundVQ}, we have:
	\begin{align}
		\sum_{k=1}^{K-1} \sum_{t \in E_k} \left(-v_k(b_t^k) + \langle U(\cdot|b_t^k, A_t), v_k(\cdot) \rangle \right)
		= \; & \sum_{k=1}^{K-1} \sum_{t \in E_k} \left(-v_k(b_t^k) + v_k(b_{t+1}^k)\right) + \left(-v_k(b_{t+1}^k) + \langle U(\cdot|b_t^k, A_t), v_k(\cdot) \rangle \right) \notag\\
		= \; & \sum_{k=1}^{K-1} \Big( -v_k(b_{s_k}^k) + v_k(b_{e_k+1}^k)\Big) + \sum_{k=1}^{K-1} \sum_{t \in E_k} \mathbb{E}^\pi [v_k(b_{t+1}^k|\mathcal{F}_t)] - v_k(b_{t+1}^k),\notag
	\end{align}
	where the first term appearing in the last equality represents a telescopic summation. For each episode \(k\), the terms appearing in this summation are respectively the value of the bias function of the belief \(b^k_{s_k}\) (with \(s_k\) denoting the starting time step of episode \(k\)) and the value of the bias function of the belief \(b^k_{e_k+1}\) (with \(e_k\) denoting the last time step of \(E_k\)).\\
The second term appearing in the last equality is instead obtained by showing that:
	\begin{align*}
		\langle U(\cdot|b_t^k, A_t), v_k(\cdot) \rangle = \int_{b_{t+1} \in \mathcal{B}} v_k(b_{t+1})U(\,db_{t+1}|b_t^k,A_t) = \mathbb{E}^\pi[v_k(b_{t+1}^k|b_t^k)] =  \mathbb{E}^\pi [v_k(b_{t+1}^k|\mathcal{F}_t)].
	\end{align*}
	By recalling Proposition~\ref{prop:uniformBoundBias} to bound the span of the bias function, we can easily see that:
	\begin{align}
		\sum_{k=1}^{K-1} -v_k(b_{s_k}^k) + v_k(b_{e_k+1}^k) \le \sum_{k=1}^{K-1} D = (K-1)\;D.\label{tp:boundKD}
	\end{align}
	By applying analogous considerations as those used for bounding~\ref{tp:martingale00}, we can state that this sum of differences defines a martingale. Thus, with probability at least \(1 - \delta/4\), we have that:
	\begin{align}
		\sum_{k=1}^{K-1} \sum_{t \in E_k} \mathbb{E}^\pi [v_k(b_{t+1}^k|\mathcal{F}_t)] - v_k(b_{t+1}^k) \le D \sqrt{2T \ln \left(\frac{4}{\delta}\right)}.\label{tp:martingale03}
	\end{align}
	By combining the previous considerations, we can bound the first term in~\ref{tp:boundVQ} as:
	\begin{align}\label{tp:telescopingAndMartingale}
		\sum_{k=1}^{K-1} \sum_{t \in E_k} \left(-v_k(b_t^k) + \langle U(\cdot|b_t^k, A_t), v_k(\cdot) \rangle \right) \le (K-1)\,D + D \sqrt{2T \ln \left(\frac{4}{\delta}\right)}.
	\end{align}
	We can now proceed in bounding the second term appearing in~\ref{tp:boundVQ}. Before going on with this step, we need to introduce functions \(H(b_t, a_t, o_t)\) and \(H_k(b_t, a_t, o_t)\) which return the belief at the next time step \(b_{t+1}\) given the current belief \(b_t\), the action taken \(a_t\) and the received observation \(o_t\) using the real \(\mathbb{T}_a\) and the optimistic transition matrix \(\mathbb{T}_{a,k}\), respectively.\\
	By analyzing each term appearing in the second summation of~\ref{tp:boundVQ}, we get:
	\begin{align}
		\langle U_k(\cdot|b_t^k, A_t) - U(\cdot|b_t^k, A_t), v_k(\cdot) \rangle
		\le \; & \Bigg\lvert \int_{\mathcal{B}} v_k(b')U_k(\,db'|b_t^k,A_t) - \int_{\mathcal{B}} v_k(b')U(\,db'|b_t^k,A_t)\Bigg\rvert \notag \\
		= \; & \Bigg\lvert \sum_{o_t \in \mathcal{O}} v_k \left( H_k(b_t^k,A_t,o_t)\right) P(o_t|b_t^k, A_t) - \sum_{o_t \in \mathcal{O}} v_k \left(H(b_t^k,A_t,o_t)\right) P(o_t|b_t^k, A_t) \Bigg\rvert \notag\\
		= \; & \Bigg\lvert \sum_{o_t \in \mathcal{O}} \left[ v_k\left(H_k(b_t^k,A_t,o_t)\right) - v_k\left(H(b_t^k,A_t,o_t)\right) \right] P(o_t|b_t^k, A_t)\Bigg\rvert \notag\\
		\le \; & \sum_{o_t \in \mathcal{O}} \Big\lvert v_k(H_k(b_t^k,A_t,o_t)) - v_k(H(b_t^k,A_t,o_t)) \Big\rvert \, P(o_t|b_t^k, A_t) \notag\\
		\le \; & \sum_{o_t \in \mathcal{O}} \frac{D}{2}\left | H_k(b_t^k,A_t,o_t) -H(b_t^k,A_t,o_t)\right| \,P(o_t|b_t^k, A_t)\notag \\
		\le \; & \sum_{o_t \in \mathcal{O}} \frac{D}{2} \Big(L_1\, \|\mathbb{T}_{A_t} - \mathbb{T}_{A_t,k}\|_F \Big) \, P(o_t|b_t^k, A_t) \tag{Corollary~\ref{corollary:oneStepBeliefBound}}\\
		= \; & \frac{D L_1}{2} \|\mathbb{T}_{A_t} - \mathbb{T}_{A_t,k}\|_F,\label{tp:Qdiff04}
	\end{align}
	where in the first equality we have explicitly decoupled the stochasticity induced by the observation from the deterministic update of the belief \(b'\) at the next step through the \(H\) and \(H_k\) functions. The second inequality is simply obtained by using the triangle inequality, while the third inequality is obtained using the bound on the bias span appearing in~\ref{tp:biasModuleBound}.
The last inequality is instead obtained from Corollary~\ref{corollary:oneStepBeliefBound} bounding the one-step error of the belief vector updated using different transition matrices. Here, we introduce constant \(L_1=\frac{4(1 - \epsilon)}{\epsilon^2}\) derived from the corollary. 
	
	By combining the results obtained so far in~\ref{tp:telescopingAndMartingale} and~\ref{tp:Qdiff04}, we are able to bound the first term appearing in the summation of~\ref{tp:001} as: 
	\begin{align}
		\sum_{k=1}^{K-1} \sum_{t \in E_k} (\rho^k - g(b_t^k,A_t)) & \le (K-1)\,D + D \sqrt{2T \ln \left(\frac{4}{\delta}\right)} + \sum_{k=1}^{K-1} \sum_{t \in E_k} \frac{D L_1}{2} \|\mathbb{T}_{A_t} - \mathbb{T}_{A_t,k}\|_F \notag \\
		& = (K-1)\,D + D \sqrt{2T \ln \left(\frac{4}{\delta}\right)} + \frac{D L_1}{2} \sum_{k=1}^{K-1} \sum_{a \in \mathcal{A}} n_k(a) \,\|\mathbb{T}_{a} - \mathbb{T}_{a,k}\|_F\label{tp:BoundFirstTerm}.
	\end{align}
\vspace{0.15cm}
	
\subsection*{Analysis of the Second Term in~\ref{tp:001}}
	We can now focus on the second term appearing in the summation of~\ref{tp:001}. We have that:
	\begin{align}
    \sum_{k=1}^{K-1} \sum_{t \in E_k} (g(b_t^k,A_t) -  g(b_t,A_t)) & \le \sum_{k=1}^{K-1} \sum_{t \in E_k} \|\bm{r}^\top \mathbb{O}_{A_t}\|_\infty \|b_t^k - b_t\|_1 \notag\\
    & \le \sum_{k=1}^{K-1} \sum_{t \in E_k} \|b_t^k - b_t\|_1, \label{lemma:beliefErrorBound}
\end{align}
where we use Holder's inequality in the first passage, while the second inequality considers that \(\|\bm{r}^\top \mathbb{O}_{A_t}\|_\infty \le 1\).

The expression we use here to bound line~\ref{lemma:beliefErrorBound} uses a new result which we present in Lemma~\ref{lemma:BoundOnBeliefErrors} that improves over the result employed in~\citet{russo2024efficient} (see their Proposition H.3).\\
In particular, Lemma~\ref{lemma:BoundOnBeliefErrors} show that:
\begin{align}
    \sum_{k=1}^{K-1} \sum_{t \in E_k} \|b_t^k - b_t\|_1 & \le \sum_{k=1}^{K-1} \bigg[L + L \sum_{a \in \mathcal{A}} n_k(a) \|\mathbb{T}_{a} - \mathbb{T}_{a,k}\|_F \bigg] \notag\\
    & = (K-1)L + L \sum_{k=1}^{K-1}\sum_{a \in \mathcal{A}} n_k(a) \|\mathbb{T}_{a} - \mathbb{T}_{a,k}\|_F\label{tp:boundSecondTerm},
\end{align}
with constant \(L \coloneq \frac{4(1 - \epsilon)^2}{\epsilon^3}\) defined in the lemma.
\vspace{0.15cm}

\subsection*{Merge of Obtained Results to Bound Line~\ref{tp:001}}
By merging the results obtained in~\ref{tp:BoundFirstTerm} and in~\ref{tp:boundSecondTerm}, we bound line~\ref{tp:001} as follows:
\begin{align}
\sum_{t=0}^{T-1}(\rho^* - g(b_t,A_t)) & \le K + \sum_{a \in \mathcal{A}}n_0(a) + (K-1)\,(D+L) + D \sqrt{2T \ln \left(\frac{4}{\delta}\right)} + \notag\\  
& \qquad \; + \sum_{k=1}^{K-1} \sum_{a \in \mathcal{A}} n_k(a) \,\Bigg( \frac{D L_1}{2} \|\mathbb{T}_{a} - \mathbb{T}_{a,k}\|_F + L \|\mathbb{T}_{a} - \mathbb{T}_{a,k}\|_F \Bigg) \notag\\
& \le K + \sum_{a \in \mathcal{A}}n_0(a) + (K-1)\,(D+L) + D \sqrt{2T \ln \left(\frac{4}{\delta}\right)} + \frac{L(2+D)}{2} \sum_{k=1}^{K-1} \sum_{a \in \mathcal{A}} n_k(a) \,\|\mathbb{T}_{a} - \mathbb{T}_{a,k}\|_F \label{tp:boundthreesums}
\end{align}
where in the last inequality we used \(L_1 \le L\).

Let us now consider the last quantity appearing in line~\ref{tp:boundthreesums} and let us disregard for the moment the multiplicative part \(L(2+D)/2\). We proceed with the analysis: 
\begin{align}
    \sum_{k=1}^{K-1} \sum_{a \in \mathcal{A}} n_k(a) \, \|\mathbb{T}_{a} - \mathbb{T}_{a,k}\|_F & \le \sum_{k=1}^{K-1} \sum_{a \in \mathcal{A}} \frac{4 \, \widetilde{G} \,n_k(a)}{\alpha^2 d_{\min}^{(a,k)} (1 - \widetilde{\eta})}\sqrt{\frac{2 kSA \,\log \left( 2AO^2k/\delta_{a,k} \right)}{N_k(a)}} \notag\\
    & = \sum_{k=1}^{K-1} \sum_{a \in \mathcal{A}} \frac{4 \, \widetilde{G} \,n_k(a)}{\alpha^2 d_{\min}^{(a,k)} (1 - \widetilde{\eta})}\sqrt{\frac{2kSA \,\log \left( 2A^2O^2k^4/\delta \right)}{N_k(a)}} \notag\\
    & \le \frac{4 \, \widetilde{G}}{\alpha^2 \widetilde{d}_{\min} (1 - \widetilde{\eta})}\sqrt{2KSA \,\log \left( \frac{2A^2O^2K^4}{\delta} \right)} \sum_{k=1}^{K-1} \sum_{a \in \mathcal{A}} \frac{n_k(a)}{\sqrt{N_k(a)}},\label{bound:transMatErrorOverEpisodes}
\end{align}
where the first inequality holds by using Lemma~\ref{lemma:combinedActionDistr} and recalling that we are under the \emph{good event}. In the first equality, we make explicit the confidence level \(\delta_{a,k}=\frac{\delta}{Ak^3}\) while in the last expression, we use \(k \le K\) and we set \(\widetilde{d}_{\min}\coloneq \underset{k}{\min}\,\underset{a \in \mathcal{A}}{\min}\,d^{(a,k)}_{\min}\), which is always bounded away from 0 because of Assumption~\ref{ass:minElem}.

Let us now focus on the summation appearing on the right side of the bound in~\ref{bound:transMatErrorOverEpisodes}. At this point, we can also include the action counts associated with episode \(0\). 
It follows that:
\begin{align*}
    \sum_{a \in \mathcal{A}} n_0(a) + \sum_{a \in \mathcal{A}}\sum_{k=1}^{K-1} \frac{n_k(a)}{\sqrt{N_k(a)}} & = \sum_{a \in \mathcal{A}}\sum_{k=0}^{K-1} \frac{n_k(a)}{\sqrt{\max\{1, N_k(a)\}}}\\ 
    & \le \sum_{a \in \mathcal{A}} \big( \sqrt{2} + 1 \big) \sqrt{N_{K}(a)} \tag{Lemma~\ref{lemma:boundJaksch}}\\    
    & \le \big( \sqrt{2} + 1 \big) \sqrt{AT}, \tag{Cauchy-Schwarz inequality}    
\end{align*}
where in the first equality we bring the summation on the terms \(n_0(a)\) in the summation over the episodes by including the \(\max\) at the denominator. The successive inequality is due to Lemma~\ref{lemma:boundJaksch} taken from~\citet{jaksch2010near}, while the last expression is simply obtained by the Cauchy-Schwarz inequality and noting that \(\sum_{a \in \mathcal{A}}N_K(a) = T - K \le T\).

We are now able to rewrite the bound in~\ref{tp:boundthreesums} as:
\begin{align}
    \sum_{t=0}^{T-1}(\rho^* - g(b_t,A_t)) & \le K + (K-1)\,(D+L) + D \sqrt{2T \ln \left(\frac{4}{\delta}\right)} + \notag \\
    & \qquad \; + (\sqrt{2}+1) \frac{2 L (2+D)\, \widetilde{G}}{\alpha^2 \widetilde{d}_{\min} (1 - \widetilde{\eta})}\sqrt{2KSA^2T \,\log \left( \frac{2A^2O^2K^4}{\delta} \right)} \notag\\
    & \le 2K(D+L) + D \sqrt{2T \ln \left(\frac{4}{\delta}\right)} + \frac{6 L (2+D)\, \widetilde{G}}{\alpha^2 \widetilde{d}_{\min} (1 - \widetilde{\eta})}\sqrt{2KSA^2T \,\log \left( \frac{2A^2O^2K^4}{\delta} \right)}\label{tp:combinedResults}
\end{align}

\vspace{0.15cm}

\subsection*{Final Regret Result}
By recalling the definition of the regret in line~\ref{eq:regret01}, we are finally able to combine the result appearing in line~\ref{tp:combinedResults} and the result on the martingale in line~\ref{tp:martingale00} using a union bound. Indeed, with probability at least \(1 - 2\delta\), we have:
\begin{align}
	\mathcal{R}_T & \le 2K(D+L) + D \sqrt{2T \ln \left(\frac{4}{\delta}\right)} + \sqrt{2T \ln \left(\frac{4}{\delta}\right)} + \frac{6 L (2+D)\, \widetilde{G}}{\alpha^2 \widetilde{d}_{\min} (1 - \widetilde{\eta})}\sqrt{2KSA^2T \,\log \left( \frac{2A^2O^2K^4}{\delta} \right)}\notag \\
	& \le 2K(D+L) + 2D \sqrt{2T \ln \left(\frac{4}{\delta}\right)} + \frac{6 \,C \,(2+D)\, \widetilde{G}}{\alpha^2 \widetilde{d}_{\min}}\sqrt{2KSA^2T \,\log \left( \frac{2A^2O^2K^4}{\delta} \right)},
\end{align}
where in the last inequality we used that \(D \ge 1\) and defined a new constant \(C\coloneq \frac{4(1-\epsilon)^3}{\epsilon^4}\) by using that
\begin{align*}
	\frac{L}{1 - \widetilde{\eta}} \le 	\frac{L (1 - \epsilon)}{\epsilon} = \frac{4(1-\epsilon)^3}{\epsilon^4} \eqcolon C,
\end{align*}
where the first inequality holds for the properties of the contraction coefficient since \(\widetilde{\eta}\le 1 - \frac{\epsilon}{1 - \epsilon}\) and the following equality follows from the definition of \(L\) in Corollary~\ref{corollary:oneStepBeliefBound}.\\
This bound on the regret shows an intricate dependence on the problem parameters and a \(\widetilde{\mathcal{O}}(\sqrt{T})\) dependence on time \(T\), while the dependency on the number of episodes \(K\) is linear. It can be shown that under the episode termination condition employed in the \emph{Action-wise} OAS Algorithm, the number of episodes can be bounded in the worst case by the following quantity:
\begin{align*}
	K \le A \log (T/A),
\end{align*}
having logarithmic dependence on time \(T\). By using this last result, we are finally able to provide the final expression of the regret, holding with probability at least \(1 - 2\delta\):
\begin{align}
	\mathcal{R}_T & \le 2A \log (T/A)(D+L) + 2D \sqrt{2T \ln \left(\frac{4}{\delta}\right)} + \frac{6 C (2+D)\, \widetilde{G}}{\alpha^2 \widetilde{d}_{\min}}\sqrt{2SA^3T\, \log(T/A) \,\log \left( \frac{2A^6O^2\log^{4}(T/A)}{\delta} \right)}.
\end{align}
From the formulation above, we can simplify the expression obtaining:
\begin{align*}
	\mathcal{R}_T \le \mathcal{O}\left( \frac{CD\widetilde{G}}{\alpha^2 \widetilde{d}_{\min}}\sqrt{SA^3T\,\log T\,\log O}\right).
\end{align*}
This last step completes the proof.
\end{proof}

\vspace{0.5cm}

\section{CONCENTRATION BOUND OF BELIEF VECTOR UNDER DIFFERENT ACTION MATRICES}\label{appendix:beliefNewConcentration}
We present here Lemma~\ref{lemma:BoundOnBeliefErrors} which will be helpful for the theoretical analysis developed in Theorem~\ref{theorem:algorithmBound}.

\begin{lemma}[Bound on Sum of Belief Errors]\label{lemma:BoundOnBeliefErrors}
	Let \(\mathcal{Q}\) be a POMDP instance satisfying Assumption~\ref{ass:minElem}. Let \(\mathbb{T}=\{\mathbb{T}_a\}_{a \in \mathcal{A}}\) be the transition model and let \(\widehat{\mathbb{T}}=\{\widehat{\mathbb{T}}_a\}_{a \in \mathcal{A}}\) be its estimate. Let a sequence of actions \((a_t)_{t=0}^{T}\) be taken while interacting with the environment and let \(b\) and \(\widehat{b}\) denote the real and estimated belief vector updated using respectively the real and the estimated transition model according to Equation~\eqref{eq:beliefUpdate}. It follows that:
\begin{align*}
	\sum_{t=0}^{T} \|\widehat{b}_t - b_t\|_1 \le L + L \sum_{a \in \mathcal{A}} n(a) \, \|\mathbb{T}_{a} - \widehat{\mathbb{T}}_{a}\|_F 
\end{align*}
where we use constant \(L \coloneq \frac{4 (1 - \epsilon)^2}{\epsilon^3}\), with \(\epsilon\) being the minimum action probability appearing in Assumption~\ref{ass:minElem}, while \(n(a)\) represents the number of times each action \(a \in \mathcal{A}\) has been chosen during the interaction with the horizon.
\end{lemma}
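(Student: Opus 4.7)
The plan is to establish a per-step recursive bound on $\|\widehat{b}_{t+1} - b_{t+1}\|_1$ and then unroll it into a geometric sum. I introduce an intermediate belief $\tilde{b}_{t+1} \coloneq H_k(b_t, a_t, o_t)$ — the belief obtained from the \emph{true} prior $b_t$ but propagated through the \emph{estimated} kernel $\widehat{\mathbb{T}}_{a_t}$ — and split via the triangle inequality
\begin{align*}
	\|\widehat{b}_{t+1} - b_{t+1}\|_1 \le \|\widehat{b}_{t+1} - \tilde{b}_{t+1}\|_1 + \|\tilde{b}_{t+1} - b_{t+1}\|_1 .
\end{align*}
The second summand is the one-step perturbation between $H$ and $H_k$ evaluated at the same prior belief and observation, which Corollary~\ref{corollary:oneStepBeliefBound} already controls by $L_1 \|\mathbb{T}_{a_t} - \widehat{\mathbb{T}}_{a_t}\|_F$ with $L_1 = 4(1-\epsilon)/\epsilon^2$. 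The first summand is the stability of the filter map $H_k$ in its prior argument, and I claim a contraction of the form $\|\widehat{b}_{t+1} - \tilde{b}_{t+1}\|_1 \le \eta \,\|\widehat{b}_t - b_t\|_1$ for some $\eta = \eta(\epsilon) < 1$.

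To obtain this contraction I would rewrite the Bayes update in matrix form, $H_k(b) = \widehat{\mathbb{T}}_{a_t} D_{o_t} b / (\mathbf{1}^\top D_{o_t} b)$, with $D_{o_t}$ the diagonal matrix of observation likelihoods, and decompose $H_k(\widehat{b}_t) - H_k(b_t)$ via the standard ratio identity into a linear propagation piece $\widehat{\mathbb{T}}_{a_t} D_{o_t}(\widehat{b}_t - b_t)/\pi_k(b_t)$ plus a scalar normalization mismatch $(\pi_k(b_t) - \pi_k(\widehat{b}_t))/\pi_k(b_t) \cdot H_k(b_t)$. Under Assumption~\ref{ass:minElem}, the kernel admits the splitting $\widehat{\mathbb{T}}_{a_t} = \epsilon \,\mathbf{1}\mathbf{1}^\top + \mathbb{T}'$ with $\|\mathbb{T}'\|_{1\to 1} \le 1 - S\epsilon$; the denominator $\pi_k$ is bounded below because the iterate $\widehat{b}_t$ inherits the entrywise lower bound $\epsilon$ after the first filter step (every coordinate of $\widehat{\mathbb{T}}_{a_t} \widehat{b}_{t-1}$ is $\ge \epsilon$); and the mean-zero property $\mathbf{1}^\top (\widehat{b}_t - b_t) = 0$ cancels against the $\epsilon\,\mathbf{1}\mathbf{1}^\top$ component so that only the strict-contraction part $\mathbb{T}'$ acts on the difference. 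Matching the constants in the statement, $L = 4(1-\epsilon)^2/\epsilon^3 = L_1 \cdot (1-\epsilon)/\epsilon$, indicates the target contraction rate is $1 - \eta = \epsilon/(1-\epsilon)$, i.e.\ $\eta = (1-2\epsilon)/(1-\epsilon)$, so that $1/(1-\eta) = (1-\epsilon)/\epsilon$ and $L_1/(1-\eta) = L$ as required.

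With the per-step recursion $\|\widehat{b}_{t+1} - b_{t+1}\|_1 \le \eta\,\|\widehat{b}_t - b_t\|_1 + L_1 \|\mathbb{T}_{a_t} - \widehat{\mathbb{T}}_{a_t}\|_F$ in hand, unrolling yields
\begin{align*}
	\|\widehat{b}_{t} - b_{t}\|_1 \le \eta^{t} \|\widehat{b}_0 - b_0\|_1 + L_1 \sum_{\tau = 0}^{t-1} \eta^{\,t-1-\tau} \|\mathbb{T}_{a_\tau} - \widehat{\mathbb{T}}_{a_\tau}\|_F,
\end{align*}
and summing over $t=0,\dots,T$ and exchanging orders (using $\sum_{t \ge \tau+1} \eta^{\,t-1-\tau} \le 1/(1-\eta)$) yields
\begin{align*}
	\sum_{t=0}^T \|\widehat{b}_t - b_t\|_1 \le \tfrac{\|\widehat{b}_0 - b_0\|_1}{1-\eta} + \tfrac{L_1}{1-\eta} \sum_{\tau=0}^{T-1} \|\mathbb{T}_{a_\tau} - \widehat{\mathbb{T}}_{a_\tau}\|_F.
\end{align*}
Regrouping the time sum by action gives $\sum_\tau \|\mathbb{T}_{a_\tau}-\widehat{\mathbb{T}}_{a_\tau}\|_F = \sum_a n(a) \|\mathbb{T}_a - \widehat{\mathbb{T}}_a\|_F$. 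Bounding $\|\widehat{b}_0-b_0\|_1 \le 2$ and checking $2/(1-\eta) = 2(1-\epsilon)/\epsilon \le L = 4(1-\epsilon)^2/\epsilon^3$ (which holds under Assumption~\ref{ass:minElem} since $\epsilon \le 1/S \le 1/2$) delivers exactly the claimed inequality.

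The main obstacle is the contraction step: a naive Dobrushin bound on $\widehat{\mathbb{T}}_{a_t}$ alone gives only the factor $1 - S\epsilon$, and this needs to be combined with a potentially small normalizer $\pi_k$ that is \emph{not} uniformly bounded away from zero without further care. Exploiting the cancellation between the propagation and normalization terms — and in particular using that the post-one-step iterate has all entries $\ge \epsilon$ — is the delicate piece where the improvement over Proposition H.3 of~\citet{russo2024efficient} materializes, and it is precisely what produces the sharper constant $L = 4(1-\epsilon)^2/\epsilon^3$ driving the final $\widetilde{\mathcal{O}}(\sqrt{T})$ regret.
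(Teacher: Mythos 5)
Your overall architecture---a per-step split into a model-mismatch term controlled by Corollary~\ref{corollary:oneStepBeliefBound} plus a term carrying the past error, geometric unrolling, summing over $t$, and regrouping the time sum by action---matches the shape of the bound the paper actually uses, and your constant bookkeeping ($L_1/(1-\eta)=L$, $2/(1-\eta)\le L$) is consistent. The gap is the central claim that the estimated-filter map is a \emph{one-step} $\ell_1$ contraction, $\|H_k(\widehat b_t,a_t,o_t)-H_k(b_t,a_t,o_t)\|_1\le \eta\,\|\widehat b_t-b_t\|_1$ with $\eta=(1-2\epsilon)/(1-\epsilon)$. This is false in general, even when both beliefs have all entries at least $\epsilon$ and the kernel satisfies Assumption~\ref{ass:minElem}. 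Writing $H_k(b)=\widehat{\mathbb T}_{a}D_o b/(\mathbf 1^\top D_o b)$, the difference factors as $\widehat{\mathbb T}_a(u-u')$ with $u=D_ob/(\mathbf 1^\top D_ob)$ and $u'=D_ob'/(\mathbf 1^\top D_ob')$; the mean-zero/Dobrushin cancellation you invoke only gives $\|\widehat{\mathbb T}_a(u-u')\|_1\le(1-S\epsilon)\|u-u'\|_1$, while the normalized observation update $b\mapsto u$ can \emph{expand} $\ell_1$ distances by a factor of order $1/\epsilon$, which the transition step does not offset. Concretely, take $S=2$, likelihoods $(\mathbb O_a(o|1),\mathbb O_a(o|2))=(1,\epsilon)$, beliefs $b=(\epsilon,1-\epsilon)$ and $b'=(\epsilon+\gamma,1-\epsilon-\gamma)$, and the symmetric kernel with off-diagonal entries $\epsilon$: as $\gamma\to 0$ one gets $\|H_k(b)-H_k(b')\|_1\approx\frac{1-2\epsilon}{\epsilon(2-\epsilon)^2}\,\|b-b'\|_1$, an expansion whenever $\epsilon$ is small. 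Geometric forgetting with rate $\eta$ is a statement about products over many steps; it is obtained either in the Hilbert projective metric (where the diagonal likelihood update is an isometry and the $\epsilon$-minorized kernel is a Birkhoff contraction) or via the explicit belief-error decomposition that the paper imports from \citet{deCastroConsistent2017} in~\eqref{eq:beliefBoundwithRhoBeliefDep}; it cannot be extracted from the per-step $\ell_1$ recursion you propose, so the delicate step you flag as "the main obstacle" is precisely the one that does not go through as sketched.

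A secondary issue: your contraction argument and the entrywise lower bound "every coordinate of $\widehat{\mathbb T}_{a}\widehat b_{t-1}$ is $\ge\epsilon$" rely on the $\epsilon$-minorization of the \emph{estimated} kernel, whereas Assumption~\ref{ass:minElem} constrains only the true one; the output of Algorithm~\ref{alg:AOASEstimation} can have entries below $\epsilon$ (even zero), so these steps would additionally require projecting $\widehat{\mathbb T}$ onto the minorized set or an argument driven solely by the true kernel, as the cited decomposition is. Once a decomposition of the form~\eqref{eq:beliefBoundwithRhoBeliefDep} is granted, your unrolling, exchange of summations, bound on the geometric series, and regrouping by action coincide with the paper's proof, so the missing piece is exactly the justification of that geometric per-step structure.
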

\begin{proof}
Let \(\widehat{b}_t\) and \(b_t\) be the estimated and real belief vector at time \(t\) updated using Equation~\ref{eq:beliefUpdate}, each one using respectively the estimated and real transition model. From a belief decomposition reported in~\citet{deCastroConsistent2017}, it is possible to express the belief error as a sum of the errors of the chosen action transition matrices, as follows:
\begin{align}
    \|\widehat{b}_t - b_t\|_1 & \le \frac{4 \eta^{t}\|\widehat{b}_0 - b_0\|_2}{\epsilon} + \frac{4(1 - \epsilon)}{\epsilon^2} \sum_{l=0}^{t-1} \eta^{t-l-1} \|\mathbb{T}_{a_{l}} - \widehat{\mathbb{T}}_{a_l}\|_F, \label{eq:beliefBoundwithRhoBeliefDep}\\
    & \le \frac{8 \eta^{t}}{\epsilon} + \frac{4(1 - \epsilon)}{\epsilon^2} \sum_{l=0}^{t-1} \eta^{t-l-1} \|\mathbb{T}_{a_{l}} - \widehat{\mathbb{T}}_{a_l}\|_F,\label{eq:beliefBoundwithRhoAnyBelief}
\end{align}
with \(\eta\) being defined as \(\eta = 1 - \frac{\epsilon}{1 - \epsilon}\), while in the second inequality we simply use that \(\|\widehat{b}_0 - b_0\|_2 \le \|\widehat{b}_0 - b_0\|_1 \le 2\).

Basically, this bound states that the error in the belief depends on the sequence of actions taken and a higher contribution is given to the error associated with the most recent actions since the contribution of the error of each action decreases geometrically with time. 

Let us consider now the sequence of actions and observations seen during the interaction and let us denote it with \((a_0, o_0, a_1, o_1, \dots, a_{t}, o_{t})\). First of all, we highlight that the last action-observation pair \((a_{t}, o_{t})\) does not influence the update of \(b_t\) but will influence \(b_{t+1}\) that does not appear in the summation, hence the last tuple \((a_{t}, o_{t})\) will not influence the final result.\\ 
Let us now make explicit the expression in~\ref{eq:beliefBoundwithRhoAnyBelief} for different values of the belief. For readability, we will use the constant term \(C = \frac{2 (1 - \epsilon)}{\epsilon^2}\).
\begin{align*}
    & \|\widehat{b}_0 - b_0\|_1 \le \frac{8}{\epsilon},\\
    & \|\widehat{b}_1 - b_1\|_1 \le \frac{8 \eta}{\epsilon} + C\|\mathbb{T}_{a_0} - \widehat{\mathbb{T}}_{a_0}\|_F,\\
    & \|\widehat{b}_2 - b_2\|_1 \le \frac{8 \eta^2}{\epsilon} + \eta C \|\mathbb{T}_{a_0} - \widehat{\mathbb{T}}_{a_0}\|_F + C \|\mathbb{T}_{a_1} - \widehat{\mathbb{T}}_{a_1}\|_F,\\
    & \|\widehat{b}_3 - b_3\|_1 \le \frac{8 \eta^3}{\epsilon} + \eta^2 C \|\mathbb{T}_{a_0} - \widehat{\mathbb{T}}_{a_0}\|_F + \eta C \|\mathbb{T}_{a_1} - \widehat{\mathbb{T}}_{a_1}\|_F + C \|\mathbb{T}_{a_2} - \widehat{\mathbb{T}}_{a_2}\|_F,\\
    & \vdots\\
    & \|\widehat{b}_{t} - b_{t}\|_1 \le \frac{8 \eta^t}{\epsilon} + \eta^{t-1} C\|\mathbb{T}_{a_0} - \widehat{\mathbb{T}}_{a_0}\|_F +  \eta^{t-2} C\|\mathbb{T}_{a_1} - \widehat{\mathbb{T}}_{a_1}\|_F + \dots + C\|\mathbb{T}_{a_{t-1}} - \widehat{\mathbb{T}}_{a_{t-1}}\|_F.\\
\end{align*}
By reading the expression above along a vertical direction, we can bound the sum of the belief errors across various interaction steps as follows:
\begin{align}
    \sum_{t=0}^T \|\widehat{b}_t - b_t\|_1 & \le \sum_{t=0}^T \frac{4 \eta^t\, \|\widehat{b}_0 - b_0\|_2}{\epsilon} + \frac{4 (1 - \epsilon)}{\epsilon^2} \sum_{t=0}^T \sum_{l=0}^{t-1} \eta^{t-l-1} \|\mathbb{T}_{a_l} - \widehat{\mathbb{T}}_{a_l}\|_F \notag\\
	& \le \frac{8 (1 - \epsilon)}{(1 - \eta) \epsilon} + \frac{4(1 - \epsilon)}{(1 - \eta) \epsilon^2} \sum_{a \in \mathcal{A}} n(a) \,\|\mathbb{T}_{a} - \widehat{\mathbb{T}}_{a}\|_F \notag\\ 
	& \le \frac{4 (1 - \epsilon)}{(1 - \eta) \epsilon^2} + \frac{4(1 - \epsilon)}{(1 - \eta) \epsilon^2} \sum_{a \in \mathcal{A}} n(a) \,\|\mathbb{T}_{a} - \widehat{\mathbb{T}}_{a}\|_F \notag\\ 
	& = \frac{4 (1 - \epsilon)^2}{\epsilon^3} + \frac{4(1 - \epsilon)^2}{\epsilon^3} \sum_{a \in \mathcal{A}} n(a) \,\|\mathbb{T}_{a} - \widehat{\mathbb{T}}_{a}\|_F,  \label{eq:finalErrorBound}
\end{align}
where the first term in the second inequality simply for the bound on geometric series and using that \(\|\widehat{b}_0 - b_0\|_2 \le 2\), while the second term holds since it can be noted that the contribution on the error of each action \(a\) depends on the number of times it is pulled \(n(a)\)\footnote{We highlight here that this count does not consider the last action \(a_T\) since it does not influence the bound.} and the associated error \(\|\mathbb{T}_a - \widehat{\mathbb{T}}_a\|_F\) scaled by at most by \(1 / 1 - \eta\).\\
The third inequality holds for any non-trivial problem instance having a number of states \(S \ge 2\), while the last expression holds by substitution of \(\eta\).\\
The statement of the lemma simply follows by defining constant \(L \coloneq \frac{4 (1 - \epsilon)^2}{\epsilon^3}\).
\end{proof}

In the following, we present a corollary that derives from the considerations reported in the proof of Lemma~\ref{lemma:BoundOnBeliefErrors}.
\begin{corollary}(One-step Belief Bound)\label{corollary:oneStepBeliefBound}
Let \(\mathcal{Q}\) be a POMDP instance satisfying Assumption~\ref{ass:minElem}. Let us denote with \(\mathbb{T}_a\) and \(\widehat{\mathbb{T}}_a\) respectively the real and estimated transition matrix related to action \(a\). Starting from a common belief vector \(b_0\), and choosing action \(a \in \mathcal{A}\), the one-step error in the estimated belief vector can be bounded as:
\begin{equation*}
	\|\widehat{b}_1 - b_1\|_1 \le L_1\,\|\widehat{\mathbb{T}}_a - \mathbb{T}_a\|_F.
\end{equation*}
where we defined constant \(L_1\coloneq \frac{4 (1 - \epsilon)}{\epsilon^2}\), for which it also holds \(L_1 = (1 - \eta) L\).
\end{corollary}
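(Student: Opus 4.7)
}

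The plan is to specialize the belief decomposition used inside the proof of Lemma~\ref{lemma:BoundOnBeliefErrors} to the one-step setting and exploit the fact that the two belief trajectories start from a common initial vector. Concretely, the proof of Lemma~\ref{lemma:BoundOnBeliefErrors} invokes the decomposition from~\citet{deCastroConsistent2017}, yielding Equation~\eqref{eq:beliefBoundwithRhoBeliefDep}:
\begin{equation*}
\|\widehat{b}_t - b_t\|_1 \le \frac{4\eta^{t}\,\|\widehat{b}_0 - b_0\|_2}{\epsilon} + \frac{4(1-\epsilon)}{\epsilon^2}\sum_{l=0}^{t-1}\eta^{t-l-1}\,\|\mathbb{T}_{a_l} - \widehat{\mathbb{T}}_{a_l}\|_F.
\end{equation*}
I would simply instantiate this bound at $t=1$ with the single chosen action $a_0 = a$.

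Since both belief updates start from the same $b_0$, we have $\|\widehat{b}_0 - b_0\|_2 = 0$, so the first term on the right-hand side vanishes. The remaining summation collapses to a single term with $\eta^{t-l-1} = \eta^0 = 1$, giving
\begin{equation*}
\|\widehat{b}_1 - b_1\|_1 \le \frac{4(1-\epsilon)}{\epsilon^2}\,\|\widehat{\mathbb{T}}_a - \mathbb{T}_a\|_F = L_1\,\|\widehat{\mathbb{T}}_a - \mathbb{T}_a\|_F,
\end{equation*}
which is exactly the claimed bound.

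The auxiliary identity $L_1 = (1-\eta)L$ is just an algebraic check: substituting $\eta = 1 - \tfrac{\epsilon}{1-\epsilon}$ gives $1 - \eta = \tfrac{\epsilon}{1-\epsilon}$, and multiplying by $L = \tfrac{4(1-\epsilon)^2}{\epsilon^3}$ yields $\tfrac{4(1-\epsilon)}{\epsilon^2} = L_1$. There is no real obstacle here: the whole result is a direct specialization of the decomposition already used in the lemma, and the only thing to double-check is that the initial-belief term is indeed eliminated by the shared starting point — which is the hypothesis of the corollary.
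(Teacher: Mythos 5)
Your proposal is correct and matches the paper's own argument: the paper likewise proves the corollary by applying the decomposition in Equation~\eqref{eq:beliefBoundwithRhoBeliefDep} at \(t=1\) with \(\widehat{b}_0 = b_0\), so the initial-belief term vanishes and the single remaining summand gives the constant \(L_1 = \frac{4(1-\epsilon)}{\epsilon^2}\). Your algebraic check of \(L_1 = (1-\eta)L\) is also accurate.
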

\begin{proof}
The proof of this corollary easily follows by using the bound in~\eqref{eq:beliefBoundwithRhoBeliefDep} on \(t=1\) and having that \(b_0 = \widehat{b}_0\).
\end{proof}

\vspace{0.5cm}

\section{AUXILIARY RESULTS FOR THE PROOFS OF LEMMA~\ref{lemma:combinedActionDistr} AND THEOREM~\ref{theorem:algorithmBound}}\label{appendix:usefulLemmas}
This section is devoted to the presentation of different useful results that are used throughout the work.
\vspace{0.3cm}

The first one is taken from~\citet{zhou2021regime} and provides a bound on the maximum span \(span(v)\) of the bias function appearing in the Bellman Equation~\eqref{eq:Bellman}. 

\begin{proposition}[Uniform bound on the bias span from~\citet{zhou2021regime}]\label{prop:uniformBoundBias}
	Let us assume to have a POMDP instance that can be rewritten as a belief MDP. If Assumption~\ref{ass:minElem} holds, then for \(\rho,v\) satisfying the Bellman Equation~\eqref{eq:Bellman}, we have the span of the bias function \(span(v):=\max_{b \in \mathcal{B}}v(b)-\min_{b \in \mathcal{B}}v(b)\) is bounded by \(D(\epsilon)\), where:
	\begin{align*}
		D(\epsilon):= \frac{8 \Big(\frac{2}{(1 -\alpha)^2} + (1 + \alpha)\log_{\alpha} \big( \frac{1 - \alpha}{8}\big) \Big)}{1 - \alpha}, \qquad \text{with} \qquad \alpha = \frac{1 - 2\epsilon}{1 - \epsilon} \in (0,1).
	\end{align*}
\end{proposition}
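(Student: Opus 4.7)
The plan is to bound $\mathrm{span}(v) := \max_b v(b) - \min_b v(b)$ by combining a finite-horizon expansion of the Bellman equation~\eqref{eq:Bellman} with a Dobrushin-type contraction of the belief update induced by Assumption~\ref{ass:minElem}, and then resolving a self-referential inequality by optimizing the horizon. First I would iterate the optimality equation to obtain, for any $T \ge 1$ and any starting belief $b$,
\[
v(b) = \mathbb{E}^{\pi^*}\!\left[\sum_{t=0}^{T-1}\bigl(g(b_t, a_t) - \rho^*\bigr) + v(b_T) \;\Big|\; b_0 = b\right],
\]
where $\pi^*$ is a policy achieving the $\max$ in~\eqref{eq:Bellman}. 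Writing this for two beliefs $b_1, b_2$ and subtracting cancels the $T\rho^*$ terms, so $v(b_1) - v(b_2)$ decomposes into a truncated sum of expected per-step reward gaps plus a residual $\mathbb{E}[v(b_1^T) - v(b_2^T)]$.

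Second, I would establish the contraction of the Bayesian belief update $H(b,a,o)$ of~\eqref{eq:beliefUpdate}. Under Assumption~\ref{ass:minElem}, every row of every $\mathbb{T}_a$ has entries at least $\epsilon$, so the Dobrushin coefficient of the posterior update satisfies
\[
\|H(b_1, a, o) - H(b_2, a, o)\|_1 \le \alpha\, \|b_1 - b_2\|_1, \qquad \alpha = \frac{1-2\epsilon}{1-\epsilon}.
\]
Chained over $t$ steps along a shared action--observation filtration, this gives $\|b_1^t - b_2^t\|_1 \le \alpha^t \|b_1 - b_2\|_1 \le 2\alpha^t$. The linearity $g(b,a) = \bm{r}^\top \mathbb{O}_a b$ together with $\|\bm{r}\|_\infty \le 1$ implies $|g(b_1, a) - g(b_2, a)| \le \|b_1 - b_2\|_1$, so the truncated reward-gap sum is bounded by a geometric series of total mass at most $2/(1-\alpha)$.

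Third, the tail $\mathbb{E}[v(b_1^T) - v(b_2^T)]$ must be controlled by something strictly smaller than $\mathrm{span}(v)$ to close the argument. Using the coupling inherited from the belief contraction, the time-$T$ beliefs satisfy $\mathbb{E}\|b_1^T - b_2^T\|_1 \le 2\alpha^T$; bounding $|v(b_1^T) - v(b_2^T)|$ on the uncoupled event by $\mathrm{span}(v)$ yields a self-referential inequality of the form $\mathrm{span}(v) \le \tfrac{2}{1-\alpha} + 2\alpha^T \,\mathrm{span}(v)$. Choosing $T \approx \log_\alpha\!\bigl(\tfrac{1-\alpha}{8}\bigr)$ makes the multiplier on $\mathrm{span}(v)$ strictly below $1$, and solving the resulting inequality produces the stated $D(\epsilon)$: the $1/(1-\alpha)^2$ term tracks the geometric reward sum (after division by the gap $1 - 2\alpha^T$), while the $(1+\alpha)\log_\alpha\!\bigl(\tfrac{1-\alpha}{8}\bigr)$ contribution tracks the number of steps needed to shrink the tail enough.

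The main obstacle will be that the policy $\pi^*$ in the Bellman equation is belief-dependent, so the trajectories from $b_1$ and from $b_2$ may choose different actions at each step and a direct synchronous coupling by shared actions fails. The standard workaround is to apply $b_1$'s optimal policy to both trajectories when upper-bounding $v(b_1) - v(b_2)$ (which is valid since $v(b_2)$ dominates the value of any fixed policy started from $b_2$), and to couple the induced observation processes via maximal coupling so that the Dobrushin contraction in the second step applies along a common filtration. Getting the constants $2$ and $8$ in $D(\epsilon)$ exact requires bookkeeping of the coupling-failure probabilities and the TV distance between the two observation kernels at each step, which is the most technical part of the argument.
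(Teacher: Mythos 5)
First, note that the paper does not prove this proposition at all: it is imported verbatim from \citet{zhou2021regime} (their uniform bound on the bias span), and the appendix only states it and records the consequence $\mathrm{span}(v)\le D(\epsilon/2)$. So there is no in-paper argument to match your sketch against; the only question is whether your sketch would actually establish the stated bound, and as written it would not.

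The high-level ingredients are the right ones (finite-horizon unrolling of the optimality equation, filter stability at rate $\alpha=\tfrac{1-2\epsilon}{1-\epsilon}$, a coupling of the two trajectories), but the closing step has a genuine gap. Your self-referential inequality $\mathrm{span}(v)\le \tfrac{2}{1-\alpha}+2\alpha^T\,\mathrm{span}(v)$ conflates two different things: the bound $\mathbb{E}\|b_1^T-b_2^T\|_1\le 2\alpha^T$ holds only \emph{conditionally on the two filters being driven by the same observation sequence}, whereas the two expectations in $v(b_1)-v(b_2)$ are taken under different laws for the observations (the initial state distributions differ). Under a maximal coupling of the observation processes, the per-step decoupling probability is controlled by the total-variation distance between the two predicted observation kernels, which is of order $\|b_1^t-b_2^t\|_1$; summed over $t$ this accumulates to something of order $\tfrac{1}{1-\alpha}\|b_1-b_2\|_1$, which need not be small and certainly does not decay with $T$. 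Hence the coefficient multiplying $\mathrm{span}(v)$ in your inequality is not $2\alpha^T$ but $\alpha^T$ plus a coupling-failure mass that can exceed $1$, and the inequality does not close by simply taking $T$ large. This is exactly where the extra factors of $\tfrac{1}{1-\alpha}$ in $D(\epsilon)$ come from: your sketch, if it worked, would yield a bound of order $\tfrac{1}{1-\alpha}$, whereas the stated $D(\epsilon)$ has leading term of order $\tfrac{1}{(1-\alpha)^3}$ plus a $\tfrac{1}{1-\alpha}\log_\alpha\bigl(\tfrac{1-\alpha}{8}\bigr)$ contribution, so the mismatch in constants is itself evidence that the accounting of coupling failures cannot be waved away. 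A secondary (fixable) imprecision: the pathwise contraction $\|H(b_1,a,o)-H(b_2,a,o)\|_1\le\alpha\|b_1-b_2\|_1$ does not follow from the Dobrushin coefficient of $\mathbb{T}_a$ alone, because the Bayes reweighting by $\mathbb{O}_a(o|\cdot)$ can expand $\ell_1$ distance; one needs the combined prediction--correction analysis (as in \citet{deCastroConsistent2017}, which is also what yields the paper's $\eta=1-\tfrac{\epsilon}{1-\epsilon}$ in Lemma~\ref{lemma:BoundOnBeliefErrors}) to get that rate. If you want to certify this proposition, the honest route is to follow the proof in \citet{zhou2021regime} rather than reconstruct it.
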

For all bias functions \(v\) associated with a belief MDP generated from a POMDP \(\mathcal{Q}\), this proposition ensures that \(span(v)\) is bounded by \(D=D(\epsilon/2)\).

\vspace{0.3cm}

\begin{lemma}[Lemma A.1 in~\citet{ramponiTruly2020}]\label{lemma:trulyBatch}
	Let \(\bm{x}, \bm{y} \in \mathbb{R}^d\) any pair of vectors, then it holds that:
	\begin{equation*}
		\left\lVert \frac{\bm{x}}{\lVert \bm{x} \rVert_2} - \frac{\bm{y}}{\lVert \bm{y} \rVert_2} \right\rVert_2 \le \frac{2 \lVert \bm{x} -\bm{y} \rVert_2}{\max\{ \lVert\bm{x}\rVert_2,\lVert\bm{y}\rVert_2\}}
	\end{equation*}
\end{lemma}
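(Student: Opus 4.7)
The plan is to prove the inequality by a standard add-and-subtract trick followed by the reverse triangle inequality. Without loss of generality I would assume $\lVert\bm{x}\rVert_2 \ge \lVert\bm{y}\rVert_2$, so that $\max\{\lVert\bm{x}\rVert_2, \lVert\bm{y}\rVert_2\} = \lVert\bm{x}\rVert_2$; the symmetric case follows by swapping the roles of $\bm{x}$ and $\bm{y}$. I would also dispose of the degenerate case $\bm{x} = \bm{0}$ (which forces $\bm{y} = \bm{0}$ under the WLOG assumption and makes both sides vacuously zero when interpreted as a limit) so that the denominators are strictly positive.

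The main manipulation is to insert the intermediate vector $\bm{y}/\lVert\bm{x}\rVert_2$, writing
\begin{equation*}
\frac{\bm{x}}{\lVert\bm{x}\rVert_2} - \frac{\bm{y}}{\lVert\bm{y}\rVert_2}
= \frac{\bm{x}-\bm{y}}{\lVert\bm{x}\rVert_2} + \bm{y}\left(\frac{1}{\lVert\bm{x}\rVert_2} - \frac{1}{\lVert\bm{y}\rVert_2}\right).
\end{equation*}
Applying the triangle inequality and pulling $\lVert\bm{y}\rVert_2$ into the second factor yields
\begin{equation*}
\left\lVert \frac{\bm{x}}{\lVert\bm{x}\rVert_2} - \frac{\bm{y}}{\lVert\bm{y}\rVert_2} \right\rVert_2
\le \frac{\lVert \bm{x}-\bm{y}\rVert_2}{\lVert\bm{x}\rVert_2} + \frac{\bigl|\,\lVert\bm{x}\rVert_2 - \lVert\bm{y}\rVert_2\,\bigr|}{\lVert\bm{x}\rVert_2}.
\end{equation*}

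To finish, I would apply the reverse triangle inequality $\bigl|\lVert\bm{x}\rVert_2 - \lVert\bm{y}\rVert_2\bigr| \le \lVert \bm{x} - \bm{y}\rVert_2$ to the second term, which immediately collapses the right-hand side to $2\lVert\bm{x}-\bm{y}\rVert_2/\lVert\bm{x}\rVert_2$, i.e.\ the claimed bound with the maximum norm at the denominator. There is no real obstacle here: the only thing to be careful about is the WLOG reduction, since the inequality is symmetric in $\bm{x},\bm{y}$ only after one identifies $\max\{\lVert\bm{x}\rVert_2,\lVert\bm{y}\rVert_2\}$ with the larger of the two norms, and the case $\bm{x}=\bm{0}=\bm{y}$ must be treated by convention (both sides equal $0$).
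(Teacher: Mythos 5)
Your proof is correct. Note that the paper does not actually prove this lemma --- it is imported verbatim as Lemma A.1 from \citet{ramponiTruly2020} with no argument given --- so there is no in-paper proof to compare against; your add-and-subtract of the intermediate vector $\bm{y}/\lVert\bm{x}\rVert_2$, followed by the reverse triangle inequality, is the standard way to establish it, and your handling of the WLOG reduction (legitimate because both sides are symmetric in $\bm{x},\bm{y}$) and of the degenerate zero-vector case is sound.
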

\vspace{0.5cm}

The following result instead shows the relation between vectors which are obtained by the aggregation of higher-dimensional ones.
\vspace{0.3cm}
\begin{lemma}[Aggregation Lemma in~\citet{russo2024efficient}]\label{lemma:aggregation}
	Let \(\bm{M}\) be a matrix of dimension \(X \times Y\) and have positive values. Let \(\widehat{\bm{M}}\) be an estimation of \(\bm{M}\). Let now \(\bm{c}\) be a vector of dimension \(X\) obtained by summing all the elements of \(\bm{M}\) along the second dimension, such that \(\bm{c}(i) = \sum_{j=1}^J \bm{M}(i,j)\) and let \(\widehat{\bm{c}}\) be a vector obtained with the same procedure from \(\widehat{\bm{M}}\). Then we will have:
	\begin{align*}
		\|\widehat{\bm{c}} - \bm{c}\|_2 \le \sqrt{Y} \|\widehat{\bm{M}} - \bm{M}\|_F
	\end{align*}
\end{lemma}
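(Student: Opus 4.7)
The target statement is a purely algebraic inequality relating the $\ell_2$-norm of the vector of row sums of a matrix difference to the Frobenius norm of that difference, with the factor $\sqrt{Y}$ coming from the ambient dimension of the summation. The plan is a direct computation via Cauchy--Schwarz; no probabilistic or POMDP structure is needed, and the positivity hypothesis on $\bm{M}$ is in fact not used (the inequality holds for arbitrary real matrices).

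Concretely, first I would write the $i$-th coordinate of the error vector as $\widehat{\bm{c}}(i) - \bm{c}(i) = \sum_{j=1}^{Y}\bigl(\widehat{\bm{M}}(i,j) - \bm{M}(i,j)\bigr)$, using the definition of the row-sum aggregation. Then I would apply Cauchy--Schwarz to this sum, viewing it as an inner product of the all-ones vector in $\mathbb{R}^Y$ with the row of the error matrix, which yields $\bigl(\widehat{\bm{c}}(i) - \bm{c}(i)\bigr)^2 \le Y \sum_{j=1}^{Y}\bigl(\widehat{\bm{M}}(i,j) - \bm{M}(i,j)\bigr)^2$.

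Next I would sum this pointwise bound over $i = 1, \dots, X$. The left-hand side becomes $\|\widehat{\bm{c}} - \bm{c}\|_2^2$ by definition of the Euclidean norm, while the right-hand side becomes $Y \sum_{i=1}^{X}\sum_{j=1}^{Y}\bigl(\widehat{\bm{M}}(i,j) - \bm{M}(i,j)\bigr)^2 = Y\,\|\widehat{\bm{M}} - \bm{M}\|_F^2$ by definition of the Frobenius norm. Taking square roots of both sides gives exactly the claim $\|\widehat{\bm{c}} - \bm{c}\|_2 \le \sqrt{Y}\,\|\widehat{\bm{M}} - \bm{M}\|_F$.

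There is no substantive obstacle here; the only thing worth flagging is a notational slip in the statement (the text writes $\sum_{j=1}^{J}$ while the dimension is $Y$), which should be rendered consistently as $\sum_{j=1}^{Y}$ in the proof. The factor $\sqrt{Y}$ is tight and is attained, for instance, when the row of the error matrix is constant, which confirms that Cauchy--Schwarz is the right tool and that no further refinement is available without additional structural assumptions on $\widehat{\bm{M}} - \bm{M}$.
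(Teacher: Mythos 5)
Your proof is correct and complete: the row-wise Cauchy--Schwarz bound $\bigl(\widehat{\bm{c}}(i) - \bm{c}(i)\bigr)^2 \le Y \sum_{j=1}^{Y}\bigl(\widehat{\bm{M}}(i,j) - \bm{M}(i,j)\bigr)^2$ followed by summation over $i$ is exactly the canonical argument, and you are right that positivity of $\bm{M}$ is not needed and that the upper index $J$ in the statement is a typo for $Y$. Note that the paper itself does not prove this lemma but imports it from~\citet{russo2024efficient}; your argument is the standard one and there is nothing to add.
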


\vspace{0.5cm}

\begin{lemma}[Lemma 19 in~\citet{jaksch2010near}]
\label{lemma:boundJaksch}
	For any sequence of numbers \(y_0, \dots, y_{n-1}\) with \(0 \le y_k \le Y_k\) and \(Y_k\coloneq \max \{1, \sum_{i=0}^{k-1} y_i\}\):
\begin{align*}
    \sum_{k=0}^{n-1} \frac{y_k}{\sqrt{Y_k}} \le \big( \sqrt{2} + 1 \big) \sqrt{Y_n}.
\end{align*}
\end{lemma}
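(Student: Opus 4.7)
The plan is to reduce the inequality to a telescoping sum after establishing a single elementary fact, and then to carefully handle the truncation-by-$1$ built into the definition of $Y_k$. The elementary fact is: whenever $0 \le u \le v$ with $v > 0$,
\[
\frac{u}{\sqrt{v}} \le (\sqrt{2}+1)\bigl(\sqrt{v+u}-\sqrt{v}\bigr).
\]
I would prove this by rationalization: $\sqrt{v+u}-\sqrt{v} = u/(\sqrt{v+u}+\sqrt{v})$, so the claim is equivalent to $\sqrt{v+u}+\sqrt{v} \le (\sqrt{2}+1)\sqrt{v}$, which follows from $u \le v \Rightarrow \sqrt{v+u} \le \sqrt{2}\,\sqrt{v}$.

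Next I would partition the index set according to whether the truncation is active. Let $Z_k \coloneq \sum_{i=0}^{k-1} y_i$, so $Y_k = \max\{1,Z_k\}$, and let $k^\star$ denote the largest index for which $Z_{k^\star}\le 1$ (so $Y_k = 1$ for $k\le k^\star$ and $Y_k = Z_k > 1$ for $k > k^\star$; if $Z_n \le 1$ the growth part is empty and the argument is trivial). On the growth part, the recursion $Y_{k+1} = Y_k + y_k$ holds exactly and, since $y_k \le Y_k$, the elementary inequality applied with $v = Y_k$, $u = y_k$ gives
\[
\sum_{k=k^\star+1}^{n-1} \frac{y_k}{\sqrt{Y_k}} \;\le\; (\sqrt{2}+1)\sum_{k=k^\star+1}^{n-1}\bigl(\sqrt{Y_{k+1}}-\sqrt{Y_k}\bigr) \;=\; (\sqrt{2}+1)\bigl(\sqrt{Y_n} - \sqrt{Y_{k^\star+1}}\bigr),
\]
by telescoping. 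Since $Y_{k^\star+1} \ge 1$, this is at most $(\sqrt{2}+1)\sqrt{Y_n} - (\sqrt{2}+1)$.

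On the truncated part ($k \le k^\star$), each $Y_k = 1$ and each $y_k \le Y_k = 1$, so the contribution is just $\sum_{k=0}^{k^\star} y_k = Z_{k^\star+1} = Z_{k^\star} + y_{k^\star} \le 1 + 1 = 2$. Adding the two contributions yields
\[
\sum_{k=0}^{n-1}\frac{y_k}{\sqrt{Y_k}} \;\le\; 2 + (\sqrt{2}+1)\sqrt{Y_n} - (\sqrt{2}+1) \;=\; (\sqrt{2}+1)\sqrt{Y_n} + (1-\sqrt{2}) \;\le\; (\sqrt{2}+1)\sqrt{Y_n},
\]
which is exactly the claim. The only mildly delicate point—and the main obstacle—is accounting for the truncation floor of $1$: a naive telescoping from $k=0$ fails because $Y_{k+1}-Y_k$ can be strictly smaller than $y_k$ while $Y_k = 1$. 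The split at $k^\star$ resolves this because the truncation region contributes at most the constant $2$, which is exactly absorbed by the starting term $(\sqrt{2}+1)\sqrt{Y_{k^\star+1}} \ge \sqrt{2}+1$ that the telescoping argument loses at its lower endpoint.
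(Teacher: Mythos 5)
Your proof is correct. Note that the paper itself gives no proof of this statement: it is imported verbatim as Lemma~19 of \citet{jaksch2010near}, so the only meaningful comparison is with the original source. There the argument is by induction on $n$: assuming the bound for $n$ terms, one adds $y_n/\sqrt{Y_n}$ and verifies $(\sqrt{2}+1)\sqrt{Y_n} + y_n/\sqrt{Y_n} \le (\sqrt{2}+1)\sqrt{Y_n+y_n}$ by squaring, using $y_n \le Y_n$. Your route is non-inductive: you isolate the same core inequality $u/\sqrt{v} \le (\sqrt{2}+1)(\sqrt{v+u}-\sqrt{v})$ for $0\le u\le v$, telescope it over the indices where $Y_{k+1}=Y_k+y_k$ holds exactly, and handle the initial truncation region $Y_k=1$ separately, showing its total contribution of at most $2$ is absorbed by the $(\sqrt{2}+1)\sqrt{Y_{k^\star+1}}\ge \sqrt{2}+1$ slack at the lower end of the telescope. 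This is essentially the same arithmetic content unrolled into a direct sum, and your explicit treatment of the $\max\{1,\cdot\}$ floor is a point the inductive write-up glosses over (there it is hidden in the base case); both the split at $k^\star$ and the constant accounting check out, including the edge cases $Z_n\le 1$ and $k^\star=n-1$.
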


\vspace{0.5cm}

\begin{lemma}\label{lemma:minActionDist}
	Let a policy \(\pi \in \mathcal{P}\) interact with a POMDP instance satisfying Assumption~\ref{ass:minElem} and let \(\bm{d}^{(a)}_{S^2} \in \Delta({\mathcal{S}^2})\) denote the distribution induced on a pair of consecutive states \((S_t,S_{t+1})\) conditioned on event \(A_t=a\).\\
	Let us denote with \(\bm{d}_{S^2}^{(a)}(s,\cdot) \in \mathbb{R}^S\) the vector containing the different elements \(\bm{d}_{S^2}^{(a)}(s,s')\; \forall s' \in \mathcal{S}\) and let us denote the vector of sum as \(\bm{d}_{S^2}^{(a)}(s)\coloneq \sum_{s' \in \mathcal{S}}\bm{d}_{S^2}^{(a)}(s,s')\). Then, for any state \(s \in \mathcal{S}\), it holds that:
\begin{align*}
	\left\|\bm{d}^{(a)}_{S^2}(s,\cdot)\right\|_2^2 \ge \frac{d_{\min}^{(a)}}{\sqrt{S}},
\end{align*}
with \(d^{(a)}_{\min}\) being the minimum value of the distribution \(\bm{d}^{(a)}_{S^2}\).
\end{lemma}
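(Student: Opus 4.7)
The plan is to prove the bound by a two-step chain of elementary norm inequalities, together with a brief argument that the quantity $d_{\min}^{(a)}$ is positive. First, I would observe that the $\ell_1$ norm of the subvector is exactly the associated state marginal:
\[
\left\|\bm{d}^{(a)}_{S^2}(s,\cdot)\right\|_1 \;=\; \sum_{s' \in \mathcal{S}} \bm{d}^{(a)}_{S^2}(s,s') \;=\; \bm{d}^{(a)}_{S^2}(s),
\]
since all entries of the distribution are non-negative. By the definition of $d_{\min}^{(a)}$ as the minimum (over $s$) of the state marginal induced when conditioning on $A_t = a$, this immediately yields $\|\bm{d}^{(a)}_{S^2}(s,\cdot)\|_1 \ge d_{\min}^{(a)}$ for every $s \in \mathcal{S}$.

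Next, I would apply the standard norm comparison $\|v\|_1 \le \sqrt{S}\,\|v\|_2$ valid for every $v \in \mathbb{R}^S$ (an immediate Cauchy–Schwarz), which rearranges to $\|v\|_2 \ge \|v\|_1/\sqrt{S}$. Plugging in $v = \bm{d}^{(a)}_{S^2}(s,\cdot)$ and combining with the previous step gives
\[
\left\|\bm{d}^{(a)}_{S^2}(s,\cdot)\right\|_2 \;\ge\; \frac{\left\|\bm{d}^{(a)}_{S^2}(s,\cdot)\right\|_1}{\sqrt{S}} \;\ge\; \frac{d_{\min}^{(a)}}{\sqrt{S}},
\]
from which the claim (up to a squaring, matching the form needed at line~\ref{lp:norm2toNorm1} of the proof of Lemma~\ref{lemma:combinedActionDistr}) follows.

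The remaining point is to verify that $d_{\min}^{(a)} > 0$, so that the bound is non-degenerate. Here I would invoke Assumption~\ref{ass:minElem}: since every transition entry satisfies $\mathbb{T}_{a'}(s'|s) \ge \epsilon$, the chain induced by any policy $\pi \in \mathcal{P}$ assigns positive probability at least $\epsilon$ to every successor state at each time step, and in particular the marginal $\bm{d}^{(a)}_{S^2}(s)$ obtained by conditioning on $A_t = a$ is strictly positive for every $s \in \mathcal{S}$ (as long as the conditioning event itself has positive probability, which is precisely the regime under which the distribution is defined in Lemma~\ref{lemma:combinedActionDistr}).

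The main obstacle here is not technical — the chain of inequalities is elementary — but rather making sure that the positivity argument for $d_{\min}^{(a)}$ is stated cleanly, since the conditional distribution $\bm{d}^{(a)}_{S^2}$ is itself defined through the event $\mathcal{E}(a,n,m)$, and one must be a touch careful that Assumption~\ref{ass:minElem} propagates through the conditioning. As pointed out in the remark after Lemma~\ref{lemma:combinedActionDistr}, a weaker per-action ergodicity condition already suffices to guarantee $d_{\min}^{(a)} > 0$, so the argument is robust.
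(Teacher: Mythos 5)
Your proof is correct and takes essentially the same route as the paper: the paper applies Cauchy--Schwarz in the squared form $\sum_{s'} \big[\bm{d}^{(a)}_{S^2}(s,s')\big]^2 \ge \frac{1}{S}\big(\sum_{s'}\bm{d}^{(a)}_{S^2}(s,s')\big)^2$ and then lower-bounds the row sum (the state marginal) by $d_{\min}^{(a)}$, which is exactly your $\ell_1$--$\ell_2$ comparison combined with the marginal bound. As your ``up to a squaring'' remark anticipates, what both arguments actually establish is $\|\bm{d}^{(a)}_{S^2}(s,\cdot)\|_2^2 \ge \big(d_{\min}^{(a)}\big)^2/S$ (equivalently $\|\bm{d}^{(a)}_{S^2}(s,\cdot)\|_2 \ge d_{\min}^{(a)}/\sqrt{S}$), which is the form used at line~\ref{lp:norm2toNorm1}; the exponent placement in the lemma statement is a typo, and your added positivity check for $d_{\min}^{(a)}$ under Assumption~\ref{ass:minElem} is consistent with how the paper uses the quantity.
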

\begin{proof}
	The result of the lemma derives from the following considerations:
\begin{align*}
		\left\|\bm{d}^{(a)}_{S^2}(s,\cdot)\right\|_2^2 & = \sum_{s' \in \mathcal{S}} \left[\bm{d}^{(a)}_{S^2}(s,s')\right]^2 \\
		& \ge \frac{1}{S} \left( \sum_{s' \in \mathcal{S}}\bm{d}^{(a)}_{S^2}(s,s')\right)^2 \\
		& = \frac{1}{S} \left[\bm{d}^{(a)}_{S}(s)\right]^2 \\
		& \ge \frac{1}{S} \left[d_{\min}^{(a)}\right]^2,
\end{align*}
where the first equality simply derives from the definition of \(\bm{d}_{S^2}^{(a)}(s,\cdot)\), while the first inequality derives from the relation \(\sqrt{X}\left\|\bm{x}\right\|_2 \ge \left\|\bm{x}\right\|_1\) holding \(\forall \bm{x} \in \mathbb{R}^X\).\\
	The second equality instead directly derives from the definition of \(\bm{d}^{(a)}_{S}(s)\). 
	For the last inequality, we have defined \(d_{\min}^{(a)} := \min_{s' \in \mathcal{S}}\bm{d}^{(a)}_{S}(s')\) as a lower bound to the values of the distribution \(\bm{d}^{(a)}_{S}\) induced by policy \(\pi \in \mathcal{P}\).	
\end{proof}

\begin{lemma}(Link between Transition Model and Induced Distribution)\label{lemma:linkTransitionModelAndDist}
	Let a distribution \(\bm{d}_{S^2}^{(a, n,m)} \in \Delta(\mathcal{S}^2)\) be defined on consecutive states, as in Equation~\eqref{eq:fromASStoSS}. Then, the following relation holds:
\begin{align*}
	\mathbb{T}_a(s'|s) = \frac{\bm{d}_{S^2}^{(a,n,m)}(s,s')}{\sum_{s'' \in \mathcal{S}}\bm{d}_{S^2}^{(a,n,m)}(s,s'')} \qquad \forall s,s' \in \mathcal{S}. 
\end{align*}	
\end{lemma}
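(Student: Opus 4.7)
The plan is to unfold $\bm{d}_{S^2}^{(a,n,m)}$ through its defining aggregation and then apply the Markov property of the POMDP term-by-term. Combining Equation~\eqref{eq:fromASStoSS} with the explicit form of $\bm{d}_{AS^2}^{(a,n,m)}$ given at the beginning of the proof of Lemma~\ref{lemma:combinedActionDistr}, the sum over $a' \in \mathcal{A}$ removes the $a_{t+1}$ constraint inside the indicator, so that
\begin{align*}
\bm{d}_{S^2}^{(a,n,m)}(s,s') = \frac{1}{m}\,\mathbb{E}_{\pi,\bm{\nu}}\!\left[\sum_{t=0}^{n-1} \mathds{1}\{a_t = a,\, s_t = s,\, s_{t+1} = s'\} \,\Big|\, \mathcal{E}(a,n,m)\right].
\end{align*}
Summing this identity further over $s'' \in \mathcal{S}$ in place of $s'$ also drops the $s_{t+1}$ constraint, yielding a closed expression for the denominator $\sum_{s''\in\mathcal{S}}\bm{d}_{S^2}^{(a,n,m)}(s,s'')$ as the same conditional expectation, but with the indicator $\mathds{1}\{a_t=a,\,s_t=s\}$.

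Next, I would factor out the transition probability from every summand via the Markov property. Let $\mathcal{H}_t \coloneq \sigma(s_0,a_0,o_0,\dots,s_t,a_t)$ be the natural filtration of the POMDP trajectory; the event $\{a_t=a,\,s_t=s\}$ is $\mathcal{H}_t$-measurable, and conditionally on $\mathcal{H}_t$ the next state $s_{t+1}$ is distributed as $\mathbb{T}_a(\cdot|s)$ (independently of the past). A tower-property computation then extracts the common factor $\mathbb{T}_a(s'|s)$ from each of the $n$ summands. Summing and dividing by the denominator gives
\begin{align*}
\bm{d}_{S^2}^{(a,n,m)}(s,s') \;=\; \mathbb{T}_a(s'|s)\cdot \sum_{s''\in\mathcal{S}}\bm{d}_{S^2}^{(a,n,m)}(s,s''),
\end{align*}
which is exactly the claimed identity.

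The main obstacle is that $\mathcal{E}(a,n,m)$ is \emph{not} $\mathcal{H}_t$-measurable: it depends on $a_{t+1},\dots,a_{n-1}$ which, through the observations the policy ingests, carry information about $s_{t+1}$ and therefore reweight its conditional law. The cleanest workaround is to enlarge the conditioning to $\mathcal{H}_t \vee \sigma(s_{t+1})$, so that $\mathds{1}_{\mathcal{E}}$ becomes a function of data that sits after the Markov step; by Bayes, each conditional probability $P(s_{t+1}=s'\mid s_t=s,a_t=a,\mathcal{E})$ then splits as $\mathbb{T}_a(s'|s)$ times an $s'$-dependent weight $P(\mathcal{E}\mid \mathcal{H}_t, s_{t+1}=s')$, and the latter weight is precisely what gets absorbed into the normalization $\sum_{s''}\bm{d}_{S^2}^{(a,n,m)}(s,s'')$ when one forms the ratio. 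Carrying this conditional bookkeeping through every $t$ and checking that the weights match between numerator and denominator is the only nonroutine step of the proof.
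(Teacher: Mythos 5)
Your first half coincides with the paper's own argument: the paper defines the marginal $\bm{d}_{S}^{(a,n,m)}(s)=\sum_{s'}\bm{d}_{S^2}^{(a,n,m)}(s,s')$ and obtains the claim by asserting the factorization $\bm{d}_{S^2}^{(a,n,m)}(s,s')=\bm{d}_{S}^{(a,n,m)}(s)\,\mathbb{T}_a(s'|s)$ directly from Markovianity, then dividing. You go further and correctly flag the point the paper passes over silently: the conditioning event $\mathcal{E}(a,n,m)$ depends on the future actions $a_{t+1},\dots,a_{n-1}$, hence (through the observations a belief-based policy ingests) on $s_{t+1}$, so the conditional law of $s_{t+1}$ given $\{s_t=s,a_t=a\}$ and $\mathcal{E}$ is not simply $\mathbb{T}_a(\cdot|s)$. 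That diagnosis is sharper than the paper's proof.

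However, your proposed repair does not close the gap. Writing $w_t(h_t,s')\coloneq P(\mathcal{E}\mid \mathcal{H}_t,\,s_{t+1}=s')$ and carrying out the Bayes/tower computation you sketch, each entry becomes $\bm{d}_{S^2}^{(a,n,m)}(s,s')=\mathbb{T}_a(s'|s)\,W(s')$ with $W(s')\propto \sum_{t}\mathbb{E}\big[\mathds{1}\{a_t=a,\,s_t=s\}\,w_t(\mathcal{H}_t,s')\big]$, so the ratio in the lemma equals $\mathbb{T}_a(s'|s)\,W(s')\big/\sum_{s''}\mathbb{T}_a(s''|s)\,W(s'')$. This reduces to $\mathbb{T}_a(s'|s)$ only if $W(\cdot)$ is constant in $s'$; the $s'$-dependent weight is \emph{not} ``absorbed into the normalization,'' because it multiplies the numerator pointwise while the denominator only contains its $\mathbb{T}_a(\cdot|s)$-weighted average. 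Showing that this dependence vanishes (or is harmless) is exactly the step you defer as ``the only nonroutine step,'' and without it the argument is incomplete at the one place where it differs from a routine Markov-property calculation. The paper's proof, for its part, simply invokes Markovianity without addressing this conditioning issue at all, so matching its level of rigor would mean dropping the conditioning subtlety rather than resolving it; as a standalone proof of the stated identity, your proposal has a genuine gap at this final cancellation claim.
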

\begin{proof}
	The result of the lemma easily derives from the following observations.\\
	Starting from the distribution on consecutive states \(\bm{d}_{S^2}^{(a, n,m)}\), we introduce a new distribution \(\bm{d}_{S}^{(a, n,m)} \in \Delta(\mathcal{S})\) defined on a single state and where each of its elements is obtained as follows:
\begin{align}\label{lemma:link01}
	\bm{d}_{S}^{(a, n,m)}(s) = \sum_{s' \in \mathcal{S}} \bm{d}_{S}^{(a, n,m)}(s,s').
\end{align}
The successive key step is to observe that:
\begin{align}\label{lemma:link02}
	\bm{d}_{S}^{(a, n,m)}(s,s') = \bm{d}_{S}^{(a, n,m)}(s) \mathbb{T}_a(s'|s)
\end{align}
which holds for the Markovianity of the problem since the probability of the next state \(s'\) given the current state \(s\) and the action \(a\) is determined by the transition model.\\
By using both~\ref{lemma:link01} and~\ref{lemma:link02}, we obtain:
	\begin{align*}
		\frac{\bm{d}_{S^2}^{(a,n,m)}(s,s')}{\sum_{s'' \in \mathcal{S}}\bm{d}_{S^2}^{(a,n,m)}(s,s'')} = \frac{\bm{d}_{S}^{(a, n,m)}(s) \mathbb{T}_a(s'|s)}{	\bm{d}_{S}^{(a, n,m)}(s)} = \mathbb{T}_a(s'|s) \qquad \forall s,s' \in \mathcal{S}.
	\end{align*}
which completes the proof.
\end{proof}

\vspace{0.5cm}

\section{SIMULATION DETAILS}\label{appendix:simulationDetails}
This section is devoted to providing details about the numerical experiments reported in the main paper. All the reported experiments have been run using 88 Intel(R) Xeon(R) CPU E7-8880 v4 @ 2.20GHz CPUs and 94 GB of RAM.

\vspace{0.5cm}

\subsection{Generation of Transition and Observation Models}\label{appendix:generation}
The instances used in the various experiments have been generated in a random way and, in a successive step, the following modifications are applied: 
\begin{itemize}
    \item concerning each action transition matrix \(\mathbb{T}_a\), the generated ones are such that their minimum transition probability is at least \(\epsilon=1/(20S)\).
    \item for the observation model, for each pair of states and actions, we set a specific observation that will be drawn with higher probability in order to avoid having too much stochasticity in the reward distributions and ensure a diverse observation distribution among states.
\end{itemize}

\vspace{0.5cm}

\subsection{Estimation Error of Transition Matrix}
As reported in the main paper, the characteristic of the considered POMDP instances are: for the left plot \(S=5\) states, \(A=4\) actions and \(O=8\) observations, while for the plot on the right we have \(S=10\) states, \(A=4\) actions and \(O=16\) observations.\\
Instead of using belief-based policies that are optimal in the long horizon but require a planning step, we opted for policies choosing the action that maximizes the instantaneous expected reward based on the current belief state. In order to also select sub-optimal actions (since in this experiment we are not interested in the cumulated reward), we make these policies stochastic, with each action having a minimum probability \(\iota=0.15\) of being selected. Each policy updates its belief based on an internal transition model which is independent and different from the real transition model of the POMDP. After running each policy for \(10^4\) steps, we change the internal transition model used for the belief update: this will also change the distribution induced by the policy. We apply this methodology to both POMDP instances. We repeat each experiment 10 times and we report in the plot the average result for each transition matrix and a \(95\%\) confidence interval.

In order to provide a more detailed analysis of the results, we report in Figure~\ref{fig:estimationErrorWithLegend} the same plot appearing in Figure~\ref{fig:estimationError} and provide details about the characteristics of the different actions in Table~\ref{tab:EstimationError}. In particular, the table shows: (i) the minimum singular value \(\sigma_{\min}(\mathbb{O}_a)\) associated with each action observation model used in the experiment; (ii) the average number of times each action is chosen along the experiment.

By analyzing the Figure, we can observe that actions with low \(\sigma_{\min}(\mathbb{O}_a)\) typically have higher estimation error in the transition model \(\mathbb{T}_a\) since they require a larger amount of samples. This aspect can indeed be observed in the dependencies of problem parameters appearing in Lemma~\ref{lemma:combinedActionDistr}.\\
Of course, this aspect is mitigated when the number of pulls increases. For example, Action 2 on the right figure presents a low \(\sigma_{\min}(\mathbb{O}_a)\) but the high number of pulls makes the estimation error lower.

\vspace{0.5cm}

\subsection{Regret Experiments}
The experimental results on the regret have been done in the following way. First of all, we consider a POMDP instance with \(S=3\) states, \(A=4\) actions and \(O=4\) observations generated as described in Section~\ref{appendix:generation}. We run each experiment 10 times using the total horizon of approximately \(T \approx 4*10^5\) timestamps. The methodology employed for this set of experiments is similar to the one adopted in the work of~\citet{russo2024efficient}.

\begin{figure}[t]
\centering
\includegraphics[scale=1]{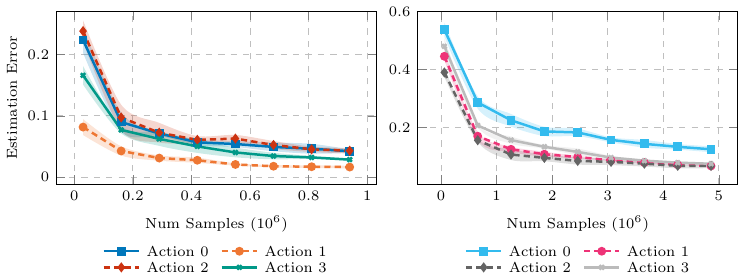}
\captionof{figure}{Error in Frobenious Norm of the Different 
  Action Transition Matrices under two POMDP Instances (10 runs, 95 \%c.i.).}
\label{fig:estimationErrorWithLegend}
\end{figure}

\begin{table}[t]
\centering
\caption{Table representing the minimum singular value of the action observation matrices  and the average (\(\pm\) std) number of pulls of the actions in the experiments of Figure~\ref{fig:estimationErrorWithLegend}.}
\begin{tabularx}{\textwidth}{|>{\centering\arraybackslash}p{3.2cm}|>{\centering\arraybackslash}X|>{\centering\arraybackslash}X|
>{\centering\arraybackslash}X|>{\centering\arraybackslash}X|}
\hline
   \parbox[c][0.5cm]{3.5cm}{} & \textbf{Action \(0\)} & \textbf{Action \(1\)} & 
 \textbf{Action \(2\)} & 
 \textbf{Action \(3\)}\\ \cline{1-5}
\end{tabularx}

\vspace{0.3cm}

\makebox[\textwidth][l]{\quad \quad \textbf{Left Figure}}
\begin{tabularx}{\textwidth}{|>{\centering\arraybackslash}p{3.2cm}|>{\centering\arraybackslash}X|>{\centering\arraybackslash}X|
>{\centering\arraybackslash}X|>{\centering\arraybackslash}X|}
\hline
    
  \parbox[c][0.8cm]{3.5cm}{\centering
 {\(\sigma_{\min}(\mathbb{O}_a)\)}} & 0.214 & 0.248 & 0.164 & 0.670 \\ 
 \cdashline{1-5}
  \parbox[c][0.8cm]{3.3cm}{\centering
 {Number of Pulls}} & \(149994.2\,(\pm 162.5)\)  & \(550202.6\,(\pm 582.2)\) & \(150093.6\,(\pm 261.9)\) & \(149709.6\,(\pm 508.7)\) \\  \cline{1-5}
\end{tabularx}

\vspace{0.3cm}

\makebox[\textwidth][l]{\quad \quad \textbf{Right Figure}}

\begin{tabularx}{\textwidth}{|>{\centering\arraybackslash}p{3.2cm}|>{\centering\arraybackslash}X|>{\centering\arraybackslash}X|
>{\centering\arraybackslash}X|>{\centering\arraybackslash}X|}
\hline
  
  \parbox[c][0.8cm]{3.5cm}{\centering
 {\(\sigma_{\min}(\mathbb{O}_a)\)}} & 0.057 & 0.155 & 0.078 & 0.126 \\ 
 \cdashline{1-5}
  \parbox[c][0.8cm]{3.3cm}{\centering
 {Number of Pulls}} & \(1009711.2\,(\pm 706.7)\)  & \(749978.4\,(\pm 341.1)\) & \(2356435.0\,(\pm 1290.5)\) & \(883875.4\,(\pm 457.8)\) \\  \cline{1-5}
\end{tabularx}

\vspace{0.1cm}
\label{tab:EstimationError}
\end{table}

In particular, the planning task is executed by discretizing the belief space. We can thus solve the Bellman Equation by adapting the Extended Value Iteration algorithm~\citep{jaksch2010near} to the discretized state space.

Inspired by similar works such as~\citet{Azizzadenesheli2016Reinforcement} and~\citet{russo2024efficient}, the theoretical bounds are replaced by smaller values. This approach is commonly employed when performing experimental comparisons in these settings and it mostly translates into a regret with bigger multiplicative constants or a result holding with smaller probability.

\vspace{0.3cm}
\paragraph{SEEU algorithm} In the experiments, the classical Spectral Decomposition approach is modified to make the comparison between the approaches more fair. Following the procedure highlighted in~\citet{russo2024efficient}, we have that:
\begin{itemize}
    \item The matrices used by the Spectral Decomposition approach are not updated based on the realizations of the observation received when choosing an action, but we directly provide the matrix with the probabilities defining the real observation distribution associated with the chosen action and the underlying state. This caveat helps the estimation of both the transition and the observation model since it removes the noise given by the realizations of the observations.
    \item The computation of the optimistic policy for the SEEU algorithm is done by providing the real observation model (together with the estimated transition model) to the Extended Value Iteration algorithm.
\end{itemize}
The parameters used for the SEEU algorithm are $\tau_1 = 8000$ and $\tau_2 = 20000$, which are used to determine the length of the exploration and the exploitation phase respectively.

\vspace{0.3cm}
\paragraph{PSRL-POMDP} In order to implement this algorithm, we opted for the particle filter approach, commonly used in the Bayesian setting. The particle filter strategy does not offer guarantees in terms of consistency but allows updating the model parameters in a tractable manner. We chose this approach since for the moment no consistent estimators for the latent variable model are present in the Bayesian setting. By recalling the parameters of the algorithm described in~\citep{jahromi2022online}, we set:
\begin{itemize}
    \item $SCHED(t_k, T_{k-1}) = t_k + T_{k-1}$ with $t_k$ representing the length of the $k$-th episode;
    \item Let us consider that $n_t(a)$ counts the number of times action $a$ has been pulled up to time $t$. By following the approaches proposed in their work, we set $\widetilde{m}_t(s,a)=n_t(a)$ with $\widetilde{m}_t(s,a)$ being an upper bound to the expected number of times the pair $(s,a)$ has been encountered up to time $t$. 
    \item We use $N=100$ particles for each experiment, while updates of the particles are triggered when the \emph{effective sample size} (ESS) associated with their weights goes below $30$.
\end{itemize} 

\vspace{0.3cm}
\paragraph{OAS-UCRL} Concerning the OAS-UCRL algorithm, we set a minimum action probability \(\iota=0.025\) for all the actions. We chose this value since higher values would have incurred into a higher regret over time. Finally, the initial length of the episode has been set to \(T_0=2500\).\\
Differently from the procedure suggested in~\citet{russo2024efficient} which considers non-overlapping pairs of consecutive elements, we adapted the approach to consider overlapping ones. This modification preserves the theoretical guarantees of the approach and merely translates to adding a multiplicative factor that accounts for the dependency of overlapping samples.

\vspace{0.3cm}
\paragraph{\emph{Action-wise} OAS-UCRL} For our approach, which does not need many parameters to be set, we provided an initial length of the episode of \(T_0=2500\), analogously as the OAS-UCRL case.

\vspace{0.5cm}

\subsection{Ablation Study on Minimum Action Probability of OAS-UCRL}

In this set of experiments, we compare the performance of the OAS-UCRL algorithm when different values for the minimum action probability \(\iota\) are used. The simulations are executed on two different POMDP instances, each one having \(S=3\) states, \(A=3\) actions, and \(O=3\) observations.

The results in terms of regret for each of the two POMDP instances are shown in Figure~\ref{fig:ablation_plot}.\\
For each instance, we run the OAS-UCRL algorithm using as minimum action probability the values \(\iota=0.01\), \(\iota=0.03\), and \(\iota=0.05\). 

As highlighted in the figure, there exists a trade-off for the value of \(\iota\). In particular, when it has higher values, the amount of exploration increases, which results in having better model estimates but may lead to higher regret since suboptimal actions are chosen more frequently. This aspect can be observed on the right plot.

Contrarily, when \(\iota\) has lower values, it may result in a lower regret since suboptimal actions are chosen less frequently, but this aspect could also lead to imprecise model estimates, which could in turn increase the suffered regret, as observed from the left plot in Figure~\ref{fig:ablation_plot}. 

In both plots, we observe the superiority of the Action OAS-UCRL algorithm, whose exploration is driven only by the optimistic approach. In addition, the ability of Action OAS-UCRL to reuse samples across episodes allows for better model estimates, which contributes to experiencing a lower regret.  

\begin{figure}[t]
\centering
\includegraphics[scale=1]{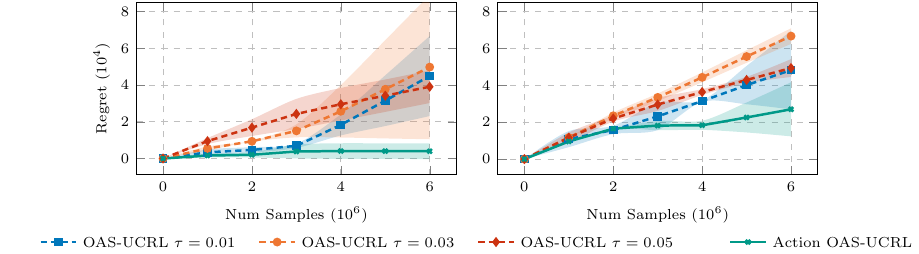}
\captionof{figure}{Regret Experiments on two Different POMDP Instances Comparing Action OAS-UCRL and OAS-UCRL with Different Values of \(\iota\) (10 runs, 95 \%c.i.).}
\label{fig:ablation_plot}
\end{figure}

\vspace{2.cm}

\subsection{Ablation Study on Sample Reuse Strategy of Action OAS-UCRL}
The objective of this set of experiments is to show the effect of reusing all samples collected during the different episodes against the case where only samples from the last episodes are used for model estimation (as done in OAS-UCRL~\citep{russo2024efficient}). 

As shown in Lemma~\ref{lemma:combinedActionDistr}, the estimation error of each transition matrix scales with the number of times the action has been pulled during the different episodes. Hence, using samples collected from all the episodes leads to lower estimation error with respect to using only samples from the last episode. 

Figure~\ref{fig:ablation_plot_noSR} presents the results in terms of regret of the standard AOAS-UCRL algorithm and a variant that only uses samples collected from the last episode. Concerning the variant, since AOAS-UCRL may not select all actions during each episode, we use uniform transition matrices for actions that are not chosen during the last episode.

The experiments in Figure~\ref{fig:ablation_plot_noSR} are run on three different POMDP instances, each one having \(S=3\) states, \(A=5\) actions and \(O=3\) observations. We observe that reusing all samples generally leads to better performances overall.

In particular, depending on the specific POMDP instance, this advantage can be more or less evident. Indeed, the two instances on the left show more evidently the benefit of reusing all samples, differently from the instance on the right which shows similar performances. It can indeed be the case that even if the estimated model presents higher error, the policies computed using the two strategies are similar and lead to comparable performances.

Another remark, more evident in the plot in the center, is that the confidence intervals for the strategy without sample reuse are larger since the estimates may vary more across the different runs since they rely on less samples.

\begin{figure}[t]
\centering
\includegraphics[scale=0.85]{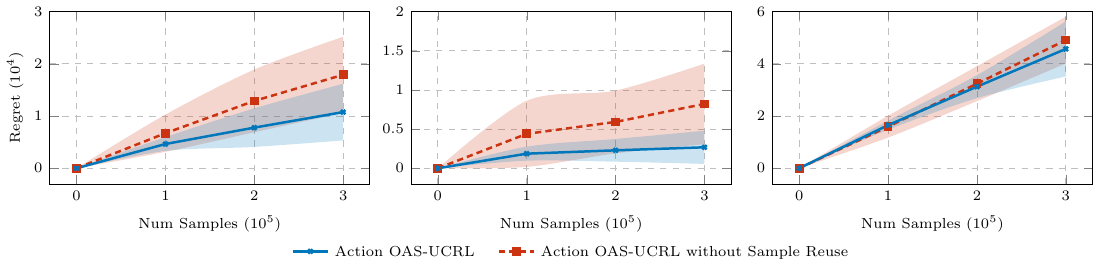}
\captionof{figure}{Regret Experiments comparing the Regret Performance of the standard AOAS-UCRL Algorithm Against the case with No Sample Reuse Across Episodes (10 runs, 95 \%c.i.).}
\label{fig:ablation_plot_noSR}
\end{figure}

\end{document}